\documentclass[11pt]{article}
\usepackage[T1]{fontenc}         
\usepackage{amsfonts}      
\usepackage{nicefrac}      
\usepackage{microtype}   
\usepackage[colorlinks,
            linkcolor=red,
            anchorcolor=blue,
            citecolor=blue, 
            pagebackref=true
           ]{hyperref}

\usepackage{fullpage}
\usepackage{tabularx}
\usepackage{float}
\usepackage{enumitem}
\usepackage{amsmath, amssymb}
\usepackage{amsthm}
\usepackage{subcaption}
\usepackage{hyperref}
\usepackage{booktabs}
\usepackage{multirow}
\usepackage{colortbl}
\usepackage{xcolor}
\usepackage{rotating}
\usepackage{longtable}
\usepackage{booktabs} %
\usepackage{graphicx}  %
\usepackage{tcolorbox}
\usepackage{listings}
\tcbuselibrary{listings,skins}

\lstdefinestyle{xmlstyle}{
  language=HTML,
  basicstyle=\ttfamily\small,
  keywordstyle=\color{blue!70!black},
  stringstyle=\color{red!70!black},
  commentstyle=\color{gray!60},
  showstringspaces=false
}

\newtheorem{theorem}{Theorem}
\newtheorem{lemma}{Lemma}
\newtheorem{assumption}{Assumption}

\usepackage{mmll}
\usepackage[colorinlistoftodos, textsize=tiny, textwidth=2.2cm, backgroundcolor=blue!10, linecolor=magenta, bordercolor=magenta]{todonotes}
\usepackage{enumitem}
\usepackage[table,xcdraw]{xcolor}
\usepackage{subcaption}
\usepackage{times}

\definecolor{second}{RGB}{135,206,250}   %
\colorlet{best}{green!80!red}

\def\algname{LoDADA }

\let\textstyle\relax
\allowdisplaybreaks

\title{ \vspace{1.5cm}
Localized Dynamics-Aware Domain Adaption for Off-Dynamics Offline Reinforcement Learning
}

\author{
    Zhangjie Xia\thanks{New York University; email: {\tt zx1357@nyu.edu}}~\footnotemark[3] 
    \qquad
    Yu Yang\thanks{ 
    Duke University; email: {\tt yu.yang@duke.edu}}~\thanks{Equal contribution} 
    \qquad
    Pan Xu\thanks{
    Duke University; email: {\tt
    pan.xu@duke.edu}}\\
}

\begin{document}

\date{\today}
\maketitle
\thispagestyle{firstpage} 

\begin{abstract}
Off-dynamics offline reinforcement learning (RL) aims to learn a policy for a target domain using limited target data and abundant source data collected under different transition dynamics. Existing methods typically address dynamics mismatch either globally over the state space or via pointwise data filtering; these approaches can miss localized cross-domain similarities or incur high computational cost. We propose Localized Dynamics-Aware Domain Adaptation (LoDADA), which exploits localized dynamics mismatch to better reuse source data. LoDADA clusters transitions from source and target datasets and estimates cluster-level dynamics discrepancy via domain discrimination. Source transitions from clusters with small discrepancy are retained, while those from clusters with large discrepancy are filtered out. This yields a fine-grained and scalable data selection strategy that avoids overly coarse global assumptions and expensive per-sample filtering. We provide theoretical insights and extensive experiments across environments with diverse global and local dynamics shifts. Results show that LoDADA consistently outperforms state-of-the-art off-dynamics offline RL methods by better leveraging localized distribution mismatch.
\end{abstract}

\section{Introduction}
The goal of reinforcement learning (RL) \citep{kaelbling1996reinforcement,sutton1998reinforcement} is to learn a policy for optimal decision making by interacting with an environment and receiving reward feedback. However, direct interaction with the environment (the \textit{target domain}) can be dangerous or time-consuming in real-world applications such as autonomous driving \citep{kiran2021deep} and healthcare \citep{coronato2020reinforcement}. As a result, practitioners often collect experience in a structurally similar \textit{source domain} and then transfer the learned policy. A key challenge is that a policy trained in the source domain may perform poorly when deployed in the target domain due to mismatched transition dynamics. This issue is usually referred to as \textit{off-dynamics RL} \citep{eysenbach2021offdynamicsreinforcementlearningtraining,he2025samplecomplexitydistributionallyrobust}. In this paper, we study a practical setting, \textit{off-dynamics offline RL} \citep{liu2022dara,wang2024returnaugmenteddecisiontransformer}, where the agent cannot interact with the environment online and must instead learn from a limited target dataset together with a larger source dataset collected under different dynamics.

There has been growing interest in off-dynamics offline RL, with prior work that can be broadly categorized into three types: (i) data augmentation methods that modify rewards in the source data using dynamics-aware regularization \citep{liu2022dara, lyu2024crossdomainpolicyadaptationcapturing,wang2024returnaugmenteddecisiontransformer,guo2024off}; (ii) data filtering approaches that select source transitions based on their similarity to target transitions \citep{xu2023crossdomainpolicyadaptationvalueguided, wen2024contrastive,lyu2025cross}; and more recently (iii) data generation methods that use source data to learn target dynamics and generate synthetic target samples \citep{guo2026mobody}. However, most methods treat source-target dynamics mismatch globally over the state-action space, which can be overly conservative in regions where the domains align and overly optimistic where they differ substantially. For example, methods such as \citet{liu2022dara} train a single domain classifier over $(s,a,s')$ tuples across the full dataset, which is challenging with limited target data. A notable exception is pointwise data filtering via optimal transport \citep{lyu2025cross}, but its computational cost scales with dataset size, limiting its practicality \cite{dvurechensky2018computational}. Therefore, existing approaches can be data-inefficient and often fail to exploit fine-grained local similarities that would enable more robust and efficient transfer from the source to the target domain.

Our proposed method,  \textbf{Lo}calized \textbf{D}ynamics-\textbf{A}ware \textbf{D}omain \textbf{A}daptation (\textbf{LoDADA}), addresses these challenges in a novel way. Unlike prior approaches which that rely on global or pointwise assumptions, \algname clusters source and target next states via K-means to identify localized regions in the state space where the two datasets are well aligned. We discard source samples that are far from cluster centroids, retaining those that better match the target domain. Within each cluster, we train a classifier to distinguish source from target samples and use its outputs to estimate a local KL divergence between source and target transition dynamics. Clusters with large estimated KL divergence, indicating inconsistency between source and target transitions, are treated as less unreliable. Instead of discarding entire clusters, \algname retains samples from all clusters to preserve state-space coverage, while prioritizing transitions from clusters with smaller estimated KL divergence. During the policy optimization stage, we further introduce a regularization term that penalizes deviations between the learned policy and a behavior policy which is fit to the target dataset, encouraging the learned policy to remain close to the target-domain's behavior.

\begin{figure}[t]
    \centering    \includegraphics[width=\linewidth]{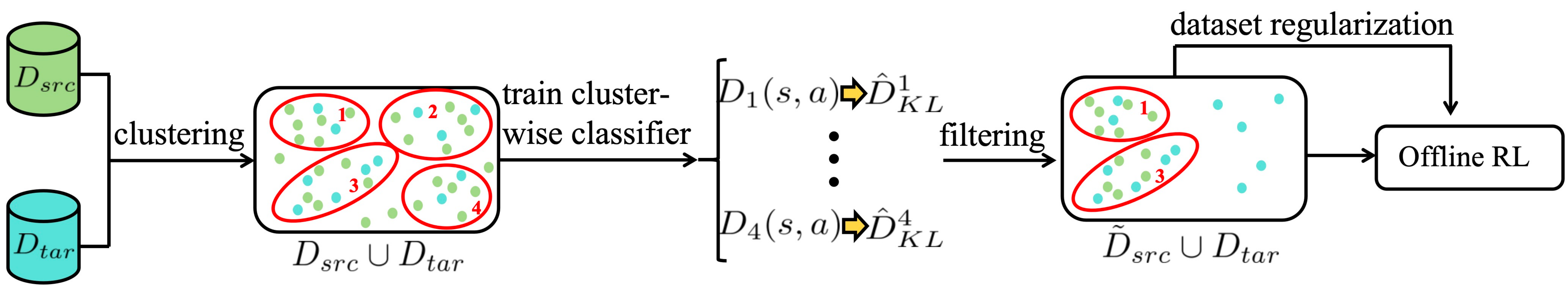}
    \caption{\textbf{An overview of our proposed framework.} We first perform K-means clustering on the mixed source and target dataset. Then we estimate the local KL divergence between source and target dynamics in each cluster. We use cluster-wise classifiers to selectively retain source domain transitions for downstream offline RL algorithms. We further introduce a dataset regularization term to ensure policy consistency with the target dataset.
    \label{fig:graph}}
\end{figure}

By selectively retaining source transitions based on cluster-level KL estimates, \algname reduces the bias induced by dynamics mismatch and enables more effective use of source data. This clustering-based framework provides a principled way to balance selective filtering and cross-domain adaptation under localized dynamics shifts. After the filtering step, we optimize the policy using Implicit Q-Learning (IQL) \citep{kostrikov2021offline} as a backbone, augmented with a regularization term that encourages the learned policy to stay close to the target behavior policy. Our main contributions are summarized as follows:
\begin{itemize}[nosep,leftmargin=*]
    \item We identify a key limitation of existing off-dynamics offline RL methods: they address dynamics mismatch either globally or pointwise, which can lead to data inefficiency and degraded performance. To address this, we propose \algname, which performs cluster-wise filtering by estimating local source-target dynamics discrepancy and retaining the most target-consistent source transitions.
    \item We provide a theoretical analysis that establishes return lower bounds for the policy learned by \algname, offering performance guarantees in the target domain under dynamics mismatch. 
    \item We empirically evaluate our method in four MuJoCo environments in the offline setting with different types and levels of off-dynamics shifts and demonstrate the effectiveness of our method with an average of $19.9\%$ improvement over baselines in gravity and friction shifts, $32.7\%$ on morphology shift and $29.3\%$ on local perturbation. We additionally demonstrate the superior performance of \algname on challenging navigation tasks in AntMaze and manipulation tasks in Adroit.
\end{itemize}

\section{Related Work}
\textbf{Domain adaptation in RL.}
Domain adaptation in RL focuses on adapting policies across domains with differing transition dynamics. Prior work include policy and value function transfer which reuse knowledge from a source domain to accelerate learning in a target domain \citep{abel2018policy,xu2023crossdomainpolicyadaptationvalueguided}, and robust MDPs which optimize policies against the worst-case dynamics within a predefined uncertainty set \citep{liu2024distributionally,liu2024upperlowerboundsdistributionally,he2025samplecomplexitydistributionallyrobust,gu2025policyregularizeddistributionallyrobust}. However, these approaches often require carefully specified uncertainty sets and may fail to cover the true target dynamics. More importantly, they tend to be overly conservative due to the rigid assumption that there is no access to target data at all. More recent work have explored online RL domain adaptation using reward augmentation from dynamics mismatch \citep{eysenbach2021offdynamicsreinforcementlearningtraining,guo2024off} and representation deviation \citep{lyu2024crossdomainpolicyadaptationcapturing}. 
In contrast, our method tackles domain adaptation in a purely offline setting through a clustering-based framework, enabling more flexible and efficient reuse of source data under substantial dynamics shifts.

\textbf{Off-dynamics offline RL.}
Off-dynamics offline RL aims to leverage an abundant source dataset together with a limited target dataset to improve policy learning in the target domain where online interactions is infeasible. Prior works approached this problem by aligning source and target distributions via importance weighting \citep{nachum2019dualdice,lee2021optidice}, reward augmentation \citep{liu2022dara}, using adversarial domain adaptation to reduce the mismatch in state-action dynamics \citep{barde2020adversarial,xu2021offline}, or filtering source data based on similarity to the target \citep{liu2020provably,wen2024contrastive,lyu2025cross}. 
Another line of work adopts model-based approaches, where a dynamics model is learned in the target domain to enable more principled reuse of source transitions under mismatched dynamics \citep{yu2020mopo,guo2026mobody}. These methods often struggle with severe distribution shifts: importance weighting suffers from high variance under limited coverage, adversarial adaptation can be unstable in practice, heuristic filtering may reduce sample efficiency, and model-based methods are sensitive to model bias. In contrast, our method balances robustness and coverage by retaining useful source experiences while downweighting unreliable clusters.

\textbf{Clustering-based methods in RL} Clustering-based methods in RL exploit structure in the state-action space by grouping similar samples into clusters, thereby reducing variance and improving generalization. Recent approaches use clustering for representation learning, such as learning latent embeddings that capture environment dynamics and then clustering them to discover reusable behaviors or intermediate objectives \citep{machado2017laplacian, eysenbach2018diversity}. These approaches often rely on heuristic similarity metrics, which may not accurately reflect the underlying transition dynamics. Our method leverages clustering not only to structure the data, but also integrates distributional alignment within clusters, ensuring that the grouped samples are both coherent and dynamically relevant to the target domain.

\textbf{Notations:} $\mathbb{H}(X)$ is the entropy of random variable $X$. $z=s\oplus a$ is the vector concatenation of state $s$ and action $a$, which can be viewed as an identity representation of $(s,a)$. 

\section{Preliminaries}
RL problems are typically formulated as a Markov Decision Process (MDP), specified by a tuple $\mathcal{M}=(\mathcal{S},\mathcal{A},P,r,\gamma,\rho_0)$, where $\mathcal{S}$ is the state space, $\mathcal{A}$ is the action space, $P(s'| s,a)$ is the transition kernel, $r(s,a)$ is the reward function given state-action pair $(s,a)$, $\gamma \in [0,1)$ is the discounting factor, and $\rho_0$ is the initial state distribution.

In the off-dynamics offline setting, we consider two MDPs: a source domain $\mathcal{M}_{\text{src}}=(\mathcal{S},\mathcal{A},P_{\text{src}},r,\gamma,\rho_0)$ and a target domain $\mathcal{M}_{\text{tar}}=(\mathcal{S},\mathcal{A},P_{\text{tar}},r,\gamma,\rho_0)$. The two domains differ only in their transition dynamics, i.e., $P_{\text{src}}(s'| s,a) \neq P_{\text{tar}}(s'| s,a)$, while sharing the same reward function $r_{\text{src}}(s,a)=r_{\text{tar}}(s,a)$. In the offline setting, the agent cannot interact with the environment and must learn from static datasets \citep{levine2020offlinereinforcementlearningtutorial}. Suppose we are given a source dataset $D_{\text{src}}=\{(s,a,s',r)_{\text{src}}\}$ and a smaller target dataset $D_{\text{tar}}=\{(s,a,s',r)_{\text{tar}}\}$. The goal is to learn a policy $\pi$ that maximizes the expected return in the target domain, $\max_\pi \mathbb{E}_{\pi,P_{\mathrm{tar}}}\big[\sum_{t=0}^\infty \gamma^t r_{\mathrm{tar}}(s_t,a_t)\big]$, by leveraging the mixed dataset $D_{\text{mix}} = D_{\text{src}} \cup D_{\text{tar}}$. Typically we have $|D_{\text{tar}}| \ll |D_{\text{src}}|$.

\section{Methodology}
In this section, we analyze the target-domain performance gap between the optimal target policy and the policy learned from a filtered source dataset. This analysis motivates a clustering-based procedure to estimate local KL divergence for filtering source transitions, which we then integrate into policy optimization. Finally, we summarize the complete algorithm for off-dynamics offline RL. %
All proofs are deferred to \Cref{app:Proof}.

\subsection{Theoretical Insights}
The goal of data filtering in the off-dynamics setting is to create a filtered source dataset $\tilde{D}_{src}$ out of the source dataset $D_{src}$ such that the dynamics discrepancy between $\tilde{D}_{src}$ and the target dataset $D_{tar}$ is minimized. Following \citet{lyu2024crossdomainpolicyadaptationcapturing}, we have the following proposition.
\begin{proposition} \label{thm:relation_btw_2_kl}
     For any $( s, a )$, denote its representation as $z$, and suppose $s_{\text {src }}^{\prime} \sim P_{\mathcal{M}_{\mathrm{src}}}(\cdot | z), s_{\mathrm{tar}}^{\prime} \sim P_{\mathcal{M}_{\mathrm{tar}}}(\cdot | z)$. Then measuring the representation deviation between the source domain and the target domain is equivalent to measuring the dynamics mismatch between two domains, i.e.,  $D_{\mathrm{KL}}\big(P(z | s_{\operatorname{src}}^{\prime}) \| P(z | s_{\mathrm{tar}}^{\prime})\big)=$ $D_{\mathrm{KL}}\big(P(s_{\mathrm{src}}^{\prime} | z) \| P(s_{\mathrm{tar}}^{\prime} | z)\big)+\mathbb{H}(s_{\mathrm{src}}^{\prime})-\mathbb{H}(s_{\mathrm{tar}}^{\prime})$.
\end{proposition}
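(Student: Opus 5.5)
The plan is to follow the argument in \citet{lyu2024crossdomainpolicyadaptationcapturing}: apply Bayes' rule inside the logarithm of the left-hand KL divergence and then split the resulting expression into terms. Throughout, I fix the representation $z$ of $(s,a)$. I interpret each KL divergence as an expectation under the source-side joint law of $(z,s'_{\mathrm{src}})$, and I compare the two conditionals evaluated at the same next-state value $s'$. Writing $p$ for densities, Bayes' rule gives
\begin{align*}
p(z\mid s'_{\mathrm{src}}) = \frac{p(s'_{\mathrm{src}}\mid z)\,p(z)}{p(s'_{\mathrm{src}})}, \qquad
p(z\mid s'_{\mathrm{tar}}) = \frac{p(s'_{\mathrm{tar}}\mid z)\,p(z)}{p(s'_{\mathrm{tar}})}.
\end{align*}
Both domains share the same marginal $p(z)$, because the same $(s,a)$ is fed to both transition kernels. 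So the $p(z)$ factors cancel in the log-ratio, which becomes
\begin{align*}
\log\frac{p(z\mid s'_{\mathrm{src}})}{p(z\mid s'_{\mathrm{tar}})} = \log\frac{p(s'_{\mathrm{src}}\mid z)}{p(s'_{\mathrm{tar}}\mid z)} - \log p(s'_{\mathrm{src}}) + \log p(s'_{\mathrm{tar}}).
\end{align*}

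Next I take expectations. The first term integrates to $D_{\mathrm{KL}}\big(P(s'_{\mathrm{src}}\mid z)\,\|\,P(s'_{\mathrm{tar}}\mid z)\big)$. The term $-\mathbb{E}[\log p(s'_{\mathrm{src}})]$ is exactly $\mathbb{H}(s'_{\mathrm{src}})$. The term $\mathbb{E}[\log p(s'_{\mathrm{tar}})]$ has to be read as $-\mathbb{H}(s'_{\mathrm{tar}})$. As in the cited work, this requires either treating the expectation of each marginal log-density under its own law, or assuming the cross-entropy equals the entropy. I will state this convention explicitly, since it is what makes the identity exact.

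The main obstacle is this bookkeeping of measures. The left-hand KL divergence is over $z$ given $s'$, the middle KL is over $s'$ given $z$, and the entropies are of marginals, so the three expectations are not literally over the same distribution. To handle this, I will define all quantities jointly on $(z,s')$ and take expectations under the source joint law, invoking the convention that $\mathbb{E}_{\mathrm{src}}[\log p(s'_{\mathrm{tar}})]=-\mathbb{H}(s'_{\mathrm{tar}})$. This convention is equivalent to adopting the same simplification as \citet{lyu2024crossdomainpolicyadaptationcapturing}, so I will note the assumption rather than derive it. Once the convention is fixed, summing the three pieces gives the claimed identity.
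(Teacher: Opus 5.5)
Your Bayes-rule step, cancelling the shared $P(z)$ inside the log-ratio, is the same algebra the paper uses. The last step, however, fails as you have set it up. Under the source joint law of $(z,s')$, with both conditionals evaluated at the same $s'$, the term $\mathbb{E}[\log p(s'_{\mathrm{tar}})]$ is a negative cross-entropy. It equals $-\mathbb{H}(s'_{\mathrm{src}}) - D_{\mathrm{KL}}\big(P(s'_{\mathrm{src}})\,\|\,P(s'_{\mathrm{tar}})\big)$. So what your computation actually proves is the KL chain rule
\[
\mathbb{E}_{s'\sim P(s'_{\mathrm{src}})}\Big[D_{\mathrm{KL}}\big(P_{\mathrm{src}}(z|s')\,\|\,P_{\mathrm{tar}}(z|s')\big)\Big]
= \mathbb{E}_{z\sim P(z)}\Big[D_{\mathrm{KL}}\big(P_{\mathrm{src}}(s'|z)\,\|\,P_{\mathrm{tar}}(s'|z)\big)\Big]
- D_{\mathrm{KL}}\big(P(s'_{\mathrm{src}})\,\|\,P(s'_{\mathrm{tar}})\big).
\]
Your ``convention'' is equivalent to assuming $D_{\mathrm{KL}}\big(P(s'_{\mathrm{src}})\,\|\,P(s'_{\mathrm{tar}})\big) = \mathbb{H}(s'_{\mathrm{tar}})-\mathbb{H}(s'_{\mathrm{src}})$, which is false in general. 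Take $z$ with a single value, $P_{\mathrm{src}}(\cdot|z)=\mathcal{N}(0,1)$ and $P_{\mathrm{tar}}(\cdot|z)=\mathcal{N}(1,1)$. Under your reading the left side of the proposition is $0$, while the right side is $1/2 + 0$. So the step is not bookkeeping: it assumes away exactly the term that separates the two sides. It is also not the simplification the paper makes.

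The missing idea is the choice of measure. The paper sets $h = I(z;s'_{\mathrm{src}}) - I(z;s'_{\mathrm{tar}})$ and expands it twice, once via $\log\frac{P(z|s')}{P(z)}$ and once via $\log\frac{P(s'|z)}{P(s')}$. It lifts every integral to the triple joint $P(z,s'_{\mathrm{src}},s'_{\mathrm{tar}})$, in which the two next states are distinct coordinates. This is the same as integrating your pointwise log-ratio identity under that single measure. Because each next-state coordinate then carries its own marginal, $-\mathbb{E}[\log P(s'_{\mathrm{src}})]=\mathbb{H}(s'_{\mathrm{src}})$ and $\mathbb{E}[\log P(s'_{\mathrm{tar}})]=-\mathbb{H}(s'_{\mathrm{tar}})$ hold exactly, with no convention. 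Both KL symbols then denote expectations of the corresponding log-ratios under this joint, which is how the paper reads them.

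You do mention ``treating the expectation of each marginal log-density under its own law.'' That is only legitimate if the left side and every summand are integrated under the same coupling, not if you switch measures term by term.

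The trade-off is worth knowing. The paper's reading makes the identity exact, but its KL terms are not standard conditional KL divergences; in the Gaussian example above, its dynamics term is $-\mathbb{H}(\mathcal{N}(0,1))+\mathbb{H}(\mathcal{N}(1,1))=0$. Your reading keeps standard KLs, but it yields the chain rule above rather than the stated identity.
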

The difference between \Cref{thm:relation_btw_2_kl} and Theorem 4.3 of \citet{lyu2024crossdomainpolicyadaptationcapturing} lies in the direction of the KL divergence. We derive the KL divergence from the source representation distribution to the target representation distribution (rather than the reverse). This choice matches our goal of characterizing how far the policy learned from filtered source data can be from the target-optimal policy (\Cref{thm:offline_perf_bound}). Because KL divergence is asymmetric, we reproduce the proof in \Cref{app:Proof} for completeness.

Since the entropy terms are constants for fixed source/target domains, minimizing dynamics deviation is equivalent to minimizing representation deviation. Concretely, let $\mathbb{H}(s_{\mathrm{src}}^{\prime})-\mathbb{H}(s_{\mathrm{tar}}^{\prime})=B$ be a constant. If the representation deviation is bounded by $\epsilon$, i.e., $D_{\mathrm{KL}}(P(z | s_{\operatorname{src}}^{\prime}) \| P(z | s_{\mathrm{tar}}^{\prime})) \leq \epsilon$, then the dynamics deviation $D_{\mathrm{KL}}(P(s_{\mathrm{src}}^{\prime} | z) \| P(s_{\mathrm{tar}}^{\prime} | z))$ is bounded by $\epsilon-B$.

The following assumption from \citet{eysenbach2021offdynamicsreinforcementlearningtraining} quantifies the performance gap that the optimal target policy $\pi^*$ may suffer when evaluated under source dynamics.
\begin{assumption}\label{assum: no_exploit}
Let $\pi^*=\arg \max _\pi \mathbb{E}_p\big[\sum \gamma ^t r(s_t, a_t)\big]$ be the reward-maximizing policy in the target domain. Then the expected reward in source and target domains differs by at most $2 R_{\max} \sqrt{1-e^{B-\epsilon}}$:
\begin{align}
&\bigg|
\mathbb{E}_{\pi^*, P_{\text{src}}}\Big[\sum_t \gamma^t r(s_t, a_t)\Big]
-
\mathbb{E}_{\pi^*, P_{\text{tar}}}\Big[\sum_t \gamma^t r(s_t, a_t)\Big]
\bigg|
\notag \\
&\leq 2 R_{\max} \sqrt{1 - e^{B - \epsilon}} ,\notag
\end{align}
where $B=\mathbb{H}(s_{\text {src }}^{\prime})-\mathbb{H}(s_{\text {tar}}^{\prime})$.
\end{assumption}

Now we present an upper bound for the performance gap in the target domain between the policy learned from the filtered source dataset and the optimal target policy.
\begin{theorem}[Offline performance bound]\label{thm:offline_perf_bound}
 Let $\hat{\pi}_{\tilde{D}_{src}}^*$ be the policy that maximizes the reward in the \textbf{filtered} source dataset $\tilde{D}_{src}$,  $\pi_{D_{tar}}^*$ be the policy that maximizes the reward in the \textbf{unfiltered} target dataset $D_{tar}$, and $\pi^*$ be the policy that maximizes the reward in the target domain. Assume that $\pi^*$ satisfies \Cref{assum: no_exploit} and let $C$ be the offline RL performance bound without dynamics shift, i.e., $\mathbb{E}_{P_{\text{tar}}, \pi^*} \Big[\sum \gamma^tr(s_t, a_t)\Big]-\mathbb{E}_{P_{\text{tar}}, \pi_{D_{tar}}^*}\Big[\sum \gamma^tr(s_t, a_t)\Big] \leq C$. Then $\hat{\pi}_{\tilde{D}_{src}}^*$ receives near-optimal reward on the target domain:
\begin{align}
\mathbb{E}_{P_{\text{tar}}, \hat{\pi}_{\tilde{D}_{src}}^*}
\Big[\sum_t \gamma^t r(s_t, a_t)\Big]
&\geq
\mathbb{E}_{P_{\text{tar}}, \pi^*}
\Big[\sum_t \gamma^t r(s_t, a_t)\Big]
-
\underbrace{C}_{\text{offline algorithm error}}
-
\frac{4 R_{\max}}{1-\gamma}
\underbrace{\sqrt{1-e^{B-\epsilon}}}_{\text{dynamics deviation}}
\notag\\
&\quad
-
\frac{2 R_{\max}}{1-\gamma}
\underbrace{
\mathbb{E}_{\rho_{\mathcal{M}_{\text{src}}}^{\hat{\pi}_{\tilde{D}_{src}}^*},\, P_{\text{src}}}
\Big[
D_{\mathrm{TV}}\big(
\hat{\pi}_{\tilde{D}_{src}}^*(\cdot|s')
\| \pi_{D_{tar}}^*(\cdot|s')
\big)
\Big]
}_{\text{filtering deviation}}
.\notag
\end{align}
\end{theorem}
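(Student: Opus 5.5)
We write $\eta_P(\pi)=\mathbb{E}_{P,\pi}[\sum_t\gamma^t r(s_t,a_t)]$ and abbreviate $\hat\pi=\hat{\pi}_{\tilde{D}_{src}}^*$ and $\pi_T=\pi_{D_{tar}}^*$. The plan is a telescoping decomposition of $\eta_{P_{\text{tar}}}(\pi^*)-\eta_{P_{\text{tar}}}(\hat\pi)$ into three pieces, each matched to one term of the bound:
\begin{align}
\eta_{P_{\text{tar}}}(\pi^*)-\eta_{P_{\text{tar}}}(\hat\pi)
&=\big[\eta_{P_{\text{tar}}}(\pi^*)-\eta_{P_{\text{tar}}}(\pi_T)\big]
+\big[\eta_{P_{\text{tar}}}(\pi_T)-\eta_{P_{\text{src}}}(\pi_T)\big]\notag\\
&\quad+\big[\eta_{P_{\text{src}}}(\pi_T)-\eta_{P_{\text{src}}}(\hat\pi)\big]
+\big[\eta_{P_{\text{src}}}(\hat\pi)-\eta_{P_{\text{tar}}}(\hat\pi)\big].\notag
\end{align}
The first bracket is at most $C$ by hypothesis. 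The second and fourth brackets are dynamics-shift terms for a fixed policy. The third bracket compares two policies under the same (source) dynamics; it should produce the filtering-deviation term.

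For the dynamics-shift brackets I would use the standard simulation-lemma bound. For a fixed policy $\pi$ one has $|\eta_{P_{\text{src}}}(\pi)-\eta_{P_{\text{tar}}}(\pi)|\le \frac{2R_{\max}}{1-\gamma}\,\mathbb{E}\big[D_{\mathrm{TV}}(P_{\text{src}}(\cdot|z)\|P_{\text{tar}}(\cdot|z))\big]$. Pinsker's inequality alone gives only $\sqrt{\mathrm{KL}/2}$. To get the stated form, I would instead use the Bretagnolle–Huber-type bound $D_{\mathrm{TV}}\le\sqrt{1-e^{-\mathrm{KL}}}$, as in \citet{eysenbach2021offdynamicsreinforcementlearningtraining}. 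By \Cref{thm:relation_btw_2_kl} and the discussion after it, the dynamics KL is at most $\epsilon-B$, so each bracket is at most $\frac{2R_{\max}}{1-\gamma}\sqrt{1-e^{B-\epsilon}}$. Two such brackets give the $\frac{4R_{\max}}{1-\gamma}$ coefficient.

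This requires the KL bound to hold on the state-action pairs visited by $\pi_T$ and $\hat\pi$, which I would read as a uniform assumption on the filtered data. \Cref{assum: no_exploit} covers the analogous fact for $\pi^*$ (without the $1/(1-\gamma)$ factor), but the decomposition only needs the shift bound for $\pi_T$ and $\hat\pi$, so \Cref{assum: no_exploit} is not used directly. Alternatively, one could route through $\pi^*$ and invoke \Cref{assum: no_exploit}. The constants work either way, since $2R_{\max}\sqrt{\cdot}\le\frac{2R_{\max}}{1-\gamma}\sqrt{\cdot}$.

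The third bracket is the step I expect to be the main obstacle, because the bound must be taken under the state distribution $\rho^{\hat\pi}_{\mathcal{M}_{\text{src}}}$ of $\hat\pi$ itself. I would apply the performance-difference/policy-coupling lemma under fixed dynamics $P_{\text{src}}$. Coupling the two policies step by step, the returns differ by at most $\frac{2R_{\max}}{1-\gamma}\,\mathbb{E}_{s\sim\rho^{\hat\pi}_{\mathcal{M}_{\text{src}}}}[D_{\mathrm{TV}}(\hat\pi(\cdot|s)\|\pi_T(\cdot|s))]$. Here the expectation over $s'\sim P_{\text{src}}$ in the statement is read as the next-state marginal of that same discounted occupancy.

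The delicate point is the constant. A naive coupling gives a $1/(1-\gamma)^2$ dependence, so I would use the normalized occupancy $\rho=(1-\gamma)\sum_t\gamma^t\Pr(s_t=\cdot)$ to bring it down to $\frac{2R_{\max}}{1-\gamma}$. Summing the four brackets and rearranging yields the claimed lower bound.
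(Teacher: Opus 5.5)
Your decomposition is sound, and everywhere except the filtering term it is a cleaner route than the paper's.

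The paper inserts an auxiliary policy $\pi^*_{\text{src}}$ and splits the gap into four terms:
(a) $\eta_{P_{\text{tar}}}(\pi^*)-\eta_{P_{\text{src}}}(\pi^*)$, bounded via Assumption~1;
(b) $\eta_{P_{\text{src}}}(\pi^*)-\eta_{P_{\text{src}}}(\pi^*_{\text{src}})$, set to zero by an informal argument;
(c) $\eta_{P_{\text{src}}}(\pi^*_{\text{src}})-\eta_{P_{\text{tar}}}(\pi_T)$, bounded by $C+2R_{\max}\sqrt{1-e^{B-\epsilon}}$ via Lemma~1, optimality of $\pi^*$ and the definition of $C$;
(d) a remainder it splits into $I_1,I_2,I_3$, which are exactly your last three brackets.

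Since (a)+(b)+(c) telescopes to your first bracket, you need neither Assumption~1, nor $\pi^*_{\text{src}}$, nor the claim (b)$=0$. The paper pays $C+4R_{\max}\sqrt{1-e^{B-\epsilon}}$ for (a)--(c). It keeps the factor $\gamma$ from the telescoping lemma, bounding $I_1$ and $I_3$ by $\frac{2\gamma R_{\max}}{1-\gamma}\sqrt{1-e^{B-\epsilon}}$ each. Because $4R_{\max}+\frac{4\gamma R_{\max}}{1-\gamma}=\frac{4R_{\max}}{1-\gamma}$, it lands on the same coefficient you reach by dropping $\gamma$. If you keep $\gamma$, your route gives the strictly sharper $\frac{4\gamma R_{\max}}{1-\gamma}$.

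Both routes need the KL bound on the pairs that $\pi_T$ and $\hat\pi$ visit under $P_{\text{src}}$; the paper uses this silently in $I_1$. Because $x\mapsto\sqrt{1-e^{-x}}$ is concave, Jensen shows that a bound on the occupancy-averaged KL already suffices.

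The step that does not go through is your final identification of the two marginals. The performance-difference lemma, like your coupling, bounds the third bracket by $\frac{2R_{\max}}{1-\gamma}\mathbb{E}_{s\sim d^{\hat\pi}}[D_{\mathrm{TV}}(\hat\pi(\cdot|s)\|\pi_T(\cdot|s))]$. Here $d^{\hat\pi}$ is the \emph{current}-state marginal of $\rho^{\hat\pi}_{\mathcal{M}_{\text{src}}}$. The statement instead draws $s'\sim P_{\text{src}}(\cdot|s,a)$ after $(s,a)\sim\rho^{\hat\pi}_{\mathcal{M}_{\text{src}}}$, i.e.\ it uses the \emph{next}-state marginal $\mu^{\hat\pi}$. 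These differ: $d^{\hat\pi}=(1-\gamma)\rho_0+\gamma\mu^{\hat\pi}$. What you actually prove is the filtering term $2R_{\max}\mathbb{E}_{s\sim\rho_0}[D_{\mathrm{TV}}]+\frac{2\gamma R_{\max}}{1-\gamma}\mathbb{E}_{s'\sim\mu^{\hat\pi}}[D_{\mathrm{TV}}]$, and the $\rho_0$ part is not controlled by the $\mu^{\hat\pi}$ part.

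The paper reaches the next-state form only through its Lemma~3, and that identity loses the $t=0$ term. Consider this example:
let every action at the initial state $s_0$ lead to an absorbing $s_1$;
let the reward depend on the action only at $s_0$;
let the two policies differ only at $s_0$, choosing actions with different rewards.
Every sampled $s'$ is then $s_1$, so the right side of Lemma~3 and the filtering-deviation term both vanish, while the return gap does not. Take $P_{\text{src}}=P_{\text{tar}}$ and $\pi_T=\pi^*$, so $C=0$ and $\epsilon=B=0$ are admissible. The displayed inequality would then force $\eta_{P_{\text{tar}}}(\hat\pi)\ge\eta_{P_{\text{tar}}}(\pi^*)$, which fails. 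So do not equate the two marginals. State the result with $s\sim d^{\hat\pi}$: that is what your argument proves, and it is what is actually true.

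Finally, normalizing the occupancy is not what turns $(1-\gamma)^{-2}$ into $(1-\gamma)^{-1}$. Under the paper's convention in Lemma~1, $R_{\max}$ bounds the return of any trajectory, so $|V|,|Q|\le R_{\max}$, and both the simulation bound and the performance-difference bound come out with a single $1/(1-\gamma)$ directly. With a per-step reward bound, all three non-$C$ brackets would carry $(1-\gamma)^{-2}$. State which convention you use.
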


\begin{remark}
Compared with the performance bounds from \citet{wen2024contrastive} and \citet{lyu2025cross}, our theorem offers two key advantages: (1) \emph{Guarantee against the true optimal target policy:} The bound compares the learned policy’s return in the target domain directly to the optimal target policy, ensuring the guarantee is stated against what is best achievable in the target environment. (2) \emph{Avoids reliance on the source behavior policy:} Prior works often compare against the source behavior policy, implicitly favoring imitation of the source dataset. In contrast, our bound measures policy deviation against the policy trained on the target dataset, which better reflects the objective of filtering.
\end{remark}

\subsection{Clustering-Based KL Estimation}
\Cref{thm:offline_perf_bound} motivates a clustering-based filtering strategy. In particular, the bound suggests that effective filtering should jointly reduce the \emph{dynamics deviation} and the \emph{filtering deviation}. Thus, to reduce the policy discrepancy between $\hat{\pi}_{\tilde{D}_{\mathrm{src}}}^*$ and $\pi_{D_{\mathrm{tar}}}^*$, the filtered source dataset $\tilde{D}_{\mathrm{src}}$ should match the target dataset $D_{\mathrm{tar}}$ as measured by the KL divergence of the dynamics mismatch.

We can equivalently view dynamics mismatch as a discrepancy at the representation level. Concretely, we propose a clustering-based estimator for the representation divergence $D_{\mathrm{KL}}\big( P(z | s'_{\mathrm{src}}) \| P(z | s'_{\mathrm{tar}}) \big)$. The key idea is to cluster similar next states so that, within each cluster, differences between source and target state-action pairs primarily reflect \emph{localized} dynamics mismatch.

Given two datasets $D_{\mathrm{src}} = \{ (s_i, a_i, s'_i) \}_{i=1}^{N_{\mathrm{src}}}$ and $D_{\mathrm{tar}} = \{ (s_j, a_j, s'_j) \}_{j=1}^{N_{\mathrm{tar}}}$, we first merge them and run K-means on the next states $s'$. This yields $K$ disjoint clusters $\{\mathcal{N}^n\}_{n=1}^K$, where each cluster $\mathcal{N}^n$ contains source and target transitions with similar next states. We then pair each $s'$ in $\mathcal{N}^n$ with its corresponding $(s,a)$, forming $\mathcal{N}^n=\{(z_i,s_i')\}_{i=1}^{N_0^n+N_1^n}$ with $z_i=s_i \oplus a_i$, where $N_0^n$ and $N_1^n$ are the numbers of source and target samples in $\mathcal{N}^n$, respectively. We assign binary labels within each cluster: target samples $z_i^{\mathrm{tar}}$ are labeled as class 1 and source samples $z_i^{\mathrm{src}}$ as class 0. The goal is to train a cluster-specific classifier $D_n(z) = P(y=1 |z)$ using the following cross-entropy loss
\begin{align}
L &= -  \mathbb{E}_{z \sim P(z |s'_{\mathrm{tar}})}[\log D_n(z)] -\mathbb{E}_{z \sim P(z |s'_{\mathrm{src}})}[\log (1 - D_n(z))].
\label{eq:BCE_loss}
\end{align}
By Bayes' rule, the classifier output satisfies
{\small
\begin{align}\label{classifier}
D_n(z) = \frac{P(z |y = 1)  P(y = 1)}{P(z |y = 1)  P(y = 1) + P(z |y = 0)  P(y = 0)}.
\end{align}}%
Within each cluster, the next states $s'$ are similar by construction. We therefore assume that the representation $z$ primarily reflects source domain and target domain label information, i.e., $P(z |y = 1) \approx P(z |s'_{\text{tar}}) $ and $ P(z | y = 0) \approx P(z | s'_{\text{src}}).$
Define a density ratio as
$w(z) = {P(z | s'_{\mathrm{tar}})}/{P(z | s'_{\mathrm{src}})}$.
Plug this approximation into \Cref{classifier} and we get
{\small
\begin{align*}
D_n(z) = \frac{A w_n(z)}{A w_n(z) + (1 - A)} \implies \hat{w}_n(z) = \frac{(1-A) D_n(z)}{A (1 - D_n(z))},
\end{align*}}%
where $A = P(y = 1) = N_1^n/{(N_1^n + N_0^n)}$ and $1 -A = P(y = 0) = N_0^n/(N_1^n + N_0^n)$. Since the KL divergence can be written as
$ D_{\mathrm{KL}}( P(z | s'_{\mathrm{src}}) \| P(z | s'_{\mathrm{tar}}) )= \mathbb{E}_{z \sim P(z | s'_{\mathrm{src}})} [\log 1/{w(z)}]$, we can estimate it using source samples:
\begin{align}
         \hat{D}^n_{\mathrm{KL}} = \frac{1}{N_0^n} \sum_{i=1}^{N_0^n} \log \frac{1}{\hat{w}_n(z_i^{\mathrm{src}})},
\label{eq:KL_estimate}
\end{align}
where $ \hat{D}^n_{\mathrm{KL}}$ is a \textit{cluster}  estimate of the KL divergence using cluster-specific classifier and $\log {1}/{\hat{w}_n(z_i^{\mathrm{src}})}=:d_i$
is a \textit{point} estimate of KL divergence. 

\subsection{Policy Optimization}
We now describe the policy optimization module of our approach. In offline RL, a key challenge is preventing out-of-distribution (OOD) actions at deployment time. Advantage-weighted methods \citep{peng2019advantage} mitigate this issue by weighting behavior cloning with estimated advantages, encouraging the policy to imitate dataset actions with higher values. In off-dynamics settings, however, naively imitating the mixed dataset, especially source transitions collected under mismatched dynamics, can degrade target-domain performance. To address this, we (i) filter the source dataset using our KL estimates and (ii) use the (normalized) pointwise KL estimates to adaptively reweight learning. Intuitively, source transitions with smaller estimated KL divergence are more consistent with the target dynamics and therefore should have higher influence during optimization. We use the following value-function objective:
\begin{align}  \label{eq:Q_opt}
\cL_Q = \EE_{D_{tar}}\big[\big(Q_\theta - \mathcal{T}Q_\theta\big)^2\big]  + \mathbb{E}_{(s,a,s',d) \sim \tilde{D}_{src}}\big[\exp\big(-\alpha \cdot \hat{d_i}\big)\big(Q_\theta - \mathcal{T}Q_\theta\big)^2\big],
\end{align}
where $\alpha$ is a hyperparameter and $\hat{d_i}=\frac{d_i-\max_i d_i}{\max_i d_i-\min_i d_i}$ for $i \in \{1,\dots,|\tilde{D}_{src}|\}$ is the normalized pointwise KL estimate.
After filtering, the retained source samples are expected to be more target-consistent than the discarded ones. Since the target dataset is small (and can be noisy), assigning slightly higher aggregate weight to the filtered source data can improve data efficiency by leveraging abundant high-quality transitions.

In addition to KL-based reweighting, we introduce a policy-regularization term that encourages the learned policy to stay close to the support of the target dataset. Following \citet{lyu2025cross}, we train a conditional variational autoencoder (CVAE), denoted by $\hat{\pi}^{\text{tar}}_b(a | s)$, to model the target behavior policy. The policy is then optimized according to
\begin{align} 
\mathcal{L}^{\text{reg}}_{\pi}=\mathcal{L}_{\pi}-\lambda\mathbb{E}_{s \sim \tilde{D}_{\text{src}} \cup D_{\text{tar}}}\big[\log\hat{\pi}^{\text{tar}}_b\big(\pi(\cdot | s)| s\big)\big],
\label{eq:Pi_opt}
\end{align}
where $\mathcal{L}_{\pi}$ is the policy loss of the underlying offline RL algorithm, and $\lambda$ controls the strength of policy regularization.
\begin{algorithm}[htbp]
\caption{Localized Dynamics-Aware Domain Adaptation (LoDADA)
\label{alg:\algname}
}
\begin{algorithmic}[1]
\STATE \textbf{Input:} Source data $D_{src}$, target data $D_{tar}$
\STATE \textbf{Initialize:} Policy $\pi_{\phi}$, value function $Q_{\theta}$, batch size $B$, number of clusters $K$, diameter threshold $\delta$, critic importance ratio $\alpha$, regularization weight $\lambda$, filtering ratio $\xi_1,\xi_2,\xi_3$
\STATE Localize $D_{tar} \cup D_{src}$ into $\{\mathcal{N}^{i}\}_{i=1}^K$ based on $s'$
\FOR{i=1, 2..., K}
    \STATE Train classifiers $D_n(z)$ based on \Cref{eq:BCE_loss}
    \STATE Calculate KL divergence $\hat{D}_{KL}^n$ with \Cref{eq:KL_estimate}
\ENDFOR
\STATE Sort $\{\mathcal{N}^{i}\}_{i=1}^K$ based on $\hat{D}_{KL}^n$ and filter selectively per cluster to get $\tilde{D}_{src}$
\FOR{\texttt{each gradient step}}
    \STATE Sample a mini-batch $b_{\text{src}} := \{(s, a, r, s')\}$ of size $N/2$ from $\tilde{D}_{src}$
    \STATE Sample a mini-batch $b_{\text{tar}} := \{(s, a, r, s')\}$ of size $N/2$ from $D_{tar}$
    \STATE Optimize the value function $Q_{\theta}$ with \Cref{eq:Q_opt}
    \STATE Optimize the policy $\pi_{\phi}$ with \Cref{eq:Pi_opt}
\ENDFOR
\end{algorithmic}
\end{algorithm}

Formally, we layout the framework of \algname in \Cref{fig:graph} and display the algorithm in \Cref{alg:\algname}. The implementation and technical details are deferred to \Cref{app:\algname}. We use IQL \citep{kostrikov2021offline} as the backbone.

\section{Experiment}
\label{sec:experiment}

In this section, we conduct extensive experiments to evaluate the performance of \algname on the ODRL benchmark for off-dynamics reinforcement learning \citep{lyu2024odrlbenchmarkoffdynamicsreinforcement}. We begin with MuJoCo locomotion tasks (HalfCheetah, Ant, Walker2d, and Hopper) under multiple types and levels of dynamics shifts. In addition, we evaluate \algname on navigation tasks in AntMaze and manipulation tasks in Adroit to demonstrate effectiveness beyond standard locomotion benchmarks. Finally, we conduct ablation studies to better understand the contributions of individual components and the sensitivity to key hyperparameters. We also conduct experiments to compare the runtime complexity if \algname with that of other data filtering strategies to demonstrate the computational efficiency of our algorithm. %
Results on Androit tasks and runtime complexity comparisons are deferred to \Cref{sec:experiemnts_androit,sec:experiments_runtime_comparison}.

\begin{table*}[t]
\centering
\scriptsize
\setlength{\tabcolsep}{4pt}
\renewcommand{\arraystretch}{0.9}
\caption{Performance comparison on MuJoCo tasks (HalfCheetah, Ant, Walker2D,
Hopper) under a medium-level offline dataset with dynamics shifts in gravity and friction at levels
0.1, 0.5, 2.0, 5.0. Source domains remain unchanged; target domains are shifted. We report normalized target-domain scores (mean ± std over five seeds). Best and second-best scores are highlighted in \textcolor{best}{green} and \textcolor{second}{blue}, respectively.
\label{tab:main_results_1}}
\begin{tabular}{cccccccc}
\toprule
Env & Shift Level & BOSA & IQL & DARA & IGDF  & OTDF & Ours\\ 
\midrule
\multirow{4}{*}{HalfCheetah Gravity}
 & 0.1 & 9.31$\pm$1.94  & \cellcolor{second}12.90$\pm$1.01 & 9.62$\pm$4.27 & 7.65$\pm$6.63  & 12.17$\pm$3.58 & \cellcolor{best}\textbf{14.09$\pm$3.83} \\%& 21.12$\pm$6.45 \\
 & 0.5 & 39.92 $\pm$5.63  & \cellcolor{second}43.68$\pm$2.96 & 40.87$\pm$5.46 & 39.73$\pm$4.62  & 37.12$\pm$4.88 &  \cellcolor{best}\textbf{45.00$\pm$2.95} \\ %
 & 2.0 & 28.47 $\pm$1.23 &  \cellcolor{best}\textbf{31.12$\pm$0.3} & \cellcolor{second}30.49$\pm$0.54 & 30.16$\pm$1.5 & 25.38$\pm$3.12  & 28.02$\pm$1.67 \\ %
 & 5.0 & 11.27$\pm$1.77 & 20.87$\pm$21.45 & \cellcolor{second}32.42$\pm$18.54 & 30.08$\pm$34.02 & 24.01$\pm$17.88   & \cellcolor{best}\textbf{71.11$\pm$1.63} \\ %
\midrule
\multirow{4}{*}{HalfCheetah Friction}
 & 0.1 & 12.53$\pm$3.61  & \cellcolor{second}23.69$\pm$16.46 & \cellcolor{best}\textbf{26.39$\pm$11.35} & 20.56$\pm$11.22  & 10.31$\pm$1.68 & 9.97$\pm$0.47 \\%& 11.07$\pm$2.86\\
 & 0.5 & 63.65$\pm$9.14 & 68.26$\pm$1.42 & \cellcolor{second}68.41$\pm$3.19 & 66.30$\pm$1.79 & 66.32$\pm$0.46  & \cellcolor{best}\textbf{68.51$\pm$0.44} \\ %
 & 2.0 & 45.53$\pm$2.01 & 42.68$\pm$2.64 & 46.17$\pm$1.72 & \cellcolor{second}46.22$\pm$1.27 & 45.84$\pm$0.34  & \cellcolor{best}\textbf{48.19$\pm$0.75} \\ %
 & 5.0 & 38.26$\pm$10.82 & 48.97$\pm$4.68 & 45.31$\pm$9.83 & \cellcolor{best}\textbf{54.75$\pm$5.6} & 38.36$\pm$9.82  & \cellcolor{second}54.65$\pm$1.56 \\ %
\midrule
\multirow{4}{*}{Ant Gravity}
 & 0.1 & \cellcolor{best}\textbf{25.58$\pm$2.21}  & 11.03$\pm$1.24 & 12.53$\pm$1.11 & 13.55$\pm$2.13  & 18.82$\pm$5.51 & \cellcolor{second}22.38$\pm$1.28 \\%18.28$\pm$3.66 & \\
 & 0.5 & \cellcolor{second}17.75$\pm$2.48 & 10.53$\pm$1.06 & 9.08$\pm$0.88 & 11.34$\pm$1.89 & 16.77$\pm$3.13  & \cellcolor{best}\textbf{18.61$\pm$2.79} \\ %
 & 2.0 & 37.17$\pm$0.96 & 38.26$\pm$3.09 & 38.54$\pm$2.19 & \cellcolor{second}41.59$\pm$4.63 & 37.11$\pm$0.74 & \cellcolor{best}\textbf{45.82$\pm$1.33} \\ %
 & 5.0 & 33.48$\pm$1.70 & 31.51$\pm$1.97 & 32.51$\pm$1.71 & 34.44$\pm$2.77 & \cellcolor{second}34.58$\pm$0.25 & \cellcolor{best}\textbf{48.29$\pm$3.68} \\ %
\midrule
\multirow{4}{*}{Ant Friction}
 & 0.1 & \cellcolor{best}\textbf{58.95$\pm$0.71}  & 55.12$\pm$0.24 & \cellcolor{second}55.56$\pm$0.46 & 54.88$\pm$0.60  & 53.38$\pm$0.42 & 53.76$\pm$1.37 \\
 & 0.5 & \cellcolor{best}\textbf{59.72$\pm$3.57} & 58.92$\pm$0.80 & \cellcolor{second}59.28$\pm$0.80 & 55.64$\pm$5.03 & 58.20$\pm$0.84 & 55.54$\pm$3.80 \\ %
 & 2.0 & 20.18$\pm$3.79 & 17.54$\pm$2.47 & 19.84$\pm$3.20 & 19.63$\pm$4.15 & \cellcolor{second}32.33$\pm$3.45 & \cellcolor{best}\textbf{62.27$\pm$2.23} \\ %
 & 5.0 & 9.88$\pm$1.17 & 7.79$\pm$0.31 & 7.78$\pm$0.26 & \cellcolor{second}13.97$\pm$7.68  & 13.17$\pm$8.28 & \cellcolor{best}\textbf{28.62$\pm$3.43} \\ %
\midrule
\multirow{4}{*}{Walker2d Gravity}
 & 0.1 & 18.75$\pm$12.02  & 20.12$\pm$5.74 & 16.04$\pm$7.60 & 24.95$\pm$16.19  & \cellcolor{second}63.77$\pm$6.43 & \cellcolor{best}\textbf{71.41$\pm$1.99} \\%& 29.35$\pm$7.33\\
 & 0.5 & 40.09$\pm$2.37 & 29.72$\pm$16.02 & \cellcolor{second}42.05$\pm$10.52 & 37.36$\pm$5.49 & 32.47$\pm$7.04 & \cellcolor{best}\textbf{44.65$\pm$1.49} \\ %
 & 2.0 & 8.91$\pm$2.28 & 32.20$\pm$1.05 & 25.69$\pm$10.67 & 34.89$\pm$6.73 & \cellcolor{second}37.23$\pm$2.16  & \cellcolor{best}\textbf{40.27$\pm$1.77} \\ %
 & 5.0 & 5.25$\pm$0.50 & 5.44$\pm$0.08 & 5.42$\pm$0.29 & \cellcolor{second}5.66$\pm$0.32 & 4.92$\pm$0.08 & \cellcolor{best}\textbf{8.55$\pm$3.85} \\ %
\midrule
\multirow{4}{*}{Walker2d Friction}
 & 0.1 & 7.88$\pm$1.88  & 5.65$\pm$0.06 & 5.72$\pm$0.23 & 7.21$\pm$1.80  & \cellcolor{best}\textbf{30.40$\pm$7.54} & \cellcolor{second}20.21$\pm$7.90 \\%& 13.62$\pm$2.53\\
 & 0.5 & 53.42$\pm$20.06 & 67.91$\pm$7.03 & 68.92$\pm$11.43 & \cellcolor{second}70.83$\pm$4.22 & 69.06$\pm$9.44  & \cellcolor{best}\textbf{81.41$\pm$6.47} \\ %
 & 2.0 & 39.06$\pm$17.36 & \cellcolor{best}\textbf{72.91$\pm$0.37} & \cellcolor{second}65.40$\pm$7.13 & 62.55$\pm$10.36 & 59.93$\pm$14.53 & 62.01$\pm$4.36 \\ %
 & 5.0 & 5.89$\pm$3.42 & 5.38$\pm$0.09 & 5.28$\pm$0.19 & 5.21$\pm$0.36 & \cellcolor{second}14.98$\pm$9.15 & \cellcolor{best}\textbf{15.93$\pm$3.09} \\ %
\midrule
\multirow{4}{*}{Hopper Gravity}
 & 0.1 & 27.82$\pm$13.41  & 23.4$\pm$11.62 & 13.10$\pm$0.98 & 13.89$\pm$4.70  & \cellcolor{second}33.41$\pm$1.92 & \cellcolor{best}\textbf{37.68$\pm$6.32} \\
 & 0.5 & 17.74$\pm$7.93 & 17.78$\pm$6.85 & 14.68$\pm$5.24 & 19.81$\pm$9.56 &  \cellcolor{second}35.22$\pm$5.94 & \cellcolor{best}\textbf{43.72$\pm$9.72} \\ %
 & 2.0 & 13.18$\pm$2.10 & 14.71$\pm$1.67 &  14.94$\pm$1.08  & 14.71$\pm$0.87 & \cellcolor{best}\textbf{19.82$\pm$1.77}  & \cellcolor{second}16.99$\pm$3.74 \\ %
 & 5.0 &  7.97$\pm$0.61 & 7.78$\pm$0.37 & 7.81$\pm$0.48 & 8.03$\pm$0.50 & \cellcolor{second}8.78$\pm$0.03 & \cellcolor{best}\textbf{9.28$\pm$0.26} \\%& \cellcolor{second}8.65$\pm$0.12 \\
\midrule
\multirow{4}{*}{Hopper Friction}
 & 0.1 & 25.55$\pm$2.69  & \cellcolor{second}26.13$\pm$4.24 & 24.16$\pm$4.50 & 24.02$\pm$5.00  & 24.76$\pm$1.21 & \cellcolor{best}\textbf{28.06$\pm$7.97} \\
 & 0.5 & 25.86$\pm$2.17 & 24.75$\pm$3.58 & 24.84$\pm$4.63 & 29.12$\pm$3.01 & \cellcolor{second}29.89$\pm$6.17 & \cellcolor{best}\textbf{33.57$\pm$1.31} \\ %
 & 2.0 & 10.32$\pm$0.06 & 10.15$\pm$0.03 & 10.15$\pm$0.06 & 10.35$\pm$0.60 & \cellcolor{best}\textbf{10.52$\pm$0.29} & \cellcolor{second}10.44$\pm$0.13 \\ %
 & 5.0 & 6.92$\pm$1.97 & 7.92$\pm$0.06 & 7.84$\pm$0.04 & 7.90$\pm$0.17 & \cellcolor{best}\textbf{8.07$\pm$0.04} & \cellcolor{second}8.04$\pm$0.05 \\
\midrule
Total & & 826.06 & 894.82 & 886.84 & 916.98 & \cellcolor{second}1007.10 & \cellcolor{best}\textbf{1207.05}\\ 
\bottomrule
\end{tabular}%
\end{table*}

\subsection{Experimental Setup}
\label{subsection:setup}

\textbf{Task and Environments.} We evaluate \algname on the ODRL benchmark \citep{lyu2024odrlbenchmarkoffdynamicsreinforcement} across MuJoCo environments with different dynamics shifts. Each environment includes three types of dynamics shifts: gravity, friction and morphology shifts. The source dataset is collected in the original MuJoCo environment from D4RL  \citep{fu2021d4rldatasetsdeepdatadriven}, and the target dataset is collected by the behavior policy in the corresponding environment with dynamics shifts. 
All datasets are collected using SAC \citep{haarnoja2018softactorcriticoffpolicymaximum} trained to a medium performance level in either the original or dynamics-shifted environment. We use the datasets provided by ODRL, where the source dataset contains 1 million transitions while the target one has 5000 transitions.
To construct dynamics-shifted target environments, we follow ODRL and modify MuJoCo simulator parameters for gravity, friction and morphology. 
For gravity, we rescale the global gravity magnitude while keeping its direction unchanged. For friction, we rescale the friction parameters.
In both cases, we consider four shift levels $\{0.1, 0.5, 2.0, 5.0\}$, where the original gravity or friction is multiplied by the corresponding factor.
Morphology shifts are created by changing the size of specific limbs or the torso of the robot.

In addition to the locomotion tasks, we evaluate on AntMaze problems with different map layouts following ODRL.
We focus on the medium-sized AntMaze setting and use 6 different map layouts as target environments, denoted as $\{1,2,3,4,5,6\}$.
We use the \emph{play} and \emph{diverse} datasets as source datasets, both drawn from D4RL \citep{fu2021d4rldatasetsdeepdatadriven}, while the target dataset is collected by an agent trained directly in the target environment and provided by ODRL.
The diverse dataset consists of trajectories that start from random locations and reach random goals, whereas the play dataset contains trajectories that start from hand-picked locations and reach a fixed hand-picked goal. Consequently, these two source datasets exhibit different state coverage.

Finally, we evaluate \algname on manipulation tasks from the Adroit benchmark.
We consider the \emph{pen} and \emph{door} environments.
Similar to the MuJoCo experiments, we construct target environments by introducing kinematic and morphological shifts.
For kinematic shifts, we restrict the rotation ranges of hand joints, referred to as \emph{broken-joint}.
For morphological shifts, we shrink the sizes of fingers, referred to as \emph{shrink-finger}.
We consider shift levels of medium and hard in our experiments.
The source dataset is drawn from D4RL at the human level, while the target dataset is obtained from ODRL.
Additional details about all these environments can be found in \Cref{app:experiment}. %
We defer the Adroit results in \Cref{sec:experiemnts_androit}.

\textbf{Baselines.} We compare \algname against several strong baseline methods including DARA \citep{eysenbach2021offdynamicsreinforcementlearningtraining}, BOSA \citep{liu2024beyond}, IQL \citep{kostrikov2021offline}, IGDF \citep{wen2024contrastive}, and OTDF \citep{lyu2025cross}.
DARA applies reward augmentation learned from the discrepancy between source and target dynamics to the source dataset to mimic the trajectories in the target environment.
BOSA enforces support constraints with two objectives to mitigate OOD state–action pairs and OOD dynamics while filtering unreliable transitions.
IQL is an offline RL algorithm that learns a value function and an advantage-weighted policy to achieve higher performance, which demonstrates strong performance in offline RL tasks trained on mixed datasets without explicit domain adaptation \citep{wen2024contrastive}.
IGDF learns a contrastive representation to estimate a mutual information-based domain gap for filtering source transitions.
OTDF is the most recent state-of-the-art method which uses optimal transport to match source–target transitions and performs non-parametric filtering based on the optimal transport cost.
Implementation details for all methods, including the architecture and hyperparameters, are presented in \Cref{app:experiment}.

\subsection{Results on MuJoCo Locomotion Tasks}
We first summarize the performance of \algname and the baseline methods on MuJoCo locomotion tasks under various dynamics shifts, including gravity, friction, morphology shifts, and local perturbations.
We evaluate performance using the normalized score
$\textit{Normalized Score} = \frac{\textit{score} - \textit{random score}}{\textit{expert score} - \textit{random score}} \times 100$,
where the \textit{random score} is achieved by a random policy and the \textit{expert score} is achieved by SAC \citep{haarnoja2018softactorcriticoffpolicymaximum} trained to the expert level in the target domain.

\textbf{Results on gravity and friction shifts.}
As shown in \Cref{tab:main_results_1}, \algname achieves a total score of 1207.05, yielding a 19.9\% improvement over the second-best method OTDF \citep{lyu2025cross} and a 34.9\% improvement over the backbone algorithm IQL \citep{kostrikov2021offline}. Across the 32 evaluation tasks (8 environments $\times$ 4 shift levels), \algname achieves best (21) or second-best (6) performance in 27 out of 32 tasks and remains competitive on the remaining 5 tasks.
\textbf{These results indicate that \algname improves target-domain performance by effectively leveraging localized filtering.}

The policy optimization module in \algname builds on IQL. Compared to vanilla IQL, we introduce (i) adaptive weighting in critic learning and (ii) an additional CVAE-based behavior-cloning regularizer.
As shown in \Cref{tab:main_results_1}, these modifications yield significant gains over IQL on 27 out of 32 tasks.
\textbf{These observations suggest that combining localized filtering with target-aware regularization improves adaptation to the target domain.}

\begin{table*}[t]
\centering
\scriptsize
\setlength{\tabcolsep}{4pt}
\renewcommand{\arraystretch}{0.9}
\caption{Performance comparison on MuJoCo tasks (Halfcheetah, Ant, Walker2d, Hopper) under a medium-level offline dataset. Target domains remain unchanged; source datasets are shifted by adding local perturbations with varying shift levels across regions. We report normalized target-domain scores (mean ± std over five seeds). Best and second-best scores are highlighted in \textcolor{best}{green} and \textcolor{second}{blue}, respectively.}
\label{tab:main_results_8}
\resizebox{\textwidth}{!}{
\begin{tabular}{ccccccccc}
\toprule
Env & \# Groups & Shift Level & BOSA & DARA & IQL & IGDF & OTDF & Ours\\
\midrule
\multirow{6}{*}{HalfCheetah}
&\multirow{3}{*}{3}
& 0.1-0.5-2.0  & 27.36$\pm$2.22 & 19.86$\pm$4.42 & 17.57$\pm$3.41 & 14.59$\pm$3.13 & \cellcolor{second}29.26$\pm$3.10 & \cellcolor{best}\textbf{34.97$\pm$0.85} \\ %
&& 0.2-1.0-4.0  & \cellcolor{second}25.28$\pm$3.10 & 16.94$\pm$3.31 & 16.07$\pm$3.45 & 13.16$\pm$4.83 & 24.21$\pm$2.82 & \cellcolor{best}\textbf{28.36$\pm$2.76} \\
&& 0.3-1.5-6.0 & 20.75$\pm$2.66 & 6.59$\pm$0.45 & 10.73$\pm$2.88 & 8.73$\pm$2.31 & \cellcolor{second}22.84$\pm$0.54 & \cellcolor{best}\textbf{27.24$\pm$4.37} \\
\cmidrule(lr){2-8}
&\multirow{3}{*}{5}
 & 0.1-0.2-0.3-0.4-0.5  & 19.30$\pm$2.38 & 12.43$\pm$1.95 & 10.83$\pm$1.05 & 11.27$\pm$2.35 & \cellcolor{second}28.99$\pm$1.85 & \cellcolor{best}\textbf{34.28$\pm$2.91} \\
 && 0.2-0.4-0.6-0.8-1.0  & \cellcolor{second}23.59$\pm$5.81 & 13.18$\pm$2.09 & 14.77$\pm$2.41 & 11.43$\pm$3.96 & 21.41$\pm$3.95 & \cellcolor{best}\textbf{25.73$\pm$5.38} \\
 && 0.3-0.6-0.9-1.2-1.5 & \cellcolor{best}\textbf{23.95$\pm$2.23}  & 7.10$\pm$0.97 & 8.98$\pm$2.61 & 7.37$\pm$3.03 & 11.53$\pm$2.04 & \cellcolor{second}18.74$\pm$1.55 \\
\midrule

\multirow{6}{*}{Walker2d}
&\multirow{3}{*}{3}
& 0.1-0.5-2.0 & 5.28$\pm$0.54 & 17.85$\pm$10.67 & \cellcolor{second}36.44$\pm$15.19 & 14.02$\pm$10.20 & 34.52$\pm$11.33 & \cellcolor{best}\textbf{37.45$\pm$18.71} \\
&& 0.2-1.0-4.0 & 6.42$\pm$1.10 & 19.79$\pm$3.58 & 15.29$\pm$9.44 & 14.62$\pm$8.16 & \cellcolor{second}20.66$\pm$7.06 & \cellcolor{best}\textbf{31.10$\pm$18.00} \\
&& 0.3-1.5-6.0 & 7.16$\pm$2.91 & 9.86$\pm$4.44 & 12.28$\pm$8.14 & 11.14$\pm$3.69 & \cellcolor{second}16.78$\pm$9.39 & \cellcolor{best}\textbf{18.69$\pm$10.51} \\
\cmidrule(lr){2-8}

&\multirow{3}{*}{5}
 & 0.1-0.2-0.3-0.4-0.5 & 7.17$\pm$1.48  & 8.10$\pm$3.46 & 6.96$\pm$2.01 & 6.99$\pm$2.77 & \cellcolor{second}33.87$\pm$16.17 & \cellcolor{best}\textbf{35.84$\pm$16.01} \\
 && 0.2-0.4-0.6-0.8-1.0 & 6.51$\pm$4.60 & 13.90$\pm$9.23 & 12.07$\pm$7.55 & 9.37$\pm$2.01 &  \cellcolor{second}13.87$\pm$6.95 & \cellcolor{best}\textbf{24.92$\pm$5.96} \\
 && 0.3-0.6-0.9-1.2-1.5 & 6.43$\pm$2.53  & 8.65$\pm$3.18 & 7.91$\pm$3.80 & 8.99$\pm$5.44 & \cellcolor{best}\textbf{18.52$\pm$15.46} & \cellcolor{second}12.98$\pm$3.88 \\
 
\midrule
\multirow{6}{*}{Hopper}
&\multirow{3}{*}{3}
& 0.1-0.5-2.0 & 7.84$\pm$2.84  & 8.70$\pm$6.67 & 8.14$\pm$6.31 & 13.08$\pm$10.44 & \cellcolor{second}33.96$\pm$4.57 & \cellcolor{best}\textbf{40.77$\pm$5.04} \\
&& 0.2-1.0-4.0  & 11.91$\pm$2.60 & 9.61$\pm$5.56 & 11.97$\pm$5.99 & 8.35$\pm$3.56 & \cellcolor{second}32.58$\pm$6.89 & \cellcolor{best}\textbf{37.51$\pm$9.16} \\
&& 0.3-1.5-6.0  & 11.85$\pm$3.19 & 8.11$\pm$3.54 & 10.17$\pm$5.50 & 6.59$\pm$2.59 & \cellcolor{second}31.77$\pm$3.04 & \cellcolor{best}\textbf{33.41$\pm$2.72} \\
\cmidrule(lr){2-8}

&\multirow{3}{*}{5}
 & 0.1-0.2-0.3-0.4-0.5 & 10.42$\pm$3.55 & 6.37$\pm$1.67 & 6.92$\pm$1.85 & 15.81$\pm$9.78 & \cellcolor{second}22.25$\pm$8.74 & \cellcolor{best}\textbf{40.30$\pm$4.61} \\
 && 0.2-0.4-0.6-0.8-1.0 & 11.63$\pm$1.87 & 14.07$\pm$5.65 & 9.82$\pm$4.29 & 11.24$\pm$7.86 & \cellcolor{second}21.42$\pm$9.11 & \cellcolor{best}\textbf{36.06$\pm$4.59} \\
 && 0.3-0.6-0.9-1.2-1.5 & 13.97$\pm$5.15 & 11.14$\pm$4.01 & 12.01$\pm$2.06 & 15.63$\pm$10.74 & \cellcolor{second}29.95$\pm$5.01 & \cellcolor{best}\textbf{35.02$\pm$5.70} \\
 \midrule
\multirow{6}{*}{Ant}
&\multirow{3}{*}{3}
& 0.1-0.5-2.0 & 55.51$\pm$10.37 & 69.67$\pm$8.59 & 69.65$\pm$10.93 & 63.79$\pm$9.55 & \cellcolor{second}70.30$\pm$5.24 & \cellcolor{best}\textbf{85.28$\pm$6.52} \\ %
&& 0.2-1.0-4.0 & 31.75$\pm$10.39 & 54.02$\pm$5.97 & \cellcolor{second}59.71$\pm$7.84 & 55.91$\pm$7.68 & 54.55$\pm$6.35 & \cellcolor{best}\textbf{61.51$\pm$6.89} \\
&& 0.3-1.5-6.0 & 14.41$\pm$4.68 & \cellcolor{second}67.13$\pm$5.07 & \cellcolor{best}\textbf{69.03$\pm$9.18} & 50.24$\pm$7.41 & 34.47$\pm$4.88 & 53.54$\pm$7.07 \\ 
\cmidrule(lr){2-8}

&\multirow{3}{*}{5}
 & 0.1-0.2-0.3-0.4-0.5 & 44.36$\pm$8.50 & \cellcolor{second}75.03$\pm$7.08 & 66.50$\pm$8.95 & 63.40$\pm$10.26 & 58.51$\pm$10.29 & \cellcolor{best}\textbf{78.96$\pm$7.34} \\
 && 0.2-0.4-0.6-0.8-1.0 & 30.26$\pm$11.22 & 56.84$\pm$5.10 & \cellcolor{best}\textbf{67.07$\pm$4.40} & 55.65$\pm$13.74 & 47.99$\pm$8.48 & \cellcolor{second}61.92$\pm$10.42 \\
 && 0.3-0.6-0.9-1.2-1.5 & 13.14$\pm$5.58 & \cellcolor{best}\textbf{51.81$\pm$6.79} & 46.98$\pm$9.14 & \cellcolor{second}51.36$\pm$8.39 & 15.23$\pm$4.79 & 48.51$\pm$9.77 \\
 \midrule
 Total & &  & 436.25 &586.77  &607.87  &542.73  & \cellcolor{second} 729.44 & \cellcolor{best}\textbf{943.09} \\

\bottomrule
\end{tabular}}
\end{table*}

\textbf{Results on morphology shift.} For MuJoCo locomotion tasks with morphology shifts, the results  and detailed analysis are deferred to \Cref{app:morph_results}. The key takeaway is that \algname achieves the best score in 13 out of 16 tasks, yielding a 32.7\% improvement over the second-best method.

\subsection{Experiments on Localized Perturbations}
In this experiment, we evaluate our localized filtering method under \emph{locally perturbed} dynamics.
Unlike the previous setting with \emph{global} shifts (e.g., gravity or friction changes that affect transitions uniformly), here the dynamics vary across regions of the state space.
This setting is more challenging because the distribution mismatch is heterogeneous and cannot be well addressed by methods that rely on global assumptions, such as DARA \citep{eysenbach2021offdynamicsreinforcementlearningtraining} and IGDF \citep{wen2024contrastive}.

We keep the target domain unchanged and perturb only the source dataset to simulate localized variations. Policies are trained on the perturbed source dataset and evaluated in the unchanged target domain.
For the target dataset, we randomly sample 5,000 transitions from D4RL \citep{fu2021d4rldatasetsdeepdatadriven} without introducing any dynamics shift.
For the source dataset, we start from the original D4RL dataset and partition the next-state space into 15 regions using K-means. We then divide the regions into $M$ groups and inject Gaussian noise into state variables, with all samples in the same group sharing the same noise variance.
We evaluate performance using the normalized score defined above. We consider all four MuJoCo environments (Ant, Hopper, HalfCheetah, Walker2d) and set $M\in\{3,5\}$. When $M=3$, the noise variances are $\{0.1, 0.5, 2.0\}$, $\{0.2, 1.0, 4.0\}$, and $\{0.3, 1.5, 6.0\}$. When $M=5$, the variances are $\{0.1, 0.2, 0.3, 0.4, 0.5\}$, $\{0.2, 0.4, 0.6, 0.8, 1.0\}$, and $\{0.3, 0.6, 0.9, 1.2, 1.5\}$.

As shown in \Cref{tab:main_results_8}, \algname achieves a total score of 943.09, corresponding to a 29.3\% improvement over the second-best approach OTDF and a 55.1\% improvement over IQL.
Across the 24 evaluation tasks (4 environments $\times$ 6 shift levels), \algname achieves best (19) or second-best (3) performance in 22 out of 24 tasks. In contrast, DARA and IGDF often underperform in this setting due to their reliance on global mismatch assumptions.
\textbf{These results demonstrate that \algname is particularly effective under localized perturbations, where cluster-aware filtering becomes more beneficial than under global shifts.}

\begin{table*}[ht]
\centering
\scriptsize
\setlength{\tabcolsep}{4pt}
\renewcommand{\arraystretch}{0.9}
\caption{Performance comparison on AntMaze tasks under the diverse source dataset with dynamics shifts in the map. Source domains remain unchanged; target domains are shifted. We report normalized target-domain scores (mean ± std over five seeds) with the mean of 50 episodes. Best and second-best scores are highlighted in \textcolor{best}{green} and \textcolor{second}{blue}, respectively. }
\label{tab:ant_diverse_results}
\begin{tabular}{cccccccccc}
\toprule
Env & Shift Level & BOSA & DARA & IQL & IGDF  & OTDF & Ours\\
\midrule
\multirow{6}{*}{Antmaze Medium}
 & 1 &  $43.6\pm11.2$  & $53.6\pm9.9$ & \cellcolor{best}\textbf{76.5$\pm$8.4} & $60.4\pm9.1$  & $51.6\pm10.53$ & \cellcolor{second}$66.8\pm4.6$ \\
 & 2 &  $22.4\pm5.5$  & $33.6\pm3.3$ & $52.8\pm7.9$ & $45.2\pm7.8$  & \cellcolor{second}$56.4\pm4.34$ & \cellcolor{best}\textbf{69.2$\pm$8.9} \\
 & 3 &  $39.6\pm11.5$  & $50.8\pm3.0$ & $57.6\pm7.7$ & $65.2\pm6.6$  & \cellcolor{second}71.6$\pm$5.90 & \cellcolor{best}\textbf{73.2$\pm$7.3} \\
 & 4 &  $40.0\pm3.2$  & $48.8\pm8.3$ & $58.4\pm4.8$ & \cellcolor{second}$60.4\pm6.2$  & $52.4\pm8.53$ & \cellcolor{best}\textbf{65.2$\pm$5.9} \\
 & 5 &  $21.2\pm7.6$  & $41.6\pm12.7$ & \cellcolor{second}66.4$\pm$8.3 & $56.8\pm7.6$  & $57.2\pm7.16$ & \cellcolor{best}\textbf{66.8$\pm$4.8} \\
 & 6 &  $24.8\pm14.0$  & $59.2\pm12.4$ & $72.4\pm10.7$ & \cellcolor{second}$73.6\pm1.7$  & $65.2\pm9.86$ & \cellcolor{best}\textbf{75.2$\pm$6.7} \\ \midrule
 Total & & 191.6 & 287.6 & \cellcolor{second}384.1 & 361.6 & 354.4 & \cellcolor{best}\textbf{416.4} \\
\bottomrule
\end{tabular}%
\end{table*}

\subsection{Results on Antmaze}

In this section, we present results on the navigation tasks described in \Cref{subsection:setup}.
We compare \algname with baseline methods on the medium-sized AntMaze setting using the \emph{diverse} source dataset across six target environments.
The results are summarized in \Cref{tab:ant_diverse_results}.
As shown in \Cref{tab:ant_diverse_results}, \algname consistently outperforms the baselines on the AntMaze-medium benchmark with the diverse dataset.
Overall, \algname achieves the highest total score of 416.4, corresponding to an 8.4\% improvement over the second-best method.
\algname also achieves the best performance on 5 out of 6 maps and the second-best score on the remaining map.
\textbf{These results demonstrate that \algname generalizes beyond locomotion tasks and achieves strong performance on navigation problems.}

In structured map environments, our localized filtering strategy effectively retains source transitions that are closer to the target transitions. Specifically, \algname only keep the source transitions that have similar map positions in state space with the transition in the target dataset.
In contrast, prior methods such as DARA and IGDF often fail to preserve these informative transitions, limiting their effectiveness.
Results using the play dataset are reported in \Cref{app:more_antmaze_results_play} in \Cref{tab:ant_play_results}.

\subsection{Ablation study}
In this section, we conduct several ablation studies to gain deeper insight into \algname and the effect of key hyperparameters in locomotion tasks. For demonstration, we study sensitivity in the Hopper environment in gravity shift with four shift levels (0.1, 0.5, 2.0, 5.0), focusing on the number of clusters $K$ and the policy-regularization weight $\lambda$.

\textbf{Number of clusters $K$.}
$K$ controls the level of locality of the state space partition used in our cluster-aware filtering mechanism. A smaller $K$ induces coarser clusters, which may merge heterogeneous transition dynamics, while a large $K$ can lead to overly fine partitions with insufficient samples per cluster. To study this trade-off, we sweep $K \in \{10,30,50,100\}$ and run \algname in the hopper-gravity environment with medium-level source and target datasets. As shown in \Cref{fig:parameter_K}, performance degrades when $K$ is either too small (\textit{e.g.}, 10) or too large (\textit{e.g.}, 100), while intermediate values achieve consistently better results.
\textbf{These results indicate that a moderate level of locality is critical for effective cluster-aware filtering, balancing dynamics homogeneity and data sufficiency.}
As a result, we set $K=30$ or $50$ in all of our experiments.

\begin{figure}[ht]
    \centering
    \begin{subfigure}[b]{0.45\textwidth}
        \includegraphics[width=\linewidth]{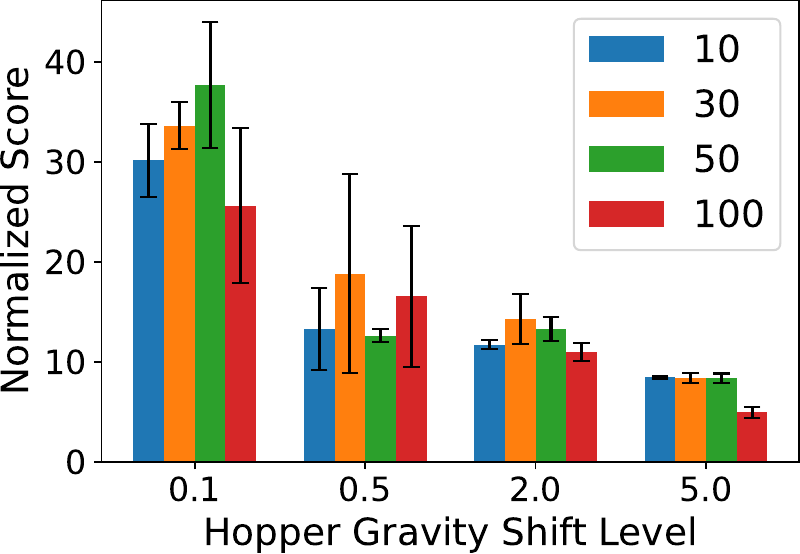}
        \caption{Ablation study on $K$. 
        }
        \label{fig:parameter_K}
    \end{subfigure}%
    \begin{subfigure}[b]{0.45\textwidth}
        \includegraphics[width=\linewidth]{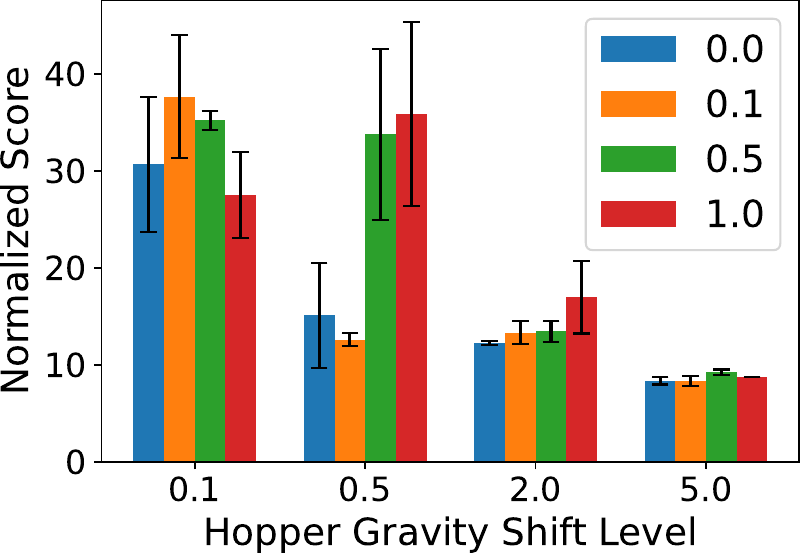}
        \caption{Ablation study on $\lambda$. 
        }
        \label{fig:parameter_lambda}
    \end{subfigure}%
    \caption{Parameter sensitivity study.}
\end{figure}

\textbf{Policy regularization weight $\lambda$.}
$\lambda$ controls the strength of the regularization term in policy optimization, which encourages the learned policy to stay close to the target behavior policy. A larger $\lambda$ enforces stronger regularization, potentially restricting policy improvement, while a smaller $\lambda$ allows more aggressive policy updates but may incur instability or overfitting to imperfect filtered data. To study this trade-off, we sweep $\lambda \in \{0, 0.1, 0.5, 1.0\}$ and run \algname in the hopper-gravity environment with medium-level source and target datasets. Results shown in \Cref{fig:parameter_lambda} demonstrate that diminishing the role of dataset regularization (\textit{i.e.}, $\lambda=0$) leads to poor performance. \algname is sensitive to the choice of $\lambda$ and the best $\lambda$ can vary for different tasks (\textit{e.g.}, 0.1 for hopper-gravity-0.1 and 1.0 for hopper-gravity-0.5). However, as shown in \Cref{fig:parameter_lambda},  \textbf{choosing $\lambda = 0.5$ serves as a generally effective default policy regularization weight across different settings in MuJoCo.} 

\section{Conclusion}
In this work, we studied off-dynamics offline RL through a localized dynamics-aware data filtering method. We provided a theoretical performance bound that motivates us to perform data filtering from a local perspective and introduce a dataset regularization term to encourage the learned policy to align with the target behavior. Building on this insight, we proposed \algname and conducted extensive experiments under various types and levels of both global and local dynamics shifts. Experiment results demonstrate that \algname outperforms existing baselines across a wide range of tasks, with particularly strong gains under local perturbed dynamics. Future works include exploring the effectiveness of \algname in real-world environments, and exploring more principled clustering mechanisms to better capture fine-grained state-action structures.

\section*{Impact Statement} 

This paper aims to advance research in off-dynamics offline reinforcement learning.
By enabling effective policy learning under dynamics mismatch using limited target-domain data, our approach has the potential to reduce the need for extensive data collection in real-world environments.
This is particularly beneficial in safety critical or costly scenarios such as robotics, autonomous systems, and healthcare treatment, where collecting data under new dynamics may be unsafe, expensive, or impractical.
We do not have any significant negative societal impacts arising from this work.

\appendix

\section{Proof of the Main Theoretical Results}
\label{app:Proof}

\subsection{Proof of \Cref{thm:relation_btw_2_kl}}

\begin{proof}[Proof of \Cref{thm:relation_btw_2_kl}]
This proof is similar to that of Theorem 4.3 in \citet{lyu2024crossdomainpolicyadaptationcapturing}. However, we derive this result by calculating the KL divergence between the source representation distribution and the target representation distribution, rather than vice versa. This is because our proposed method \algname requires characterizing the distance between the policy learned from the filtered source data and the target optimal policy, as demonstrated in \Cref{thm:offline_perf_bound}. Due to the asymmetry of KL divergence, we reproduce the proof here for the completeness of our work.

We would like to establish a connection between the representation deviations in the two domains and the dynamics discrepancies between the two domains. To achieve this, we introduce the concept of mutual information:
\begin{align}
h\left(z ; s_{\mathrm{src}}^{\prime}, s_{\mathrm{tar}}^{\prime}\right) & =I\left(z ; s_{\mathrm{src}}^{\prime}\right)-I\left(z ; s_{\mathrm{tar}}^{\prime}\right)  \notag\\
& =\int_{\mathcal{Z}} \int_{\mathcal{S}} P\left(z, s_{\mathrm{src}}^{\prime}\right) \log \frac{P\left(z, s_{\mathrm{src}}^{\prime}\right)}{P(z) P\left(s_{\mathrm{src}}^{\prime}\right)} d z d s_{\mathrm{src}}^{\prime}-\int_{\mathcal{Z}} \int_{\mathcal{S}} P\left(z, s_{\mathrm{tar}}^{\prime}\right) \log \frac{P\left(z, s_{\mathrm{tar}}^{\prime}\right)}{P(z) P\left(s_{\mathrm{tar}}^{\prime}\right)} d z d s_{\mathrm{tar}}^{\prime}\notag \\
& =\int_{\mathcal{Z}} \int_{\mathcal{S}} P\left(z, s_{\mathrm{src}}^{\prime}\right) \log \frac{P\left(z | s_{\mathrm{src}}^{\prime}\right)}{P(z)} d z d s_{\mathrm{src}}^{\prime}-\int_{\mathcal{Z}} \int_{\mathcal{S}} P\left(z, s_{\mathrm{tar}}^{\prime}\right) \log \frac{P\left(z | s_{\mathrm{tar}}^{\prime}\right)}{P(z)} d z d s_{\mathrm{tar}}^{\prime}\notag \\
&= \int_{\mathcal{Z}} \int_{\mathcal{S}} \int_{\mathcal{S}} 
    P(z, s'_{\text{src}}, s'_{\text{tar}}) 
    \log \frac{P(z | s'_{\text{src}})}{P(z)} 
    \, dz \, ds'_{\text{src}} \, ds'_{\text{tar}} \notag\\
&\qquad - \int_{\mathcal{Z}} \int_{\mathcal{S}} \int_{\mathcal{S}} 
    P(z, s'_{\text{tar}}, s'_{\text{src}}) 
    \log \frac{P(z | s'_{\text{tar}})}{P(z)} 
    \, dz \, ds'_{\text{tar}} \, ds'_{\text{src}}\notag\\
& =\int_{\mathcal{Z}} \int_{\mathcal{S}} \int_{\mathcal{S}} P\left(z, s_{\mathrm{src}}^{\prime}, s_{\mathrm{tar}}^{\prime}\right) \log \frac{P\left(z | s_{\mathrm{src}}^{\prime}\right)}{P\left(z | s_{\mathrm{tar}}^{\prime}\right)} d z d s_{\mathrm{src}}^{\prime} d s_{\mathrm{tar}}^{\prime}\notag \\
& =D_{\mathrm{KL}}\left(P\left(z | s_{\mathrm{src}}^{\prime}\right) \| P\left(z | s_{\mathrm{tar}}^{\prime}\right)\right) .\notag
\end{align}
Note that the definition of the KL-divergence already involves expectations over $s_{\text {src }}^{\prime}$ and $s_{\text {tar }}^{\prime}$. While one can also write $\mathbb{E}_{s_{\mathrm{src}}^{\prime}, s_{\mathrm{tar}}^{\prime}}\left[D_{\mathrm{KL}}\left(P\left(z | s_{\mathrm{src}}^{\prime}\right) \| P\left(z | s_{\mathrm{tar}}^{\prime}\right)\right)\right]$ and it should not affect the result. At the same time, we also have 
\begin{align}
h\left(z ; s_{\mathrm{src}}^{\prime}, s_{\mathrm{tar}}^{\prime}\right) \notag
&= I\left(z ; s_{\mathrm{src}}^{\prime}\right) - I\left(z ; s_{\mathrm{tar}}^{\prime}\right)\notag \\
&= \int_{\mathcal{Z}} \int_{\mathcal{S}} 
    P\left(z, s_{\mathrm{src}}^{\prime}\right) 
    \log \frac{P\left(z, s_{\mathrm{src}}^{\prime}\right)}{P(z) P\left(s_{\mathrm{src}}^{\prime}\right)} 
    \, dz \, ds_{\mathrm{src}}^{\prime} 
    - \int_{\mathcal{Z}} \int_{\mathcal{S}} 
    P\left(z, s_{\mathrm{tar}}^{\prime}\right) 
    \log \frac{P\left(z, s_{\mathrm{tar}}^{\prime}\right)}{P(z) P\left(s_{\mathrm{tar}}^{\prime}\right)} 
    \, dz \, ds_{\mathrm{tar}}^{\prime}\notag \\
&= \int_{\mathcal{Z}} \int_{\mathcal{S}} 
    P\left(z, s_{\mathrm{src}}^{\prime}\right) 
    \log \frac{P\left(s_{\mathrm{src}}^{\prime} | z\right)}{P\left(s_{\mathrm{src}}^{\prime}\right)} 
    \, dz \, ds_{\mathrm{src}}^{\prime} 
    - \int_{\mathcal{Z}} \int_{\mathcal{S}} 
    P\left(z, s_{\mathrm{tar}}^{\prime}\right) 
    \log \frac{P\left(s_{\mathrm{tar}}^{\prime} | z\right)}{P\left(s_{\mathrm{tar}}^{\prime}\right)} 
    \, dz \, ds_{\mathrm{tar}}^{\prime} \notag\\
&= \int_{\mathcal{Z}} \int_{\mathcal{S}} \int_{\mathcal{S}} 
    P\left(z, s_{\mathrm{src}}^{\prime}, s_{\mathrm{tar}}^{\prime}\right) 
    \log \frac{P\left(s_{\mathrm{src}}^{\prime} | z\right)}{P\left(s_{\mathrm{src}}^{\prime}\right)} 
    \, dz \, ds_{\mathrm{src}}^{\prime} \, ds_{\mathrm{tar}}^{\prime} \notag\\
&\qquad - \int_{\mathcal{Z}} \int_{\mathcal{S}} \int_{\mathcal{S}} 
    P\left(z, s_{\mathrm{tar}}^{\prime}, s_{\mathrm{src}}^{\prime}\right) 
    \log \frac{P\left(s_{\mathrm{tar}}^{\prime} | z\right)}{P\left(s_{\mathrm{tar}}^{\prime}\right)} 
    \, dz \, ds_{\mathrm{tar}}^{\prime} \, ds_{\mathrm{src}}^{\prime}\notag \\
&= \int_{\mathcal{Z}} \int_{\mathcal{S}} \int_{\mathcal{S}} 
    P\left(z, s_{\mathrm{src}}^{\prime}, s_{\mathrm{tar}}^{\prime}\right) 
    \log \frac{P\left(s_{\mathrm{src}}^{\prime} | z\right)}{P\left(s_{\mathrm{tar}}^{\prime} | z\right)} 
    \, dz \, ds_{\mathrm{src}}^{\prime} \, ds_{\mathrm{tar}}^{\prime}\notag \\
&\qquad - \int_{\mathcal{S}} 
    P\left(s_{\mathrm{src}}^{\prime}\right) 
    \log P\left(s_{\mathrm{src}}^{\prime}\right) 
    \, ds_{\mathrm{src}}^{\prime} 
    + \int_{\mathcal{S}} 
    P\left(s_{\mathrm{tar}}^{\prime}\right) 
    \log P\left(s_{\mathrm{tar}}^{\prime}\right) 
    \, ds_{\mathrm{tar}}^{\prime} \notag\\
&= D_{\mathrm{KL}}\left( 
    P\left(s_{\mathrm{src}}^{\prime} | z\right) 
    \, \| \, 
    P\left(s_{\mathrm{tar}}^{\prime} | z\right) 
    \right) 
    + \mathbb{H}\left(s_{\mathrm{src}}^{\prime}\right) 
    - \mathbb{H}\left(s_{\mathrm{tar}}^{\prime}\right).\notag
\end{align}
One can see that the defined function is also connected to the dynamics discrepancy term \\
$D_{\mathrm{KL}}\left( 
    P\left(s_{\mathrm{src}}^{\prime} | z\right) 
    \, \| \, 
    P\left(s_{\mathrm{tar}}^{\prime} | z\right) 
    \right) $ and two entropy terms. Nevertheless, we observe that the source domain and the target domain are specified and fixed, and their state distributions are also fixed, indicating that the entropy terms are constants. So in the end we have
\begin{align}
\underbrace{D_{\mathrm{KL}}\left(P\left(z | s_{\text {src}}^{\prime}\right) \| P\left(z | s_{\text {tar}}^{\prime}\right)\right)}_{\text {representation deviation }}=\underbrace{D_{\mathrm{KL}}\left(P\left(s_{\text {src }}^{\prime} | z\right) \| P\left(s_{\text {tar}}^{\prime} | z\right)\right)}_{\text {dynamics deviation }}+\underbrace{\mathbb{H}\left(s_{\text {src }}^{\prime}\right)-\mathbb{H}\left(s_{\text {tar}}^{\prime}\right)}_{\text {constants}} .\notag
\end{align}
This completes the proof.
\end{proof}

\subsection{Proof of Theorem \ref{thm:offline_perf_bound}}

\begin{proof}[Proof of \Cref{thm:offline_perf_bound}]
For simplicity, denote $\tilde{r} = \sum \gamma^tr\left(s_t, a_t\right)$.  Since It is infeasible to directly interact an online policy $\pi^*$ with an offline policy $\hat{\pi}_{\tilde{D}_{src}}^*$ in different domains,  we introduce 
$\pi_{\mathrm{src}}^* \in \Pi_{\text{no exploit}}$ that maximizes the reward in the filtered source domain.
Then we have
\begin{align}
    \mathbb{E}_{P_{\text{tar}}, \pi^*}(\tilde{r})-\mathbb{E}_{P_{\text{tar}}, \hat{\pi}_{\tilde{D}_{src}}^*}(\tilde{r})&= \underbrace{\mathbb{E}_{P_{\text{tar}}, \pi^*}(\tilde{r})-\mathbb{E}_{P_{\text{src}}, \pi^*} (\tilde{r})}_{(a)} +   \underbrace{\mathbb{E}_{P_{\text{src}}, \pi^*} (\tilde{r})-\mathbb{E}_{P_{\text{src}}, \pi_{\text{src}}^*}(\tilde{r}) }_{(b)} \notag\\
    &+\underbrace{\mathbb{E}_{P_{\text{src}}, \pi_{\text{src}}^*}(\tilde{r})-\mathbb{E}_{P_{\text{tar}}, \pi_{D_{tar}}^*}(\tilde{r})}_{(c)}+\underbrace{\mathbb{E}_{P_{\text{tar}}, \pi_{D_{tar}}^*}(\tilde{r})-\mathbb{E}_{P_{\text{tar}}, \hat{\pi}_{\tilde{D}_{src}}^*}(\tilde{r})}_{(d)}.\notag
\end{align}
First note that term (b) is 0 because $\pi_{\mathrm{src}}^*$ is the optimal policy in the filtered source domain, which should be identical to the target domain, meaning $\pi_{\mathrm{src}}^*$ is almost the same as $\pi^*$ and they should achieve the same reward in the same domain. Term (a) is bounded above by $2 R_{\max } \sqrt{1-e^{B-\epsilon}}$ by \Cref{assum: no_exploit}.
To bound term (c), we first need the following result by applying \Cref{lem:no_exploit}:
\begin{align}
    \mathbb{E}_{P_{\text{src}}, \pi_{\mathrm{src}}^*} (\tilde{r})-\mathbb{E}_{P_{\text{tar}}, \pi^*}(\tilde{r})
    \leq \mathbb{E}_{P_{\text{src}}, \pi_{\mathrm{src}}^*} (\tilde{r})-\mathbb{E}_{P_{\text{tar}}, \pi_{\mathrm{src}}^*} (\tilde{r})
    \leq 2 R_{\max}\sqrt{1-e^{B-\epsilon}} \notag.
\end{align}
Next we can use this and the performance bound of usual offline RL algorithm to find a bound for term (c):
\begin{align}
    \mathbb{E}_{P_{\text{src}}, \pi_{\text{src}}^*}(\tilde{r})
    \leq 2 R_{\max}\sqrt{1-e^{B-\epsilon}} +\mathbb{E}_{P_{\text{tar}}, \pi^*}(\tilde{r})\leq  2 R_{\max}\sqrt{1-e^{B-\epsilon}}+C+\mathbb{E}_{P_{\text{tar}}, \pi_{D_{tar}}^*}(\tilde{r})\notag. 
\end{align}
Rearrange the terms can we find the upper bound for term (c).
Finally we will deal with term (d):
\begin{align}
    (d)=\underbrace{\mathbb{E}_{P_{\text{tar}}, \pi_{D_{tar}}^*}(\tilde{r})-\mathbb{E}_{P_{\text{src}}, \pi_{D_{tar}}^*}(\tilde{r})}_{I_1}+\underbrace{\mathbb{E}_{P_{\text{src}}, \pi_{D_{tar}}^*}(\tilde{r})-\mathbb{E}_{P_{\text{src}}, \hat{\pi}_{\tilde{D}_{src}}^*}(\tilde{r})}_{I_2}+\underbrace{\mathbb{E}_{P_{\text{src}}, \hat{\pi}_{\tilde{D}_{src}}^*}(\tilde{r})-\mathbb{E}_{P_{\text{tar}}, \hat{\pi}_{\tilde{D}_{src}}^*}(\tilde{r})}_{I_3} . \notag
\end{align}
We bound $I_2$ by applying \Cref{lem:extendeD_{tar}elescoping_lem}:
\begin{align}
    I_2&\leq\left| \mathbb{E}_{P_{\text{src}}, \hat{\pi}_{\tilde{D}_{src}}^*}(\tilde{r})-\mathbb{E}_{P_{\text{src}}, \pi_{D_{tar}}^*}(\tilde{r}) \right|\notag\\
    &=\left| \frac{1}{1-\gamma} \mathbb{E}_{\rho_{\mathcal{M}_{\text{src}}}^{ \hat{\pi}_{\tilde{D}_{src}}^*}(s, a), s^{\prime} \sim P_{\text{src}}}\left[\mathbb{E}_{a^{\prime} \sim  \hat{\pi}_{\tilde{D}_{src}}^*}\left[Q_{\mathcal{M}_{\text{src}}}^{\pi_{D_{tar}}^*}\left(s^{\prime}, a^{\prime}\right)\right]-\mathbb{E}_{a^{\prime} \sim {\pi_{D_{tar}}^*}}\left[Q_{\mathcal{M_{\text{src}}}}^{\pi_{D_{tar}}^*}\left(s^{\prime}, a^{\prime}\right)\right]\right]\right|\notag\\
    &= \frac{1}{1-\gamma} 
    \mathbb{E}_{\rho_{\mathcal{M}_{\text{src}}}^{ \hat{\pi}_{\tilde{D}_{src}}^*}(s, a), s^{\prime} \sim P_{\text{src}}}
    \left| \sum_{a' \in \mathcal{A}} 
    \big( \hat{\pi}_{\tilde{D}_{src}}^*(a'|s') - {\pi_{D_{tar}}^*}(a'|s')\big) 
    Q^{\pi_{D_{tar}}^*}_{\mathcal{M}_{\text{src}}}(s',a') \right|\notag \\[1em]
    &\leq \frac{R_{\max}}{1-\gamma} 
    \mathbb{E}_{\rho_{\mathcal{M}_{\text{src}}}^{ \hat{\pi}_{\tilde{D}_{src}}^*}(s, a), s^{\prime} \sim P_{\text{src}}}
    \left| \sum_{a' \in \mathcal{A}} 
    \big( \hat{\pi}_{\tilde{D}_{src}}^*(a'|s') - {\pi_{D_{tar}}^*}(a'|s')) \right| \notag\\[1em]
    &= \frac{2R_{\max}}{1-\gamma} 
    \mathbb{E}_{\rho_{\mathcal{M}_{\text{src}}}^{ \hat{\pi}_{\tilde{D}_{src}}^*}(s, a), s^{\prime} \sim P_{\text{src}}}
    \Big[ D_{\mathrm{TV}}\big( \hat{\pi}_{\tilde{D}_{src}}^*(\cdot|s') \,\|\, {\pi_{D_{tar}}^*}(\cdot|s')\big) \Big].\notag
\end{align}
We bound $I_1$ by applying \Cref{lem:telescoping_lem}:
\begin{align}
    I_1 &\leq\left| \mathbb{E}_{P_{\text{src}}, \pi_{D_{tar}}^*}(\tilde{r})-\mathbb{E}_{P_{\text{tar}}, \pi_{D_{tar}}^*}(\tilde{r}) \right|\notag\\
    &=\left|\frac{\gamma}{1-\gamma} \mathbb{E}_{\rho_{\mathcal{M}_{\mathrm{src}}}^{\pi_{D_{tar}}^*}(s, a)}\left[\mathbb{E}_{s^{\prime} \sim P_{\text{src}}}\left[V_{\mathcal{M}_{\mathrm{tar}}}^{\pi_{D_{tar}}^*}\left(s^{\prime}\right)\right]-\mathbb{E}_{s^{\prime} \sim P_{\text{tar}}}\left[V_{\mathcal{M}_{\mathrm{tar}}}^{\pi_{D_{tar}}^*}\left(s^{\prime}\right)\right]\right]\right| \notag\\
    &= \left|\frac{\gamma}{1-\gamma} \mathbb{E}_{\rho_{\mathcal{M}_{\mathrm{src}}}^{\pi_{D_{tar}}^*}(s, a)}\left[ \int_{\mathcal{S}}\left(P_{\mathcal{M}_{\mathrm{src}}}\left(s^{\prime} \mid s, a\right)-P_{\mathcal{M}_{\mathrm{tar}}}\left(s^{\prime} \mid s, a\right)\right)V_{\mathcal{M}_{\mathrm{tar}}}^{\pi_{D_{tar}}^*}\left(s^{\prime}\right) ds'\right]\right|\notag\\
    &\leq  \frac{\gamma}{1-\gamma} \mathbb{E}_{\rho_{\mathcal{M}_{\mathrm{src}}}^{\pi_{D_{tar}}^*}(s, a)}\left[ \left|\int_{\mathcal{S}}\left(P_{\mathcal{M}_{\mathrm{src}}}\left(s^{\prime} \mid s, a\right)-P_{\mathcal{M}_{\mathrm{tar}}}\left(s^{\prime} \mid s, a\right)\right)\right|\times \left|V_{\mathcal{M}_{\mathrm{tar}}}^{\pi_{D_{tar}}^*}\left(s^{\prime}\right) \right|ds'\right]\notag\\
    &\leq \frac{2\gamma}{1-\gamma} R_{\max}\left[D_{\mathrm{TV}}\left(P_{\mathcal{M}_{\mathrm{src}}}(\cdot \mid s, a) \| P_{\mathcal{M}_{\mathrm{tar}}}(\cdot \mid s, a)\right)\right]\notag\\
    &\leq \frac{2\gamma}{1-\gamma} R_{\max}\sqrt{1-e^{B-\epsilon}}.\notag
\end{align}
Similarly for $I_3$, we get the same upper bound. Combining all bounds together, we get the desired result. 
\end{proof}

\section{Useful Lemmas}

\begin{lemma} 
Let $R_{\text {max}}$ be the maximum (entropy-regularized) return of any trajectory. Then the following inequality holds:
\begin{align}
\left|\mathbb{E}_{\pi,P_{\text{src}}}\left[\sum \gamma^tr\left(s_t, a_t\right)\right]-\mathbb{E}_{\pi,P_{\text{tar}}}\left[\sum \gamma^tr\left(s_t, a_t\right)\right]\right| \leq 2 R_{\max } \sqrt{1-e^{B-\epsilon}}. \notag
\end{align}
\label{lem:no_exploit}
\end{lemma}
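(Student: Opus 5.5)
We follow the argument behind Lemma B.2 of \citet{eysenbach2021offdynamicsreinforcementlearningtraining}: compare trajectory distributions and convert a KL bound into a total-variation bound. Fix $\pi$ and write $p_{\mathrm{src}}(\tau)$ and $p_{\mathrm{tar}}(\tau)$ for the trajectory distributions induced by $\pi$ under $P_{\mathrm{src}}$ and $P_{\mathrm{tar}}$. Both share $\rho_0$ and $\pi$, so they differ only through the transition factors. With $R(\tau)=\sum_t\gamma^t r(s_t,a_t)$ satisfying $|R(\tau)|\le R_{\max}$, we get
\begin{align*}
\Big|\mathbb{E}_{p_{\mathrm{src}}}[R]-\mathbb{E}_{p_{\mathrm{tar}}}[R]\Big|
=\Big|\int \big(p_{\mathrm{src}}(\tau)-p_{\mathrm{tar}}(\tau)\big)R(\tau)\,d\tau\Big|
\le 2R_{\max}\,D_{\mathrm{TV}}(p_{\mathrm{src}},p_{\mathrm{tar}}).
\end{align*}
It therefore suffices to show $D_{\mathrm{TV}}(p_{\mathrm{src}},p_{\mathrm{tar}})\le\sqrt{1-e^{B-\epsilon}}$.

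To bound the total variation, we use the Bretagnolle--Huber inequality $D_{\mathrm{TV}}(p,q)\le\sqrt{1-e^{-D_{\mathrm{KL}}(p\|q)}}$ rather than Pinsker, since it gives exactly the $\sqrt{1-e^{-(\cdot)}}$ form. This reduces the claim to
\[
D_{\mathrm{KL}}(p_{\mathrm{src}}\|p_{\mathrm{tar}})\le \epsilon-B ,
\]
because the map $x\mapsto\sqrt{1-e^{-x}}$ is increasing. The required KL bound comes from \Cref{thm:relation_btw_2_kl} together with the standing assumption $D_{\mathrm{KL}}(P(z|s'_{\mathrm{src}})\|P(z|s'_{\mathrm{tar}}))\le\epsilon$. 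These give $D_{\mathrm{KL}}(P(s'_{\mathrm{src}}|z)\|P(s'_{\mathrm{tar}}|z))\le\epsilon-B$, which is a bound on the dynamics discrepancy.

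The main obstacle is passing from this per-step dynamics bound to a bound on the whole trajectory KL. The chain rule gives
\[
D_{\mathrm{KL}}(p_{\mathrm{src}}\|p_{\mathrm{tar}})=\mathbb{E}_{p_{\mathrm{src}}}\Big[\sum_t \log\frac{P_{\mathrm{src}}(s_{t+1}|s_t,a_t)}{P_{\mathrm{tar}}(s_{t+1}|s_t,a_t)}\Big],
\]
which is a sum of per-step expected KLs, not a single one. Following Eysenbach et al., we interpret the representation-deviation bound $\epsilon$ as already controlling this aggregated quantity, i.e. as measured under the state-action occupancy of $\pi$. If instead it is only a per-step bound, we would use a discount-weighted version and absorb the resulting horizon factor into $\epsilon$ via the normalization of $R_{\max}$ (the maximum return over a trajectory). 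We would state this interpretation explicitly, since the entropy constant $B$ is defined with respect to the same next-state marginals. Assembling the three inequalities then yields the lemma for every $\pi$, uniformly.
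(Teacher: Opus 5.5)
Your argument is the paper's proof. You bound the return gap by $2R_{\max}D_{\mathrm{TV}}$ of the trajectory distributions (the paper phrases this as H\"older with $\|\cdot\|_\infty$ and $\|\cdot\|_1$), apply Bretagnolle--Huber, and conclude with $D_{\mathrm{KL}}(P_{\mathrm{src}}(\tau)\|P_{\mathrm{tar}}(\tau))\le\epsilon-B$. The paper asserts that last inequality without justification, so it silently adopts the trajectory-level reading of the assumption that you make explicit.

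Drop your per-step fallback, though. The chain-rule KL is an undiscounted sum over the horizon, and under only a per-step bound the inequality is actually false. Take a source that keeps you in a reward-$1$ state forever and a target that leaks to an absorbing reward-$0$ state with probability $p$ per step: the per-step KL is $\log\frac{1}{1-p}$, so the right-hand side is $2R_{\max}\sqrt{p}$, while the gap divided by $R_{\max}=\frac{1}{1-\gamma}$ equals $\frac{\gamma p}{1-\gamma+\gamma p}\to 1$ as $\gamma\to 1$. Hence no horizon factor can be absorbed into $\epsilon$ or $R_{\max}$.
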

Note that the original result from \citet{eysenbach2021offdynamicsreinforcementlearningtraining} was derived within a policy class, $\Pi_{\text {no exploit}}$, which is defined to be set of policies that have constrained dynamics discrepancy. Since in our case we assume the KL divergence between the source and target dynamics is bounded by $\epsilon-B$, the above result is not restricted to the  policy class $\Pi_{\text {no exploit}}$ anymore.

\begin{proof} This proof is similar to that of Lemma B.2 in \citet{eysenbach2021offdynamicsreinforcementlearningtraining}, and the only difference is that we use B-H bound for TV distance instead of Pinsker's inequality in the last step. We repeat the proof here for readers' reference.
We know $z=f(s,a)$ is injective, and apply Holder's inequality and Bretagnolle–Huber bound, which is never worse than Pinsker's inequality \citep{canonne2023shortnoteinequalitykl}, to obtain the desired result:
\begin{align}
\left|\mathbb{E}_{P_{\text{src}}}\left[\sum \gamma^tr\left(s_t, a_t\right)\right]-\mathbb{E}_{P_{\text{tar}}}\left[\sum \gamma^tr\left(s_t, a_t\right)\right]\right| & =\left|\sum_\tau\left(P_{\text{src}}(\tau)-P_{\text{tar}}(\tau)\right)\left(\sum \gamma^tr\left(s_t, a_t\right)\right)\right| \notag\\
& \leq\left\|\sum \gamma^tr\left(s_t, a_t\right)\right\|_{\infty} \cdot\left\|P_{\text{src}}(\tau)-P_{\text{tar}}(\tau)\right\|_1 \notag\\
& \leq\left(\max _\tau \sum \gamma^tr\left(s_t, a_t\right)\right) \cdot 2 \sqrt{1-e^{-D_{\mathrm{KL}}\left(P_{\text{src}}(\tau) \| P_{\text{tar}}(\tau)\right)}} \notag\\
& \leq 2 R_{\max } \sqrt{1-e^{B-\epsilon}} .\notag
\end{align}
This completes the proof.
\end{proof}

\begin{lemma}[Telescoping lemma, Lemma 4.3 from \citet{luo2021algorithmicframeworkmodelbaseddeep}]. Denote $\mathcal{M}_1=\left(\mathcal{S}, \mathcal{A}, P_1, r, \gamma\right)$ and $\mathcal{M}_2=\left(\mathcal{S}, \mathcal{A}, P_2, r, \gamma\right)$ as two MDPs that only differ in their transition dynamics. Then for any policy $\pi$, we have
\begin{align}
J_{\mathcal{M}_1}(\pi)-J_{\mathcal{M}_2}(\pi)=\frac{\gamma}{1-\gamma} \mathbb{E}_{\rho_{\mathcal{M}_1}^\pi(s, a)}\left[\mathbb{E}_{s^{\prime} \sim P_1}\left[V_{\mathcal{M}_2}^\pi\left(s^{\prime}\right)\right]-\mathbb{E}_{s^{\prime} \sim P_2}\left[V_{\mathcal{M}_2}^\pi\left(s^{\prime}\right)\right]\right] .\notag 
\end{align}
\label{lem:telescoping_lem}
\end{lemma}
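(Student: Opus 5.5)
This is the telescoping lemma (Lemma 4.3 of \citet{luo2021algorithmicframeworkmodelbaseddeep}), and I would prove it with a hybrid-trajectory argument. For $j \ge 0$, define $W_j$ as the expected discounted return of the process that starts from $\rho_0$, follows $\pi$, uses $P_1$ for its first $j$ transitions, and then switches to $P_2$ for all later transitions. With this definition $W_0 = J_{\mathcal{M}_2}(\pi)$. Because rewards are bounded and $\gamma<1$, we also have $W_j \to J_{\mathcal{M}_1}(\pi)$ as $j\to\infty$. The difference of returns therefore telescopes:
\begin{align}
J_{\mathcal{M}_1}(\pi)-J_{\mathcal{M}_2}(\pi)=\sum_{j=0}^{\infty}\left(W_{j+1}-W_j\right).\notag
\end{align}

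\textbf{Step 1: compare adjacent hybrids.} The processes behind $W_{j+1}$ and $W_j$ share the same law up to and including $(s_j,a_j)$, and this law is the $P_1$-rollout of $\pi$. They therefore collect identical rewards through time $j$. They differ only in the kernel used to draw $s_{j+1}$: $W_{j+1}$ uses $P_1$ and $W_j$ uses $P_2$. From $s_{j+1}$ onward both processes follow $P_2$ under $\pi$, so the expected remaining discounted reward is $\gamma^{j+1}V^\pi_{\mathcal{M}_2}(s_{j+1})$. Conditioning on $(s_j,a_j)$ gives
\begin{align}
W_{j+1}-W_j=\gamma^{j+1}\,\mathbb{E}_{(s_j,a_j)\sim \Pr^{\pi}_{\mathcal{M}_1,j}}\left[\mathbb{E}_{s'\sim P_1}V^\pi_{\mathcal{M}_2}(s')-\mathbb{E}_{s'\sim P_2}V^\pi_{\mathcal{M}_2}(s')\right].\notag
\end{align}

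\textbf{Step 2: sum over $j$.} The normalized discounted occupancy is $\rho^\pi_{\mathcal{M}_1}(s,a)=(1-\gamma)\sum_j\gamma^j \Pr^\pi_{\mathcal{M}_1,j}(s,a)$. Summing the Step 1 identity over $j$ and pulling out one factor of $\gamma$ turns $\sum_j \gamma^{j}\Pr^\pi_{\mathcal{M}_1,j}$ into $\rho^\pi_{\mathcal{M}_1}/(1-\gamma)$. This yields exactly the prefactor $\gamma/(1-\gamma)$ in the statement.

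\textbf{Main obstacle.} The delicate part is justifying the limit and the interchange of the infinite sum with the expectations. I would handle both with boundedness. The returns satisfy $|r|\le R_{\max}$, so $|V^\pi_{\mathcal{M}_2}|\le R_{\max}/(1-\gamma)$. Hence $|W_{j+1}-W_j|\le 2\gamma^{j+1}R_{\max}/(1-\gamma)$, which makes the series absolutely summable. The same bound gives $|J_{\mathcal{M}_1}(\pi)-W_j| = O(\gamma^j)$, so $W_j \to J_{\mathcal{M}_1}(\pi)$, and Fubini then justifies exchanging the sum with the expectations. A second point needing care is that the conditional continuation value after the switch depends only on $s_{j+1}$. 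This follows from the Markov property of the $P_2$-rollout under the stationary policy $\pi$.
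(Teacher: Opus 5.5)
Your hybrid-trajectory argument is correct and is essentially the standard proof. The paper does not prove this lemma itself but imports it from \citet{luo2021algorithmicframeworkmodelbaseddeep}, whose proof defines the same interpolating returns $W_j$ (first $j$ steps under one model, the rest under the other) and telescopes $\sum_j (W_{j+1}-W_j)$ just as you do. Your boundedness assumption $|r|\le R_{\max}$ is the implicit standing assumption behind the lemma and is consistent with how the paper uses $R_{\max}$; you rely on it to justify $W_j\to J_{\mathcal{M}_1}(\pi)$ and the interchange of sum and expectation.
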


\begin{lemma}
Denote $\mathcal{M}=(\mathcal{S}, \mathcal{A}, P, r, \gamma)$ as the underlying MDP. Suppose we have two policies $\pi_1, \pi_2$, then the performance difference of these policies in the MDP gives:
\begin{align}
J_{\mathcal{M}}\left(\pi_1\right)-J_{\mathcal{M}}\left(\pi_2\right)=\frac{1}{1-\gamma} \mathbb{E}_{\rho_{\mathcal{M}}^{\pi_1}(s, a), s^{\prime} \sim P}\left[\mathbb{E}_{a^{\prime} \sim \pi_1}\left[Q_{\mathcal{M}}^{\pi_2}\left(s^{\prime}, a^{\prime}\right)\right]-\mathbb{E}_{a^{\prime} \sim \pi_2}\left[Q_{\mathcal{M}}^{\pi_2}\left(s^{\prime}, a^{\prime}\right)\right]\right].\notag
\end{align}
\label{lem:extendeD_{tar}elescoping_lem}
\end{lemma}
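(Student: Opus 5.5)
The plan is to follow the classical performance difference argument of Kakade and Langford: telescope the value of $\pi_2$ along trajectories generated by $\pi_1$, and then rewrite the resulting discounted sum as an expectation under the occupancy measure. Throughout I take $J_{\mathcal{M}}(\pi)=\mathbb{E}_{s_0\sim\rho_0}[V^{\pi}_{\mathcal{M}}(s_0)]$. I take $\rho^{\pi}_{\mathcal{M}}(s,a)=(1-\gamma)\sum_t\gamma^t\Pr(s_t=s,a_t=a\mid\pi,\mathcal{M})$. Finally, I write $A(s)=\mathbb{E}_{a\sim\pi_1}[Q^{\pi_2}_{\mathcal{M}}(s,a)]-\mathbb{E}_{a\sim\pi_2}[Q^{\pi_2}_{\mathcal{M}}(s,a)]$, so that $\mathbb{E}_{a\sim\pi_2}[Q^{\pi_2}_{\mathcal{M}}(s,a)]=V^{\pi_2}_{\mathcal{M}}(s)$.

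\textbf{Step 1 (telescoping).} For a trajectory $\tau$ generated by $\pi_1$ in $\mathcal{M}$, write
\begin{align*}
\sum_t\gamma^t r(s_t,a_t)-V^{\pi_2}(s_0)=\sum_t\gamma^t\big(r(s_t,a_t)+\gamma V^{\pi_2}(s_{t+1})-V^{\pi_2}(s_t)\big).
\end{align*}
The sum of the $V^{\pi_2}$ terms collapses because $\gamma^tV^{\pi_2}(s_t)\to 0$, which holds since $|V^{\pi_2}|\le R_{\max}$ and $\gamma<1$. Take expectations, and use the tower property together with $r(s,a)+\gamma\mathbb{E}_{s'\sim P}V^{\pi_2}(s')=Q^{\pi_2}(s,a)$. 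This gives
\begin{align*}
J(\pi_1)-J(\pi_2)=\mathbb{E}_{\tau\sim\pi_1}\Big[\sum_t\gamma^t A(s_t)\Big].
\end{align*}
Exchanging expectation and infinite sum is justified by dominated convergence, since $|A|\le 2R_{\max}$.

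\textbf{Step 2 (reindexing onto next states).} The statement is written with $s'$ drawn as a successor of $(s,a)\sim\rho^{\pi_1}_{\mathcal{M}}$, not with $s$ drawn from the state occupancy. So I would split off the $t=0$ term and use the flow identity $\Pr(s_{t+1}=\cdot)=\sum_{s,a}\Pr(s_t=s,a_t=a)P(\cdot\mid s,a)$. This yields
\begin{align*}
\sum_{t\ge1}\gamma^t\mathbb{E}[A(s_t)]=\frac{\gamma}{1-\gamma}\mathbb{E}_{\rho^{\pi_1}(s,a),\,s'\sim P}[A(s')].
\end{align*}
The full quantity is then $\mathbb{E}_{\rho_0}[A(s_0)]$ plus this term.

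\textbf{Main obstacle.} The main obstacle is reconciling this exact identity with the constant $\frac{1}{1-\gamma}$ and the absence of an initial-state term in the lemma. My first attempt would be to interpret $\rho^{\pi_1}_{\mathcal{M}}$ with the convention that the successor distribution of $\rho^{\pi_1}$ coincides with the discounted state occupancy $d^{\pi_1}$. Under that convention, $\mathbb{E}_{d^{\pi_1}}[A]=(1-\gamma)\,\mathbb{E}_{\tau\sim\pi_1}[\sum_t\gamma^tA(s_t)]$ gives exactly the stated form. The convention holds when $\rho_0$ is stationary for $P$ under $\pi_1$, or equivalently when one augments the chain with a dummy pre-initial transition into $\rho_0$.

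If that reading is not acceptable, I would state the precise version: the $\frac{\gamma}{1-\gamma}$ successor term plus $\mathbb{E}_{\rho_0}[A]$. I would then note that the downstream use in bounding $I_2$ only needs an upper bound. Since $|A(s)|\le 2R_{\max}D_{\mathrm{TV}}(\pi_1(\cdot|s)\|\pi_2(\cdot|s))$, the same total-variation bound follows up to this bookkeeping of the $t=0$ term.
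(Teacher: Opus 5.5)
Your argument is correct. It is the standard performance-difference telescoping. The paper gives no proof of its own here, only a pointer to Lemma B.3 of \citet{lyu2024crossdomainpolicyadaptationcapturing}, which is this same computation, so there is no different route to compare.

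Your caveat should be stated more firmly than ``bookkeeping'': with the standard definition of $\rho^{\pi_1}_{\mathcal{M}}$, the identity as written is false. Any argument for it, including the cited one, must therefore implicitly use a nonstandard occupancy. Here is a counterexample.
\begin{itemize}
\item Take $\mathcal{S}=\{0,1\}$ with $\rho_0=\delta_0$, and let every action send the chain to the absorbing state $1$.
\item Set $r(0,a)=1$ and all other rewards $0$, with $\pi_1\equiv a$ and $\pi_2\equiv b$.
\item Then $J(\pi_1)-J(\pi_2)=1$.
\item But every successor $s'$ equals $1$, where $A\equiv 0$, so the stated right-hand side is $0$.
\end{itemize}

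Write $\mu$ for the successor law of $\rho^{\pi_1}_{\mathcal{M}}$. Your Step 2 shows the statement holds exactly when $\mathbb{E}_{\rho_0}[A]=\mathbb{E}_{\mu}[A]$. Stationarity of $\rho_0$ is one sufficient condition. The dummy pre-initial step, by contrast, is a redefinition of $\rho^{\pi_1}_{\mathcal{M}}$, so ``equivalently'' overstates it. The correct statement is your
\[
\mathbb{E}_{\rho_0}[A(s_0)]+\tfrac{\gamma}{1-\gamma}\,\mathbb{E}_{\rho^{\pi_1}_{\mathcal{M}},P}[A(s')]=\tfrac{1}{1-\gamma}\,\mathbb{E}_{s\sim d^{\pi_1}}[A(s)].
\]

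This also matters downstream. Write $\hat{\pi}=\hat{\pi}^*_{\tilde{D}_{src}}$, and let $d^{\hat{\pi}}$ be its discounted state occupancy in $\mathcal{M}_{\mathrm{src}}$. Your version gives
\[
|I_2|\le\tfrac{2R_{\max}}{1-\gamma}\,\mathbb{E}_{s\sim d^{\hat{\pi}}}\big[D_{\mathrm{TV}}(\hat{\pi}(\cdot|s)\|\pi_{D_{tar}}^*(\cdot|s))\big].
\]
So the filtering-deviation term in Theorem~1 should be taken under $d^{\hat{\pi}}$, which includes the initial-state contribution, not under the successor law as the paper writes it.
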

\begin{proof}
This is Lemma B.3 in \citet{lyu2024crossdomainpolicyadaptationcapturing}, please check the proof there.
\end{proof}

\section{Experiment Details}
\label{app:experiment}

In this section, we provide more details for the experiment settings and hyperparameters in \Cref{sec:experiment}.

\subsection{Locomotion tasks}
In this section, we demonstrate the detailed modification in the MuJoCo \texttt{xml} file to construct the target environment. In the locomotion tasks, we follow the ODRL, providing three dynamics shifts including gravity, friction and morphology shift. In the gravity and friction shift, we consider the shift level with $\{0.1, 0.5, 2.0, 5.0\}$. In the morphology shift, we consider the \textit{medium} and \textit{hard} level shift.

\paragraph{Gravity Shift.} Following the ODRL benchmark \citep{lyu2024odrlbenchmarkoffdynamicsreinforcement}, we modify the gravity of the environment by editing the gravity attribute. For example, the gravity of the HalfCheetah in the target is modified to 0.5 times the gravity in the source domain with the following code.
\begin{tcolorbox}[
  colback=black!2,
  colframe=black!30,
  boxrule=0.4pt,
  left=6pt,
  right=6pt,
  top=4pt,
  bottom=4pt
]
\begin{lstlisting}[style=xmlstyle]
# gravity
<option gravity="0 0 -4.905" timestep="0.01"/>
\end{lstlisting}
\end{tcolorbox}

\paragraph{Friction Shift.} The friction shift is generated by modifying the friction attribute in the
geom elements. The frictional components are adjusted to {0.1, 0.5, 2.0, 5.0} times the frictional
components in the source domain, respectively.

\paragraph{Morphology Shift.} 
The morphology shift is achieved by modifying the size of specific limbs or torsos of the simulated robot in MuJoCo without altering the state space and action space. Visualizations can be found in \Cref{fig:morph-ant}, \Cref{fig:morph-halfcheetah}, \Cref{fig:morph-hopper} and \Cref{fig:morph-walker2d}.

\begin{figure}[!htbp]
  \centering
  \setlength{\tabcolsep}{2pt}
  \renewcommand{\arraystretch}{0}
  \begin{subfigure}[b]{0.24\textwidth}
    \centering\includegraphics[width=\linewidth]{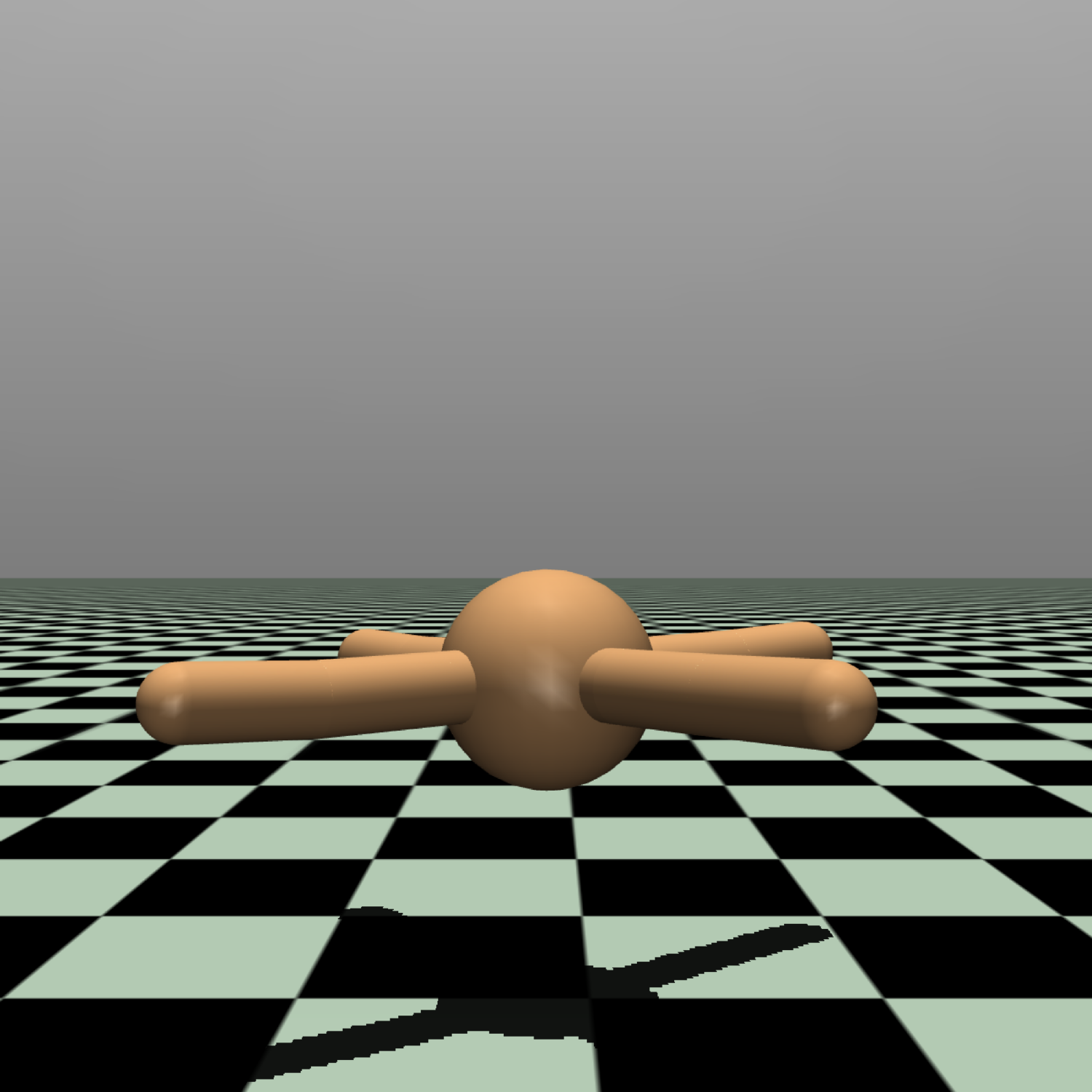}
    \caption{alllegs (M)}
  \end{subfigure}
  \begin{subfigure}[b]{0.24\textwidth}
    \centering\includegraphics[width=\linewidth]{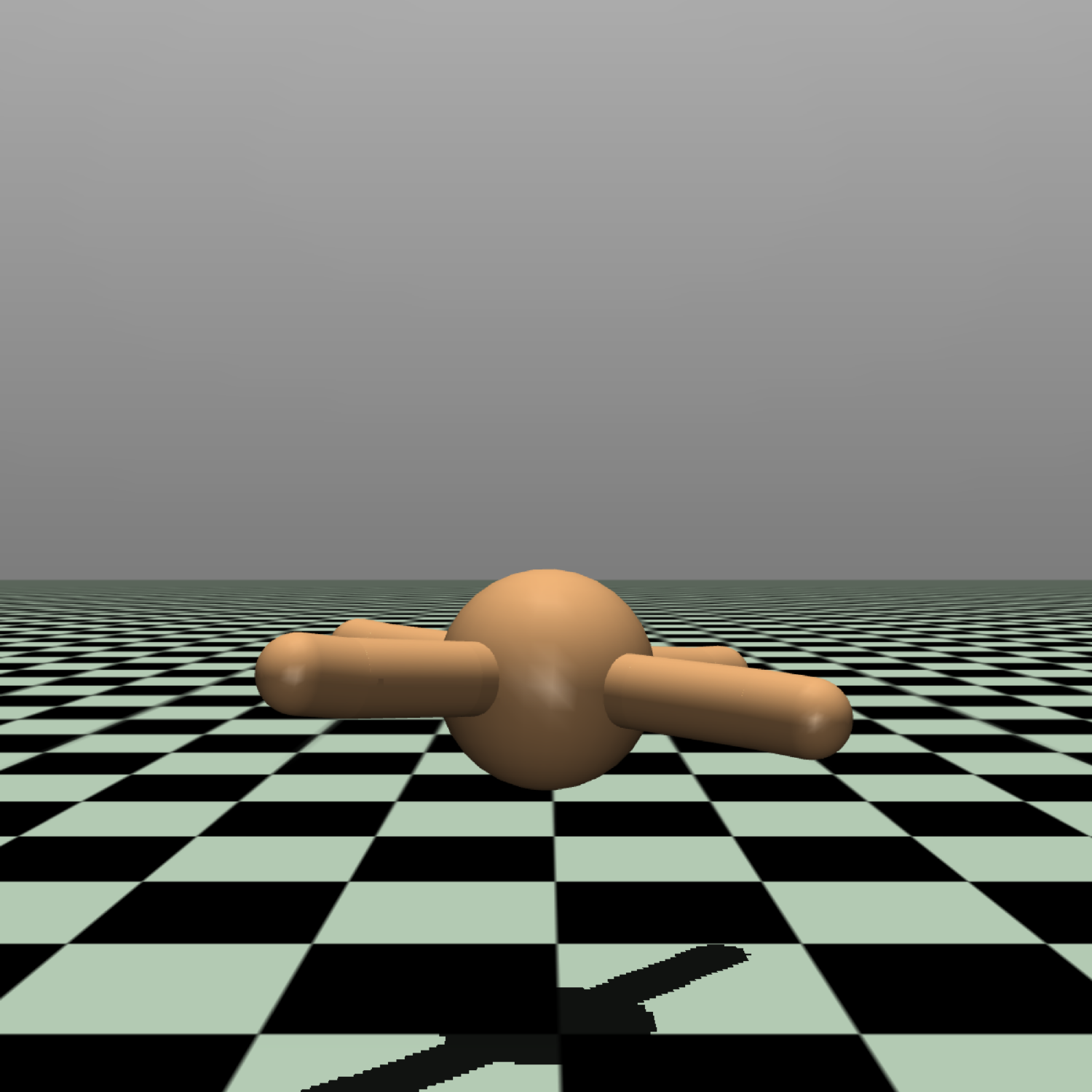}
    \caption{alllegs (H)}
  \end{subfigure}
  \begin{subfigure}[b]{0.24\textwidth}
    \centering\includegraphics[width=\linewidth]{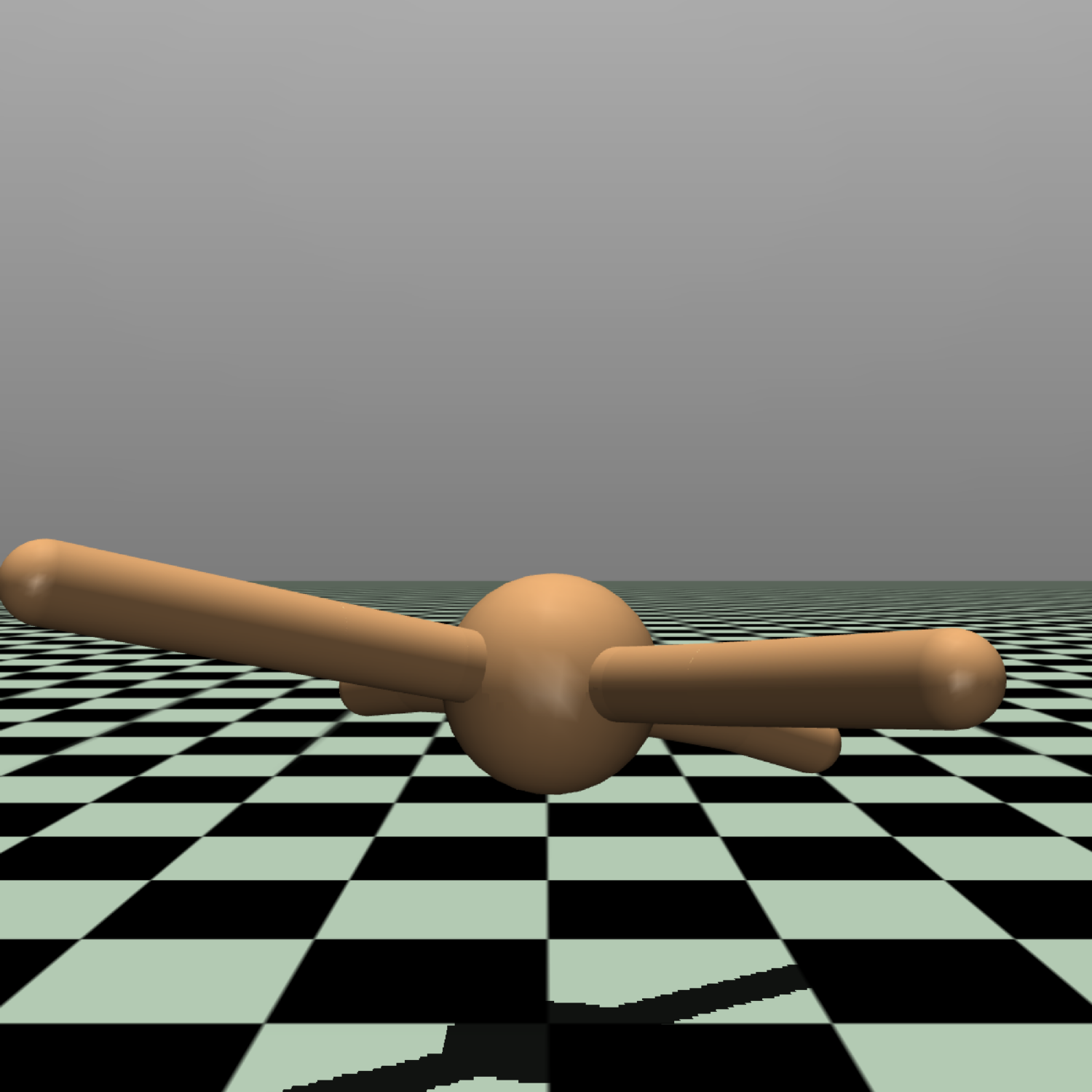}
    \caption{halflegs (M)}
  \end{subfigure}
  \begin{subfigure}[b]{0.24\textwidth}
    \centering\includegraphics[width=\linewidth]{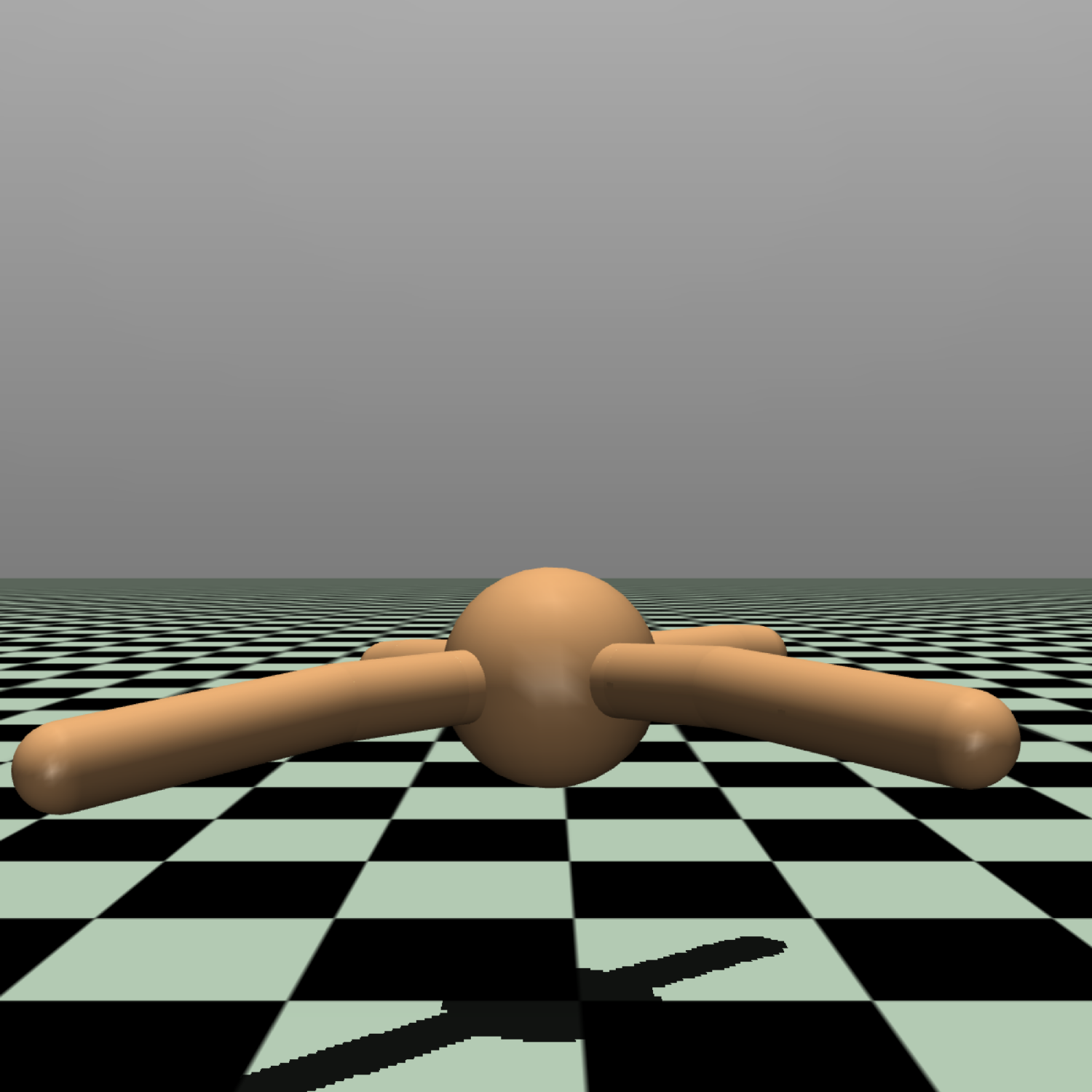}
    \caption{halflegs (H)}
  \end{subfigure}
  \caption{Visualization of morphology shift environments for Ant.}
  \label{fig:morph-ant}
\end{figure}

\begin{figure}[!htbp]
  \centering
  \setlength{\tabcolsep}{2pt}
  \renewcommand{\arraystretch}{0}
  \begin{subfigure}[b]{0.24\textwidth}
    \centering\includegraphics[width=\linewidth]{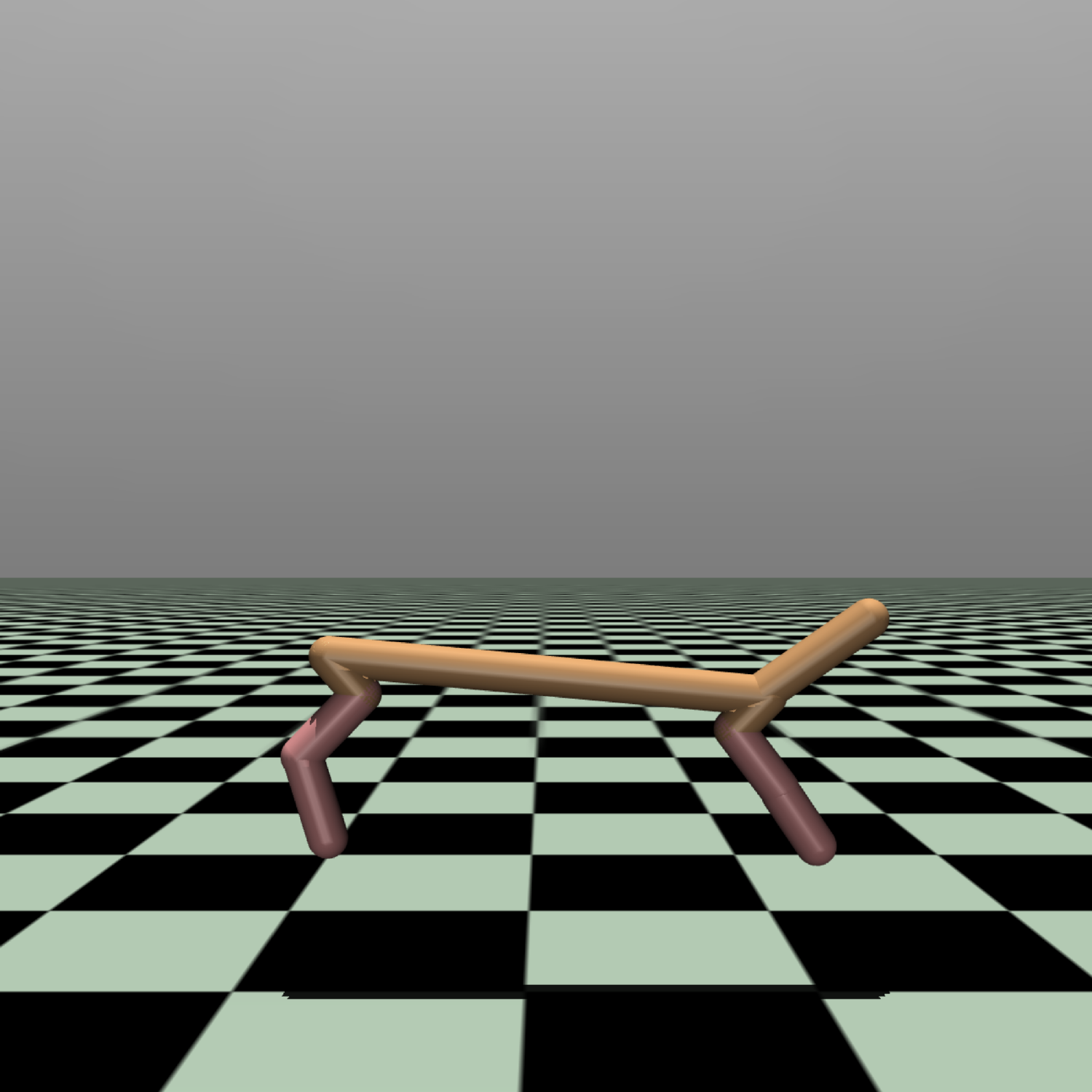}
    \caption{thigh (M)}
  \end{subfigure}
  \begin{subfigure}[b]{0.24\textwidth}
    \centering\includegraphics[width=\linewidth]{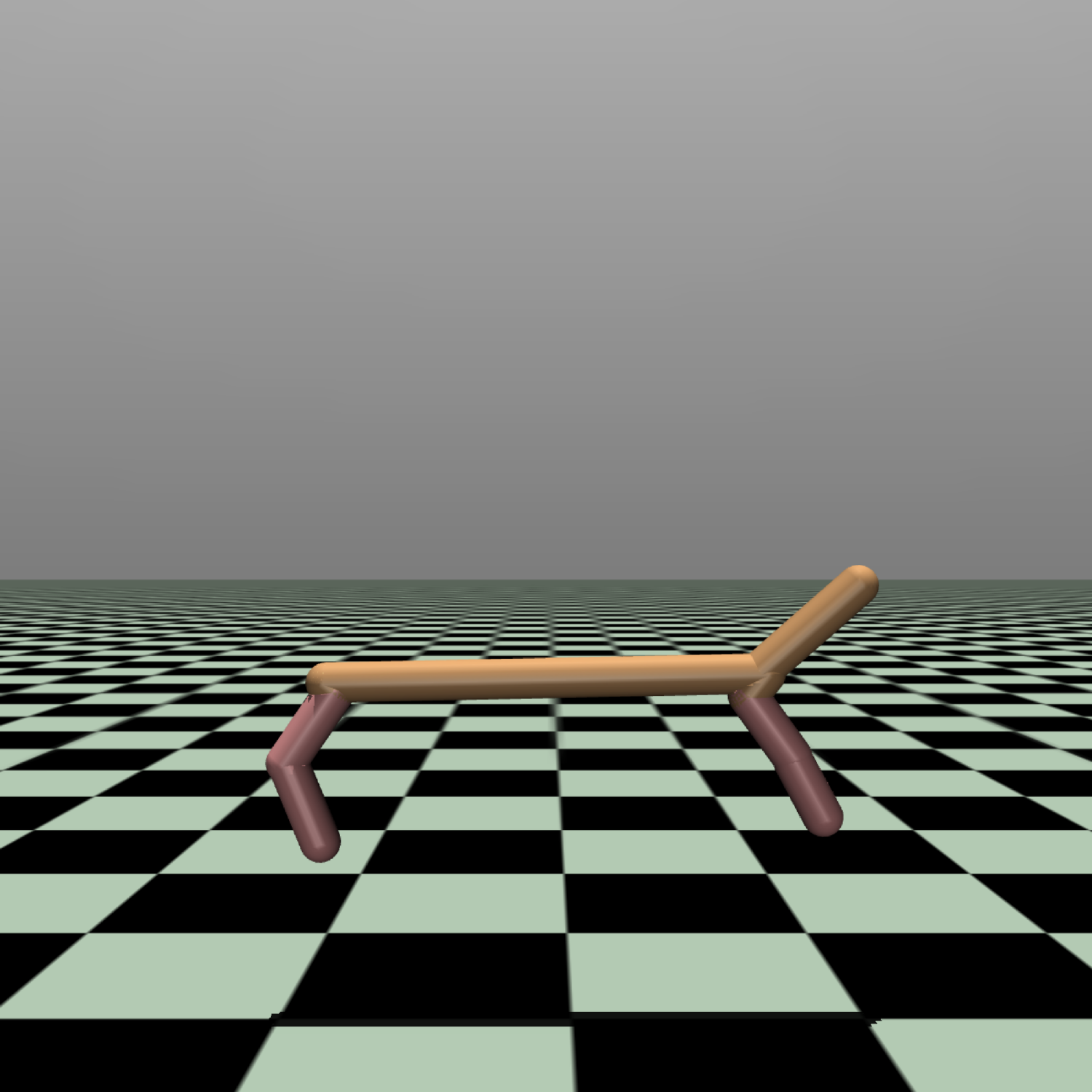}
    \caption{thigh (H)}
  \end{subfigure}
  \begin{subfigure}[b]{0.24\textwidth}
    \centering\includegraphics[width=\linewidth]{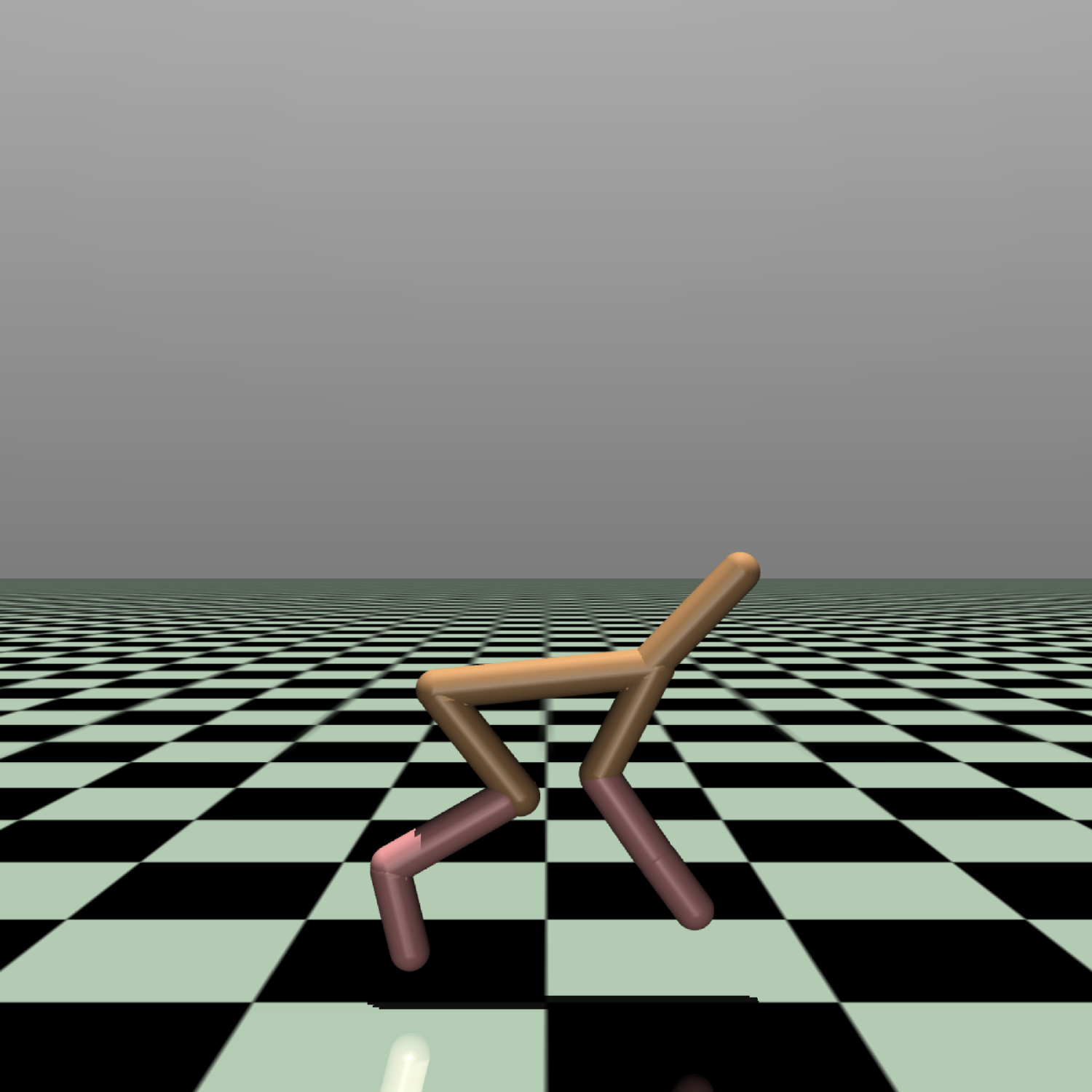}
    \caption{torso (M)}
  \end{subfigure}
  \begin{subfigure}[b]{0.24\textwidth}
    \centering\includegraphics[width=\linewidth]{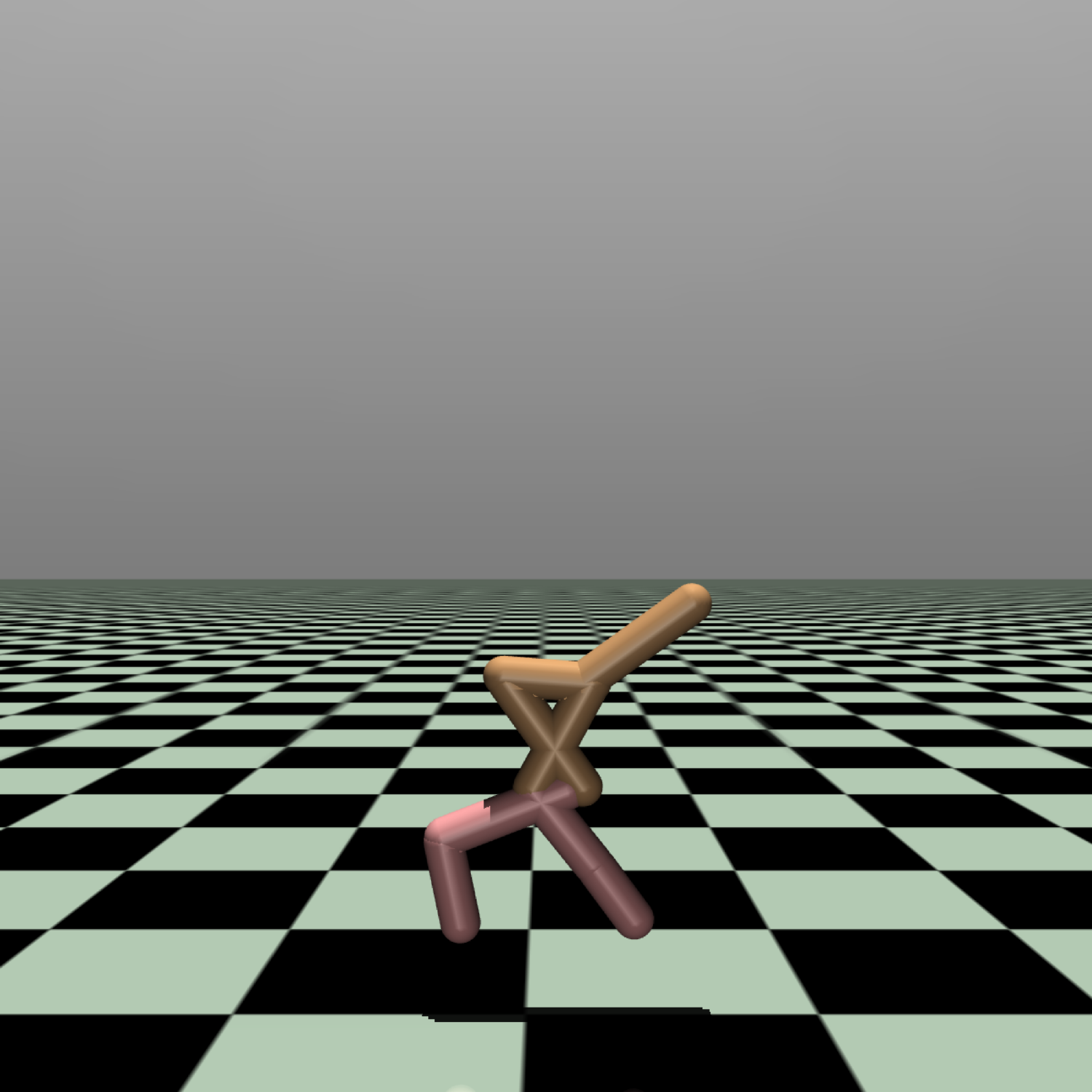}
    \caption{torso (H)}
  \end{subfigure}
  \caption{Visualization of morphology shift environments for HalfCheetah.}
  \label{fig:morph-halfcheetah}
\end{figure}

\begin{figure}[!htbp]
  \centering
  \setlength{\tabcolsep}{2pt}
  \renewcommand{\arraystretch}{0}
  \begin{subfigure}[b]{0.24\textwidth}
    \centering\includegraphics[width=\linewidth]{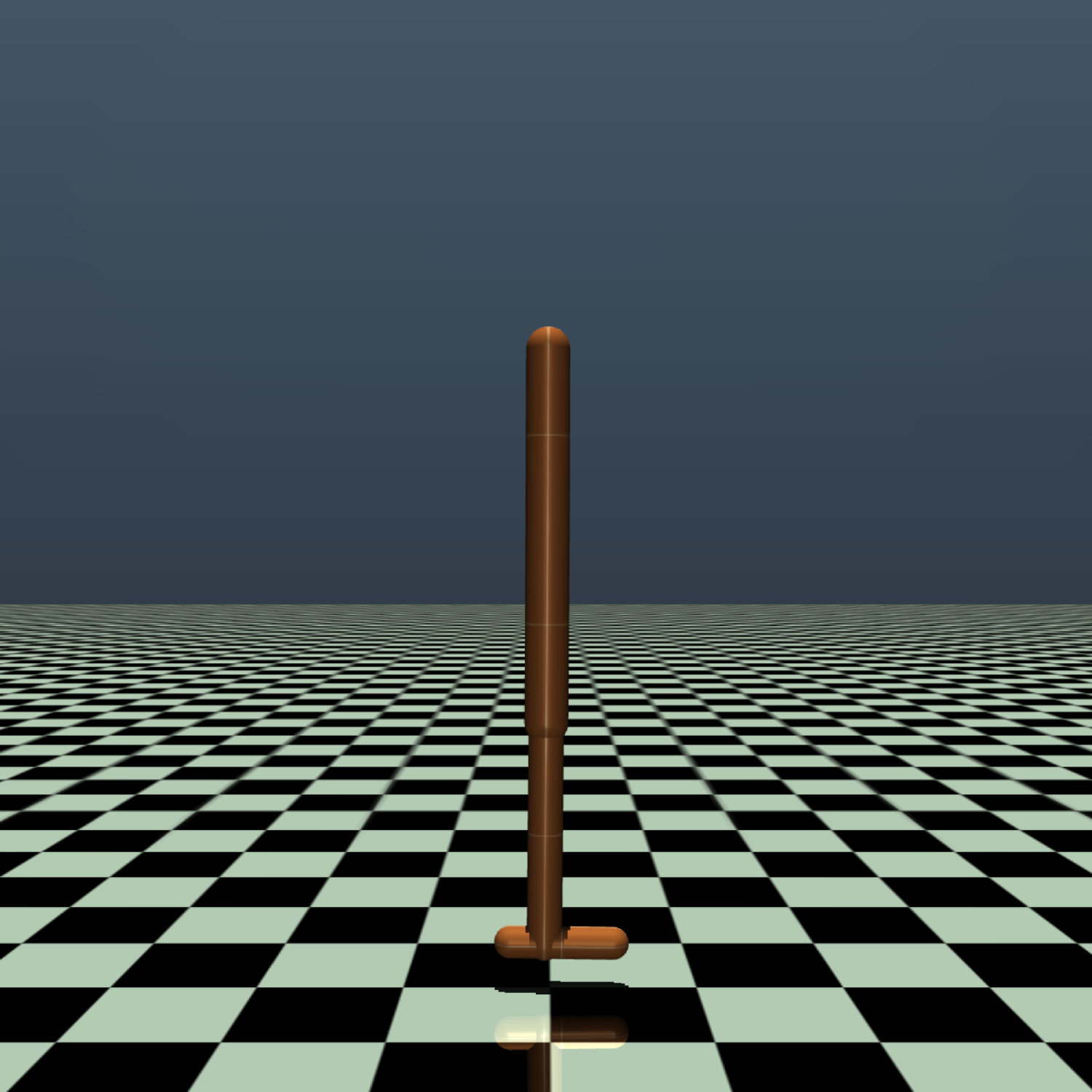}
    \caption{foot (M)}
  \end{subfigure}
  \begin{subfigure}[b]{0.24\textwidth}
    \centering\includegraphics[width=\linewidth]{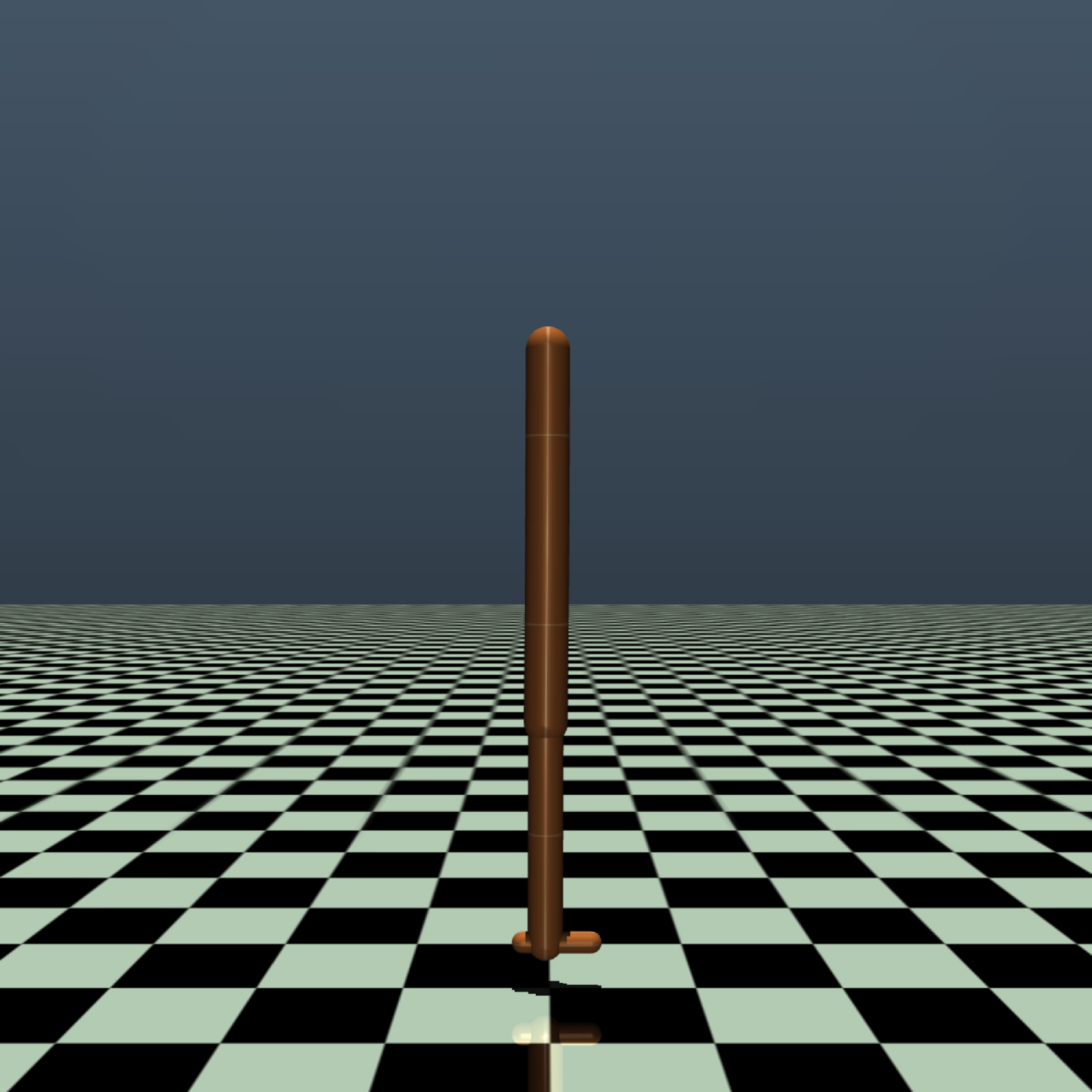}
    \caption{foot (H)}
  \end{subfigure}
  \begin{subfigure}[b]{0.24\textwidth}
    \centering\includegraphics[width=\linewidth]{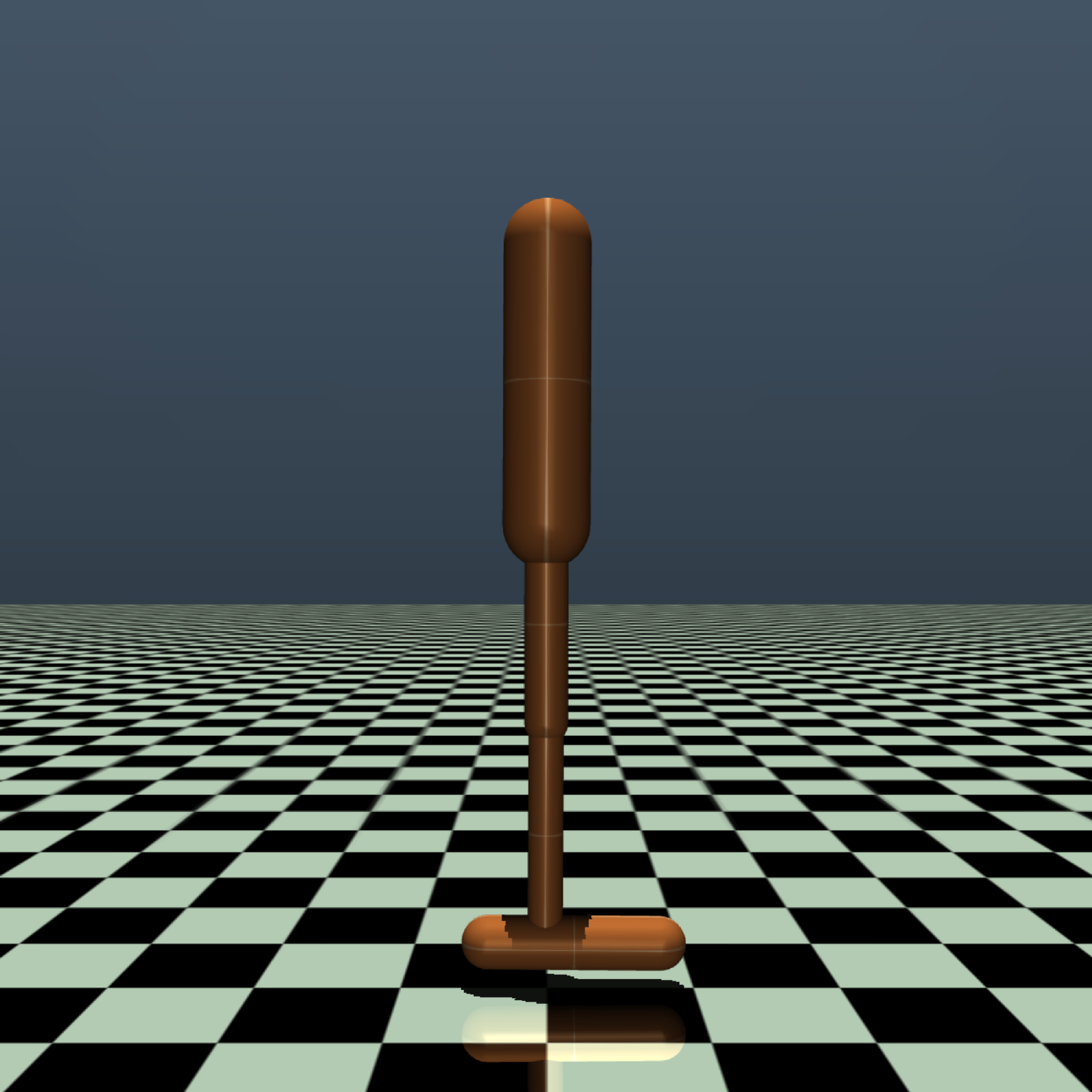}
    \caption{torso (M)}
  \end{subfigure}
  \begin{subfigure}[b]{0.24\textwidth}
    \centering\includegraphics[width=\linewidth]{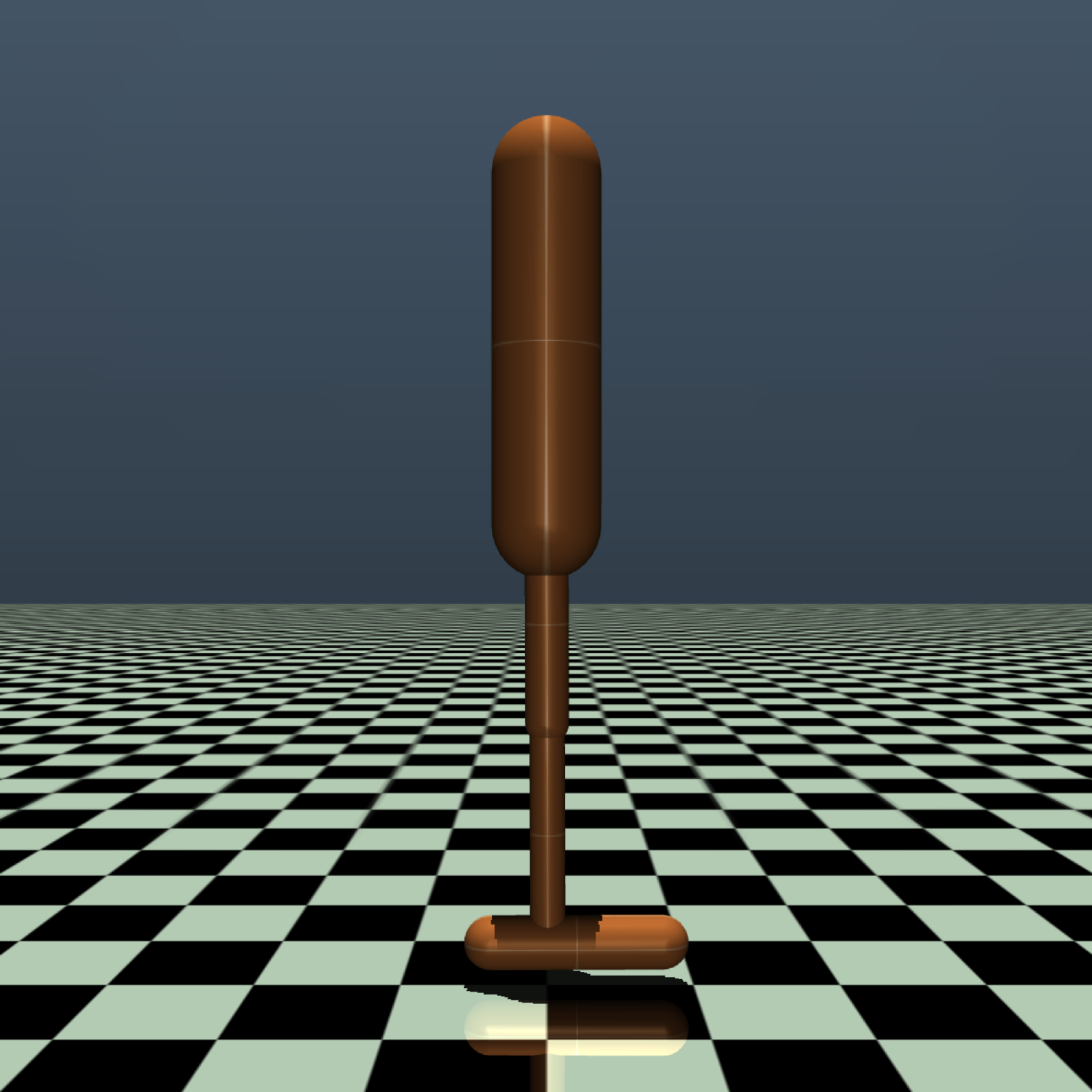}
    \caption{torso (H)}
  \end{subfigure}
  \caption{Visualization of morphology shift environments for Hopper.}
  \label{fig:morph-hopper}
\end{figure}

\begin{figure}[!htbp]
  \centering
  \setlength{\tabcolsep}{2pt}
  \renewcommand{\arraystretch}{0}
  \begin{subfigure}[b]{0.24\textwidth}
    \centering\includegraphics[width=\linewidth]{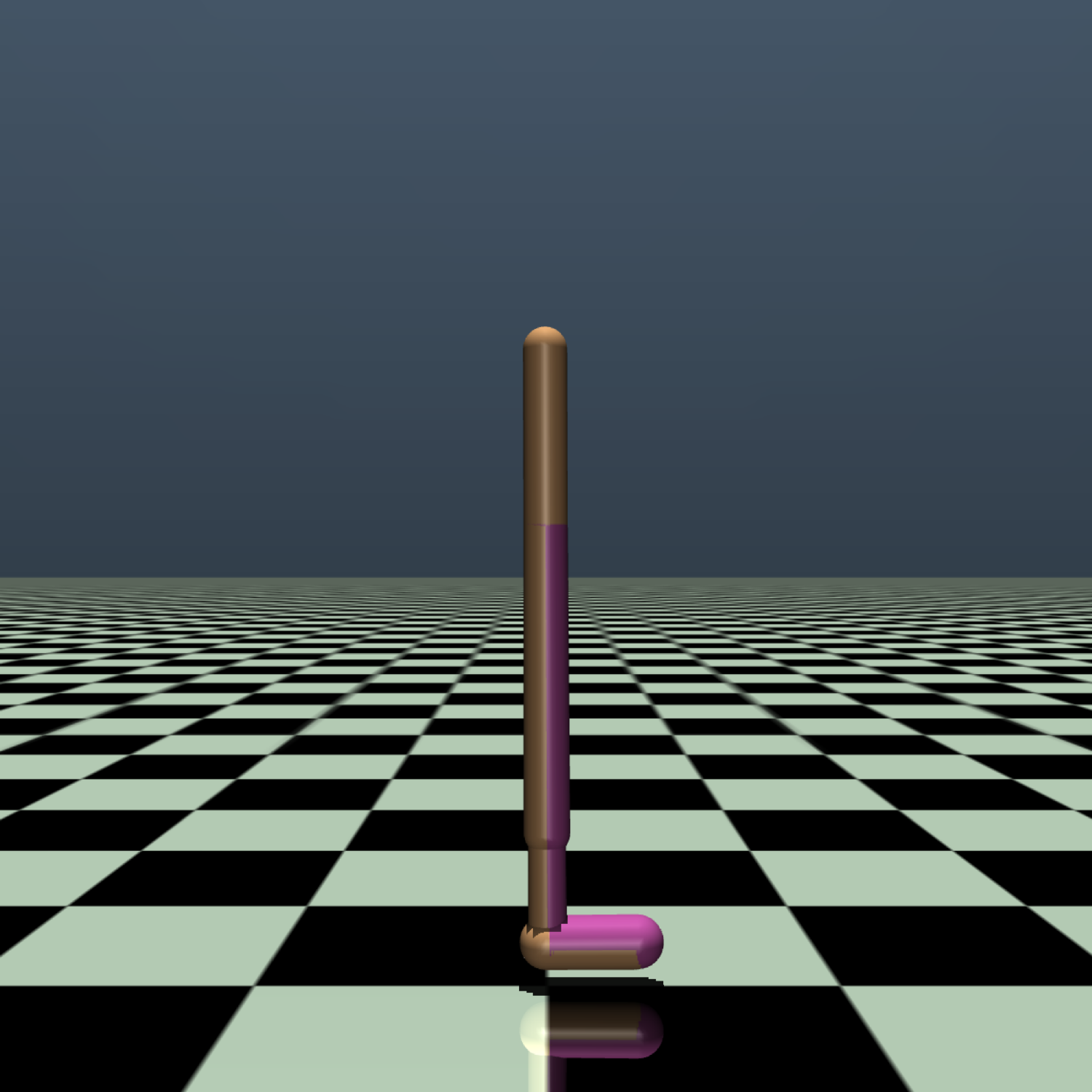}
    \caption{leg (M)}
  \end{subfigure}
  \begin{subfigure}[b]{0.24\textwidth}
    \centering\includegraphics[width=\linewidth]{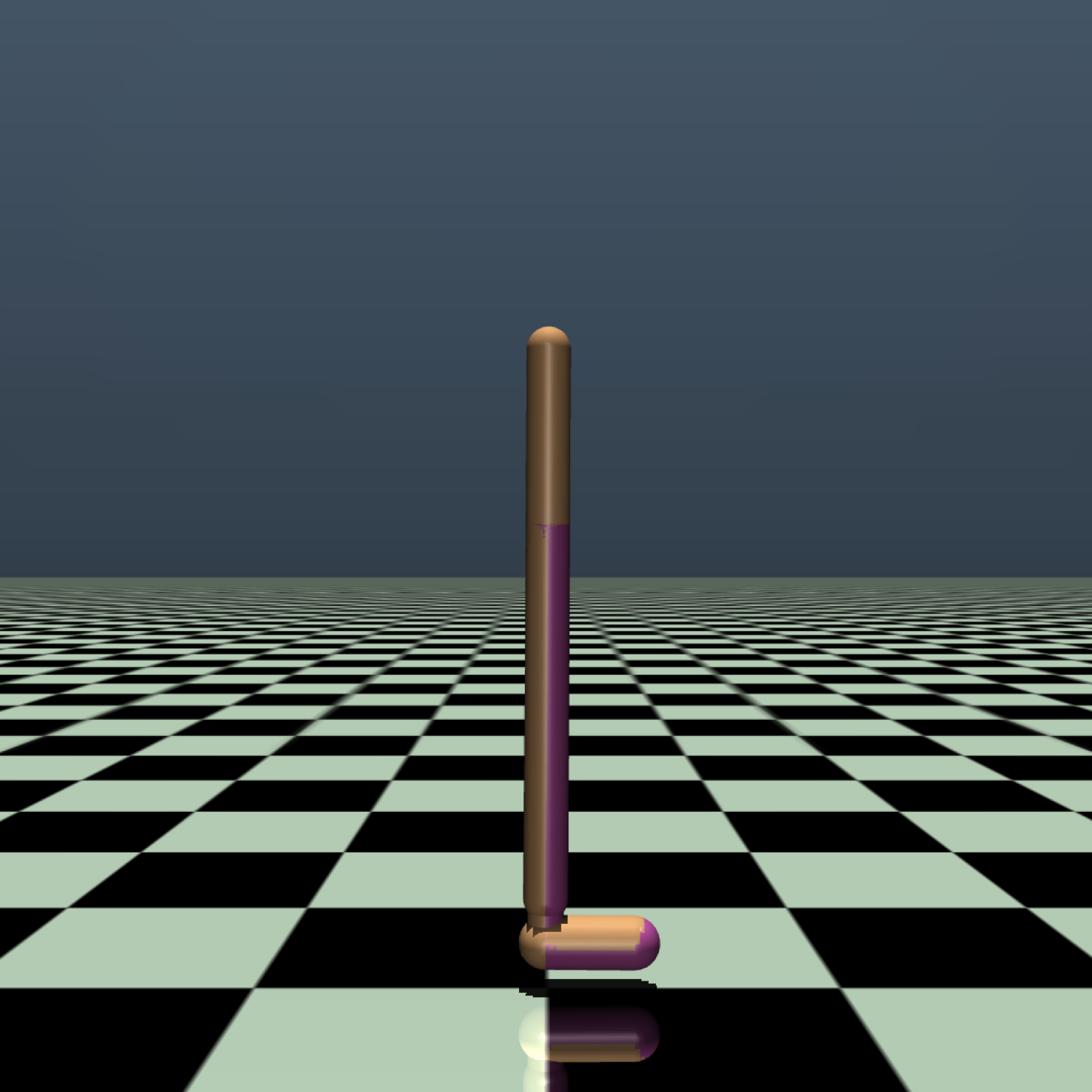}
    \caption{leg (H)}
  \end{subfigure}
  \begin{subfigure}[b]{0.24\textwidth}
    \centering\includegraphics[width=\linewidth]{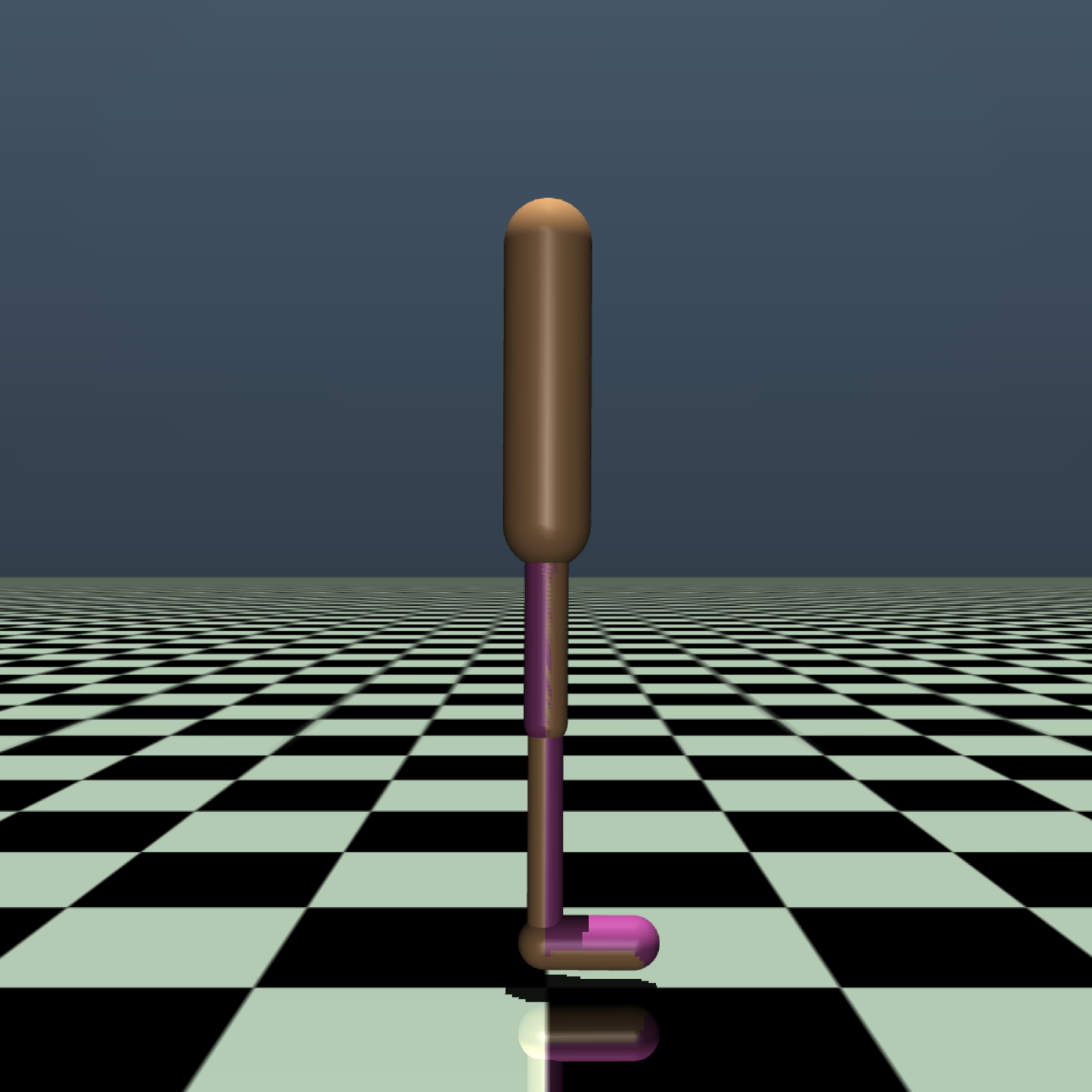}
    \caption{torso (M)}
  \end{subfigure}
  \begin{subfigure}[b]{0.24\textwidth}
    \centering\includegraphics[width=\linewidth]{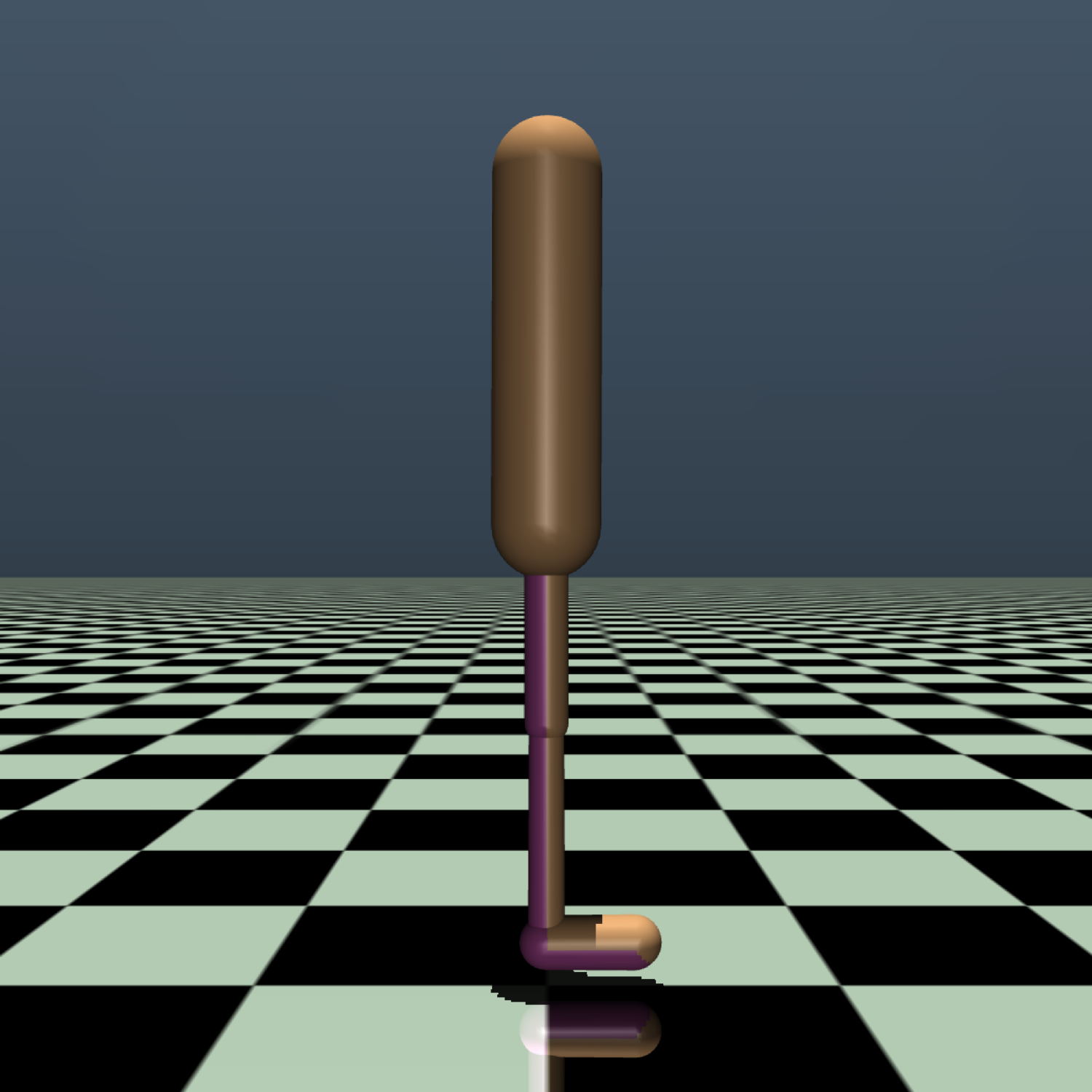}
    \caption{torso (H)}
  \end{subfigure}
  \caption{Visualization of morphology shift environments for Walker2d.}
  \label{fig:morph-walker2d}
\end{figure}

\subsection{Navigation tasks}

In this section, we present the detailed modification in the AntMaze map to construct the target environment. In this setting, we follow the ODRL benchmark and consider the medium size with six different target maps. Visualization of the map can be found in \Cref{fig:antmaze-medium}.

\begin{figure}[!htbp]
  \centering
  \setlength{\tabcolsep}{2pt}
  \renewcommand{\arraystretch}{0}
    \begin{subfigure}[b]{0.32\textwidth}
    \centering\includegraphics[width=\linewidth]{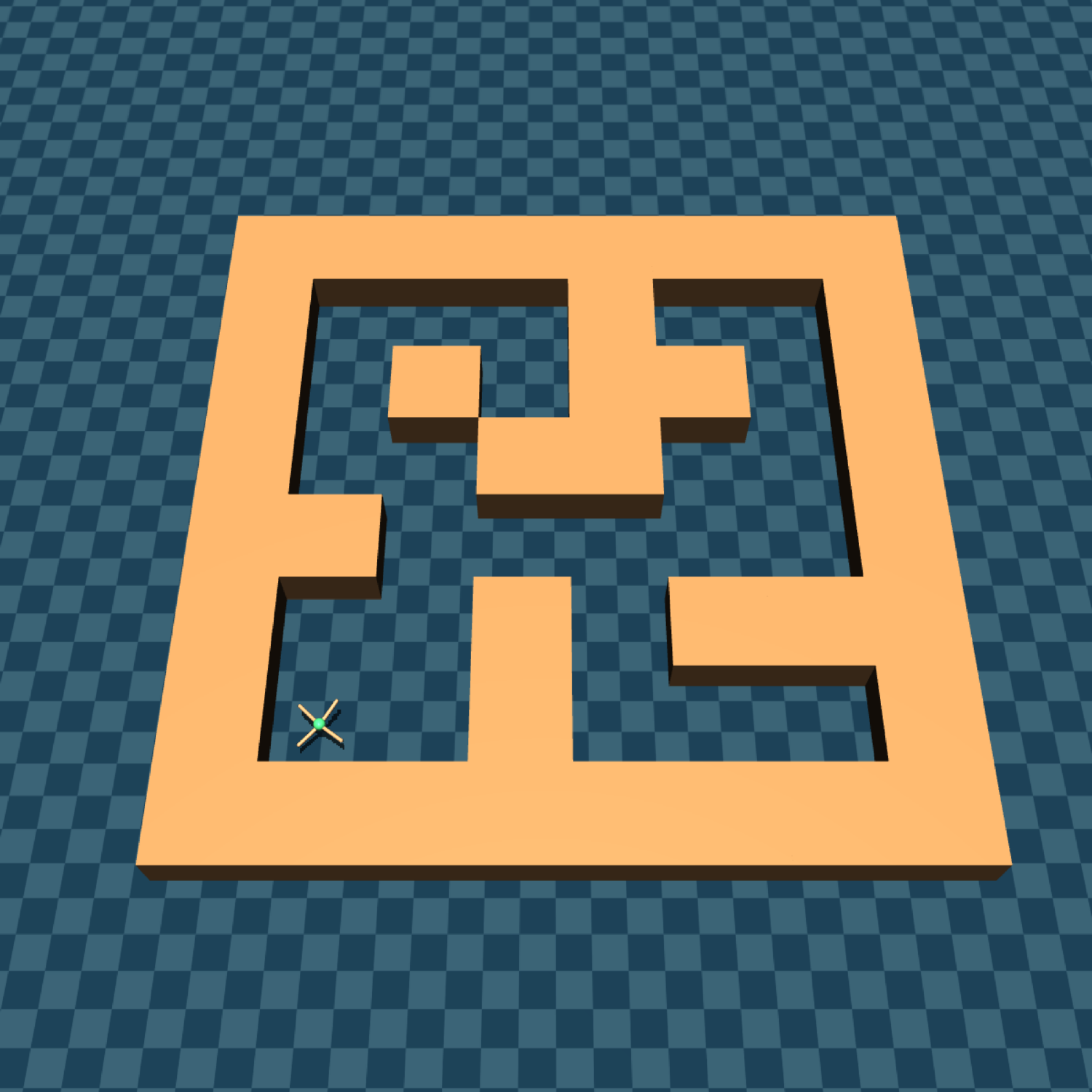}
    \caption{antmaze-medium-1}
  \end{subfigure}
  \begin{subfigure}[b]{0.32\textwidth}
    \centering\includegraphics[width=\linewidth]{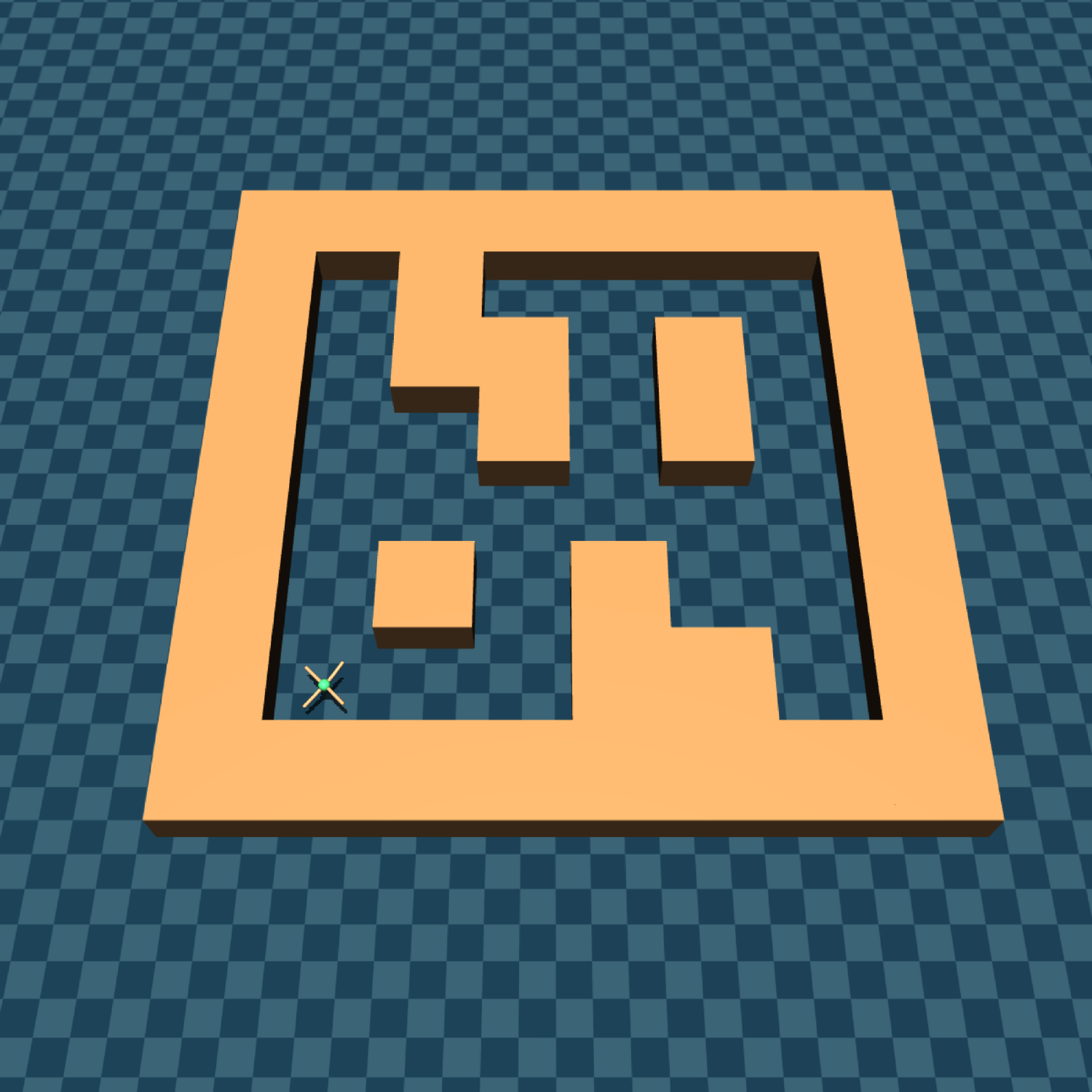}
    \caption{antmaze-medium-2}
  \end{subfigure}
  \begin{subfigure}[b]{0.32\textwidth}
    \centering\includegraphics[width=\linewidth]{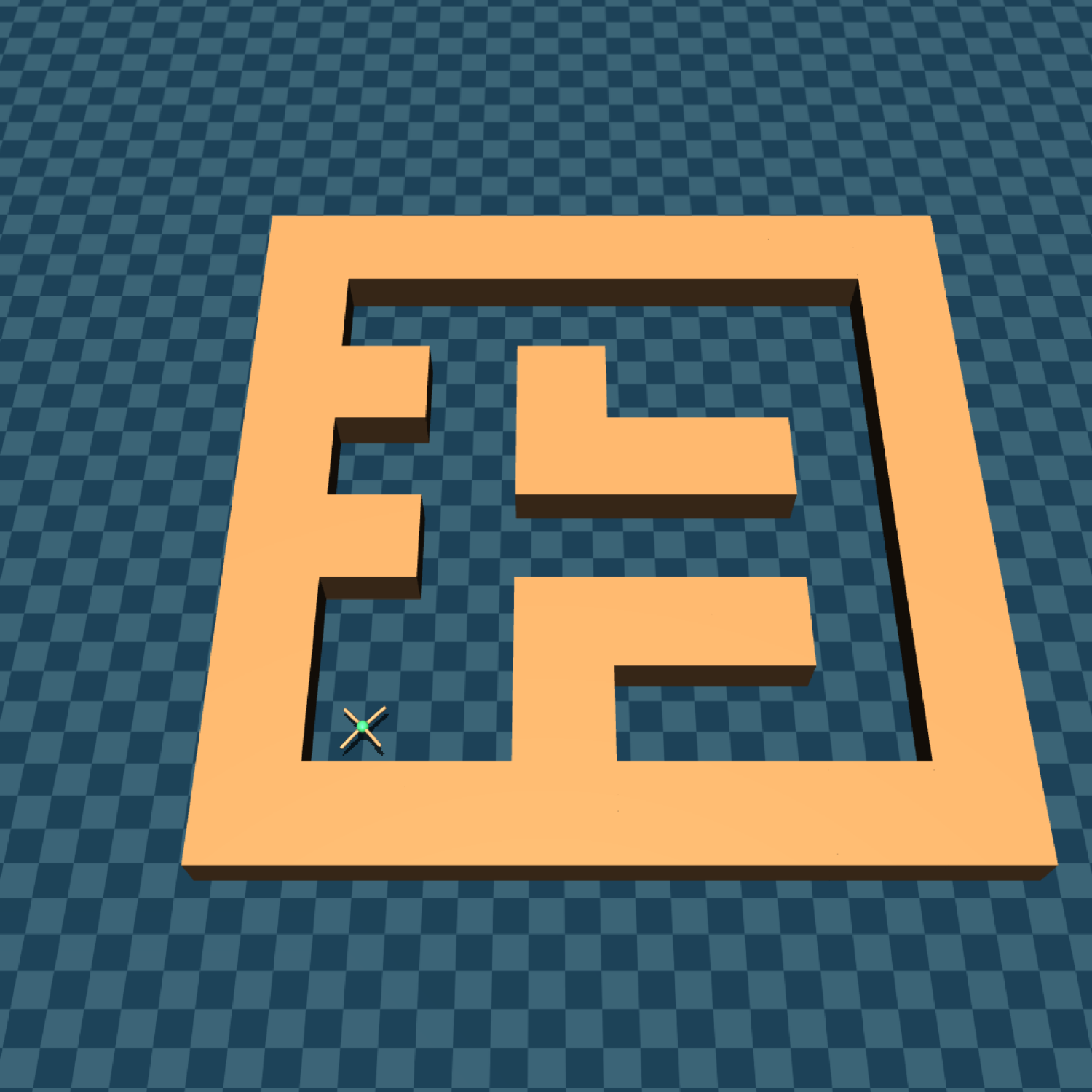}
    \caption{antmaze-medium-3}
  \end{subfigure}
  \begin{subfigure}[b]{0.32\textwidth}
    \centering\includegraphics[width=\linewidth]{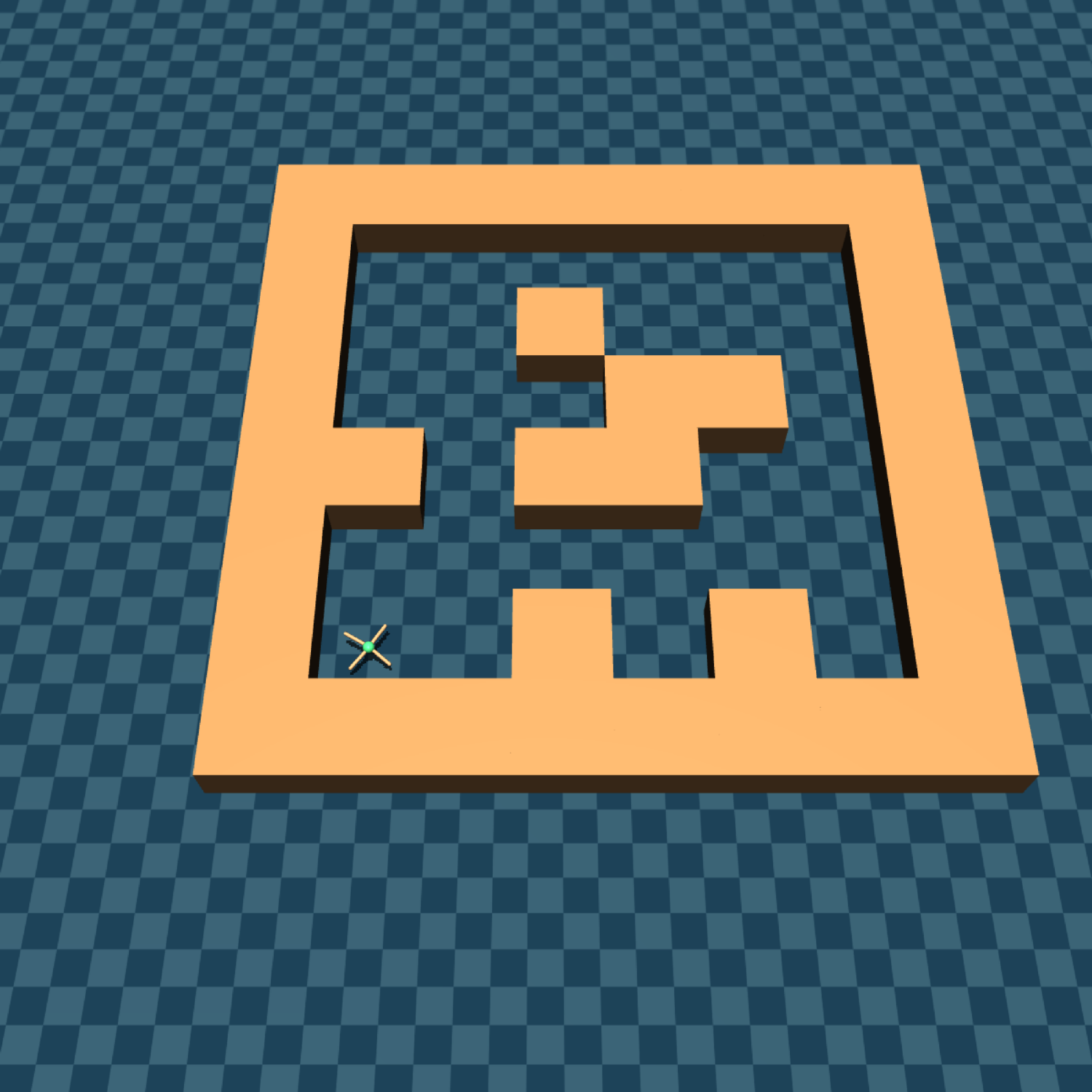}
    \caption{antmaze-medium-4}
  \end{subfigure}
  \begin{subfigure}[b]{0.32\textwidth}
    \centering\includegraphics[width=\linewidth]{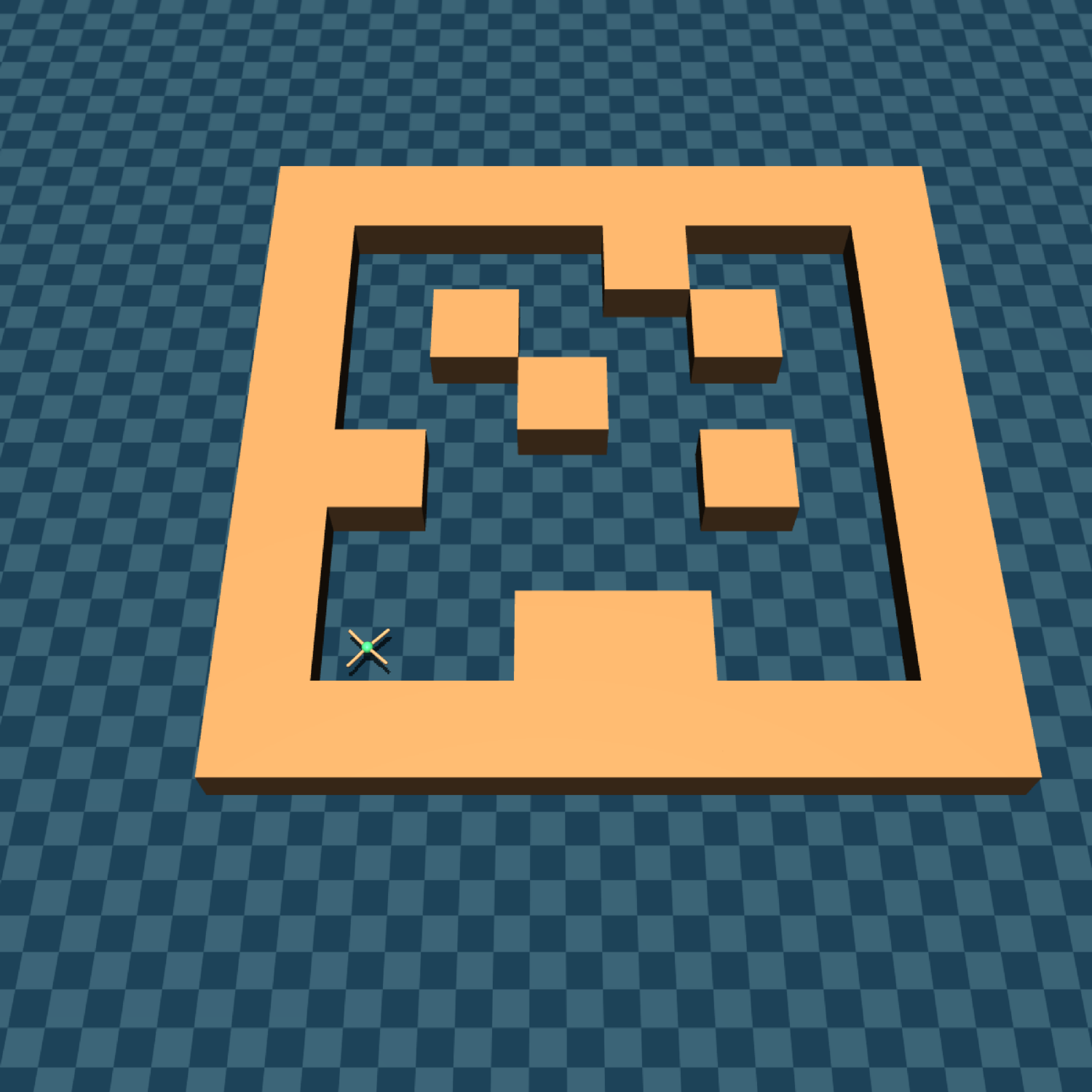}
    \caption{antmaze-medium-5}
  \end{subfigure}
  \begin{subfigure}[b]{0.32\textwidth}
    \centering\includegraphics[width=\linewidth]{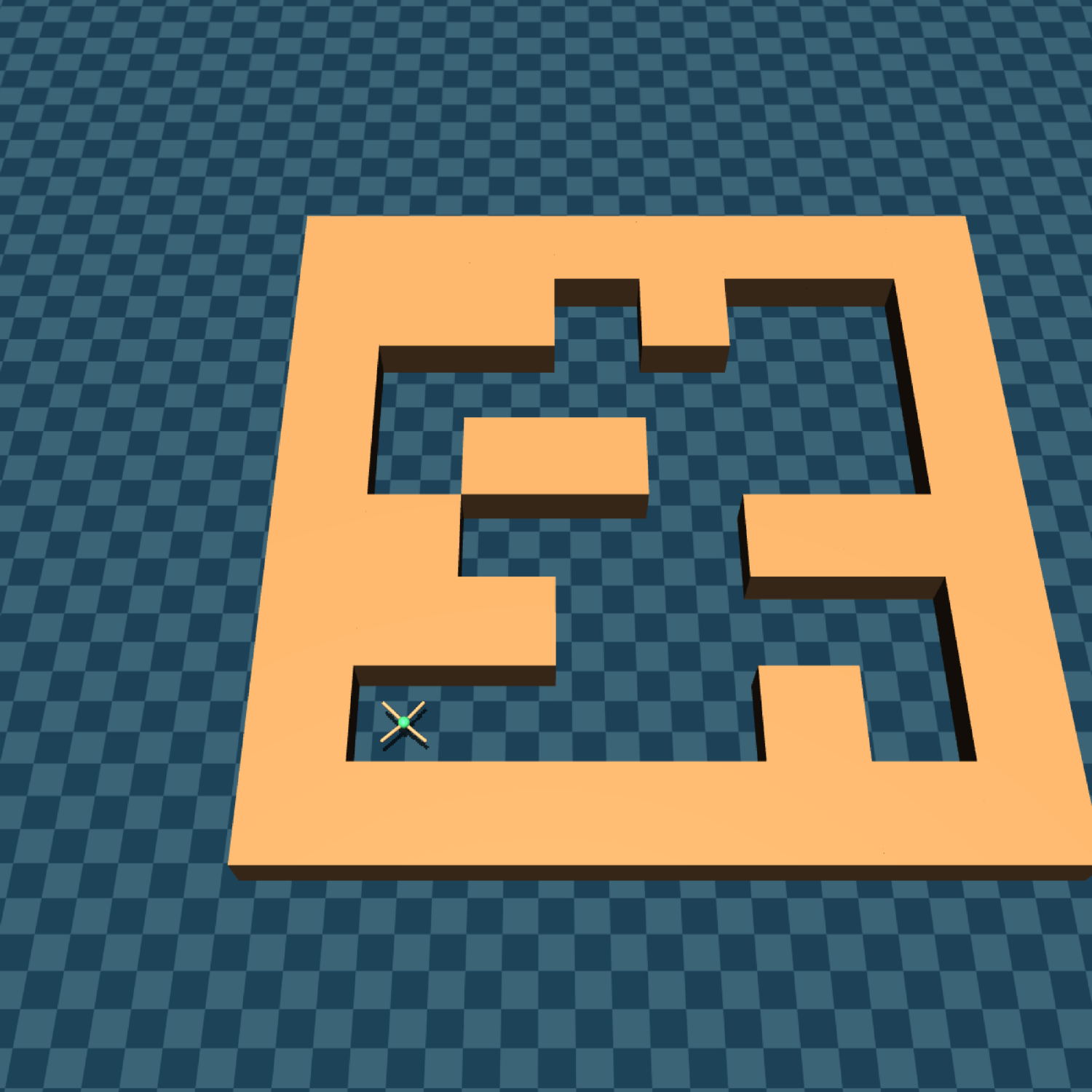}
    \caption{antmaze-medium-6}
  \end{subfigure}
  \caption{Visualization of the AntMaze map structure. We consider medium size map with 6 different map layouts. %
  } \label{fig:antmaze-medium}
\end{figure}

\subsection{Manipulation tasks}
In this section, we discuss the detailed modifications in the Adroit environment. In this setting, we follow the ODRL benchmark using the \textit{pen} and \textit{door} as the tasks. The \textit{pen} task is to control the 24-DoF shadow hand robot to twirl a pen, and the \textit{door} task is to open a door. We consider the two types of dynamics shifts under this environment, denoted as the kinematic shift and the morphology task. We only consider the \textit{medium} and \textit{hard} level shift. Visualizations can be found in \Cref{fig:adroit-pen-morph} and \Cref{fig:adroit-door-morph}.

\paragraph{Kinematic Shift.} The kinematic shift in the Adroit robot hand occurs at all finger joints of the index finger and the thumb. We make constraints on the range of torsos. We use two types of shift levels, denoted as medium and hard level for the kinematic shift.

\paragraph{Morphology Shift.} The morphology shift in the Adroit robot hand occurs on the fingers. We shrink the sizes of these fingers with the medium and hard levels, which multiply by 0.25 and 0.125 for the size of the fingers, respectively.

\begin{figure}[!htbp]
  \centering
  \setlength{\tabcolsep}{2pt}
  \renewcommand{\arraystretch}{0}
  \begin{subfigure}[b]{0.24\textwidth}
    \centering\includegraphics[width=\linewidth]{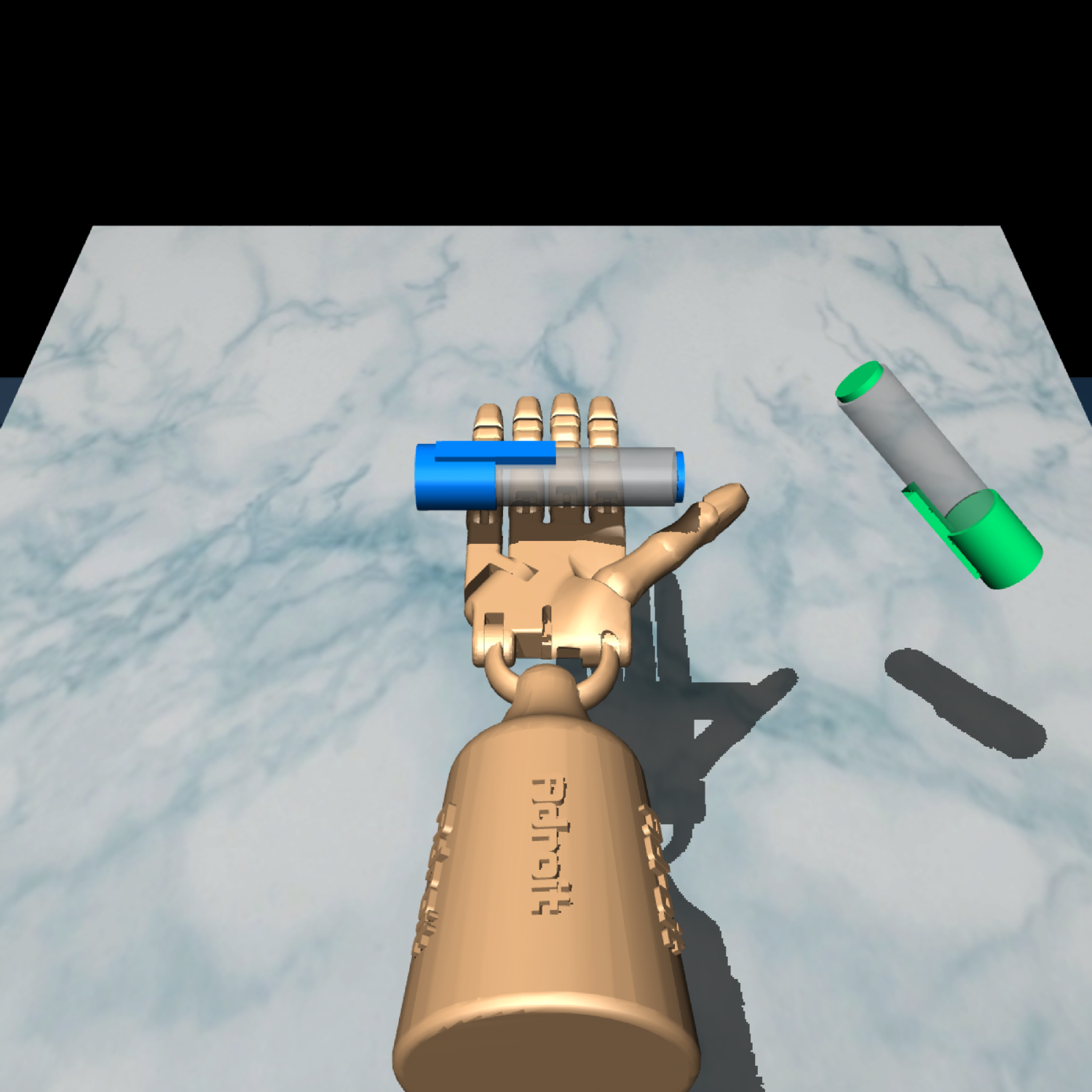}
    \caption{broken-joint (M)}
  \end{subfigure}
  \begin{subfigure}[b]{0.24\textwidth}
    \centering\includegraphics[width=\linewidth]{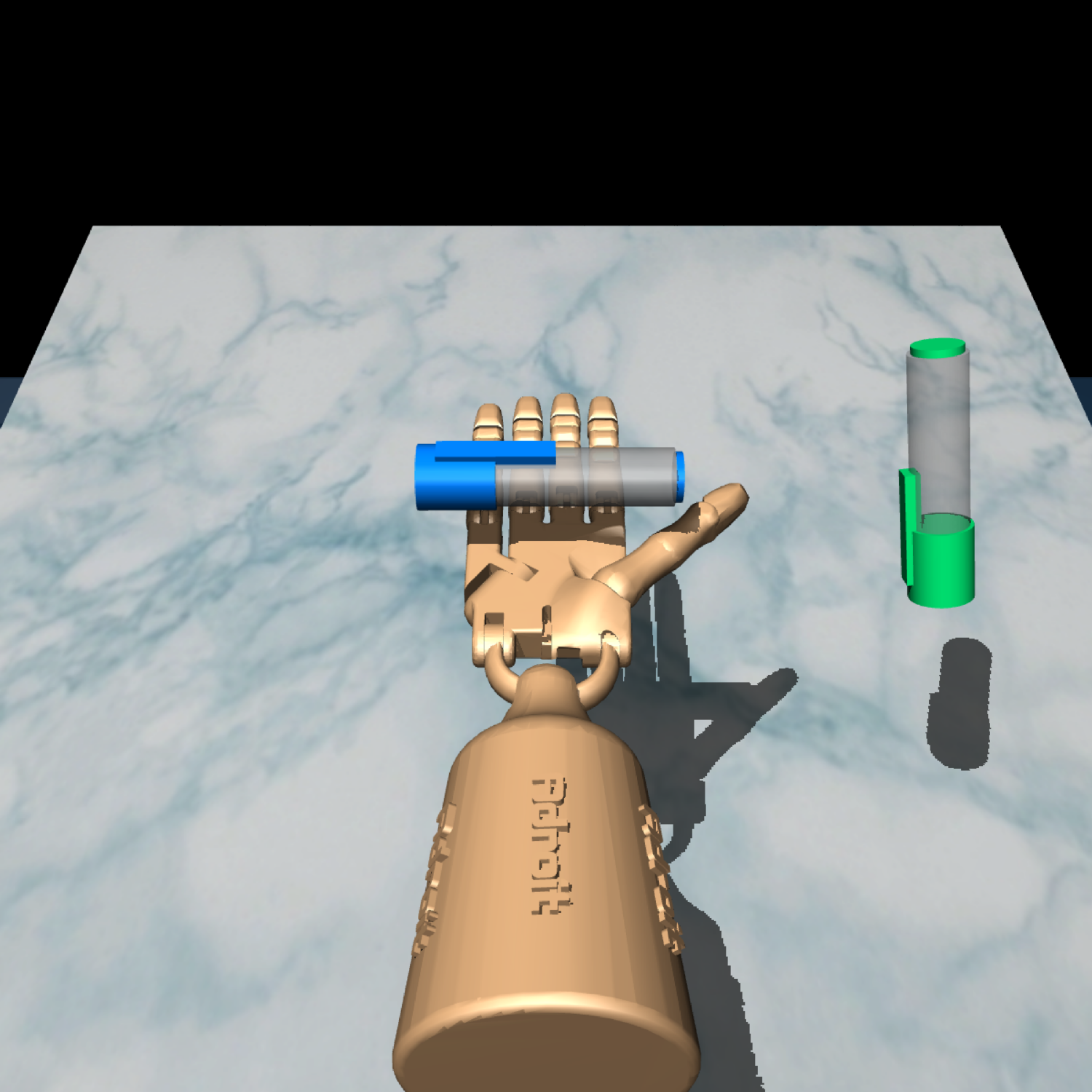}
    \caption{broken-joint (H)}
  \end{subfigure}
  \begin{subfigure}[b]{0.24\textwidth}
    \centering\includegraphics[width=\linewidth]{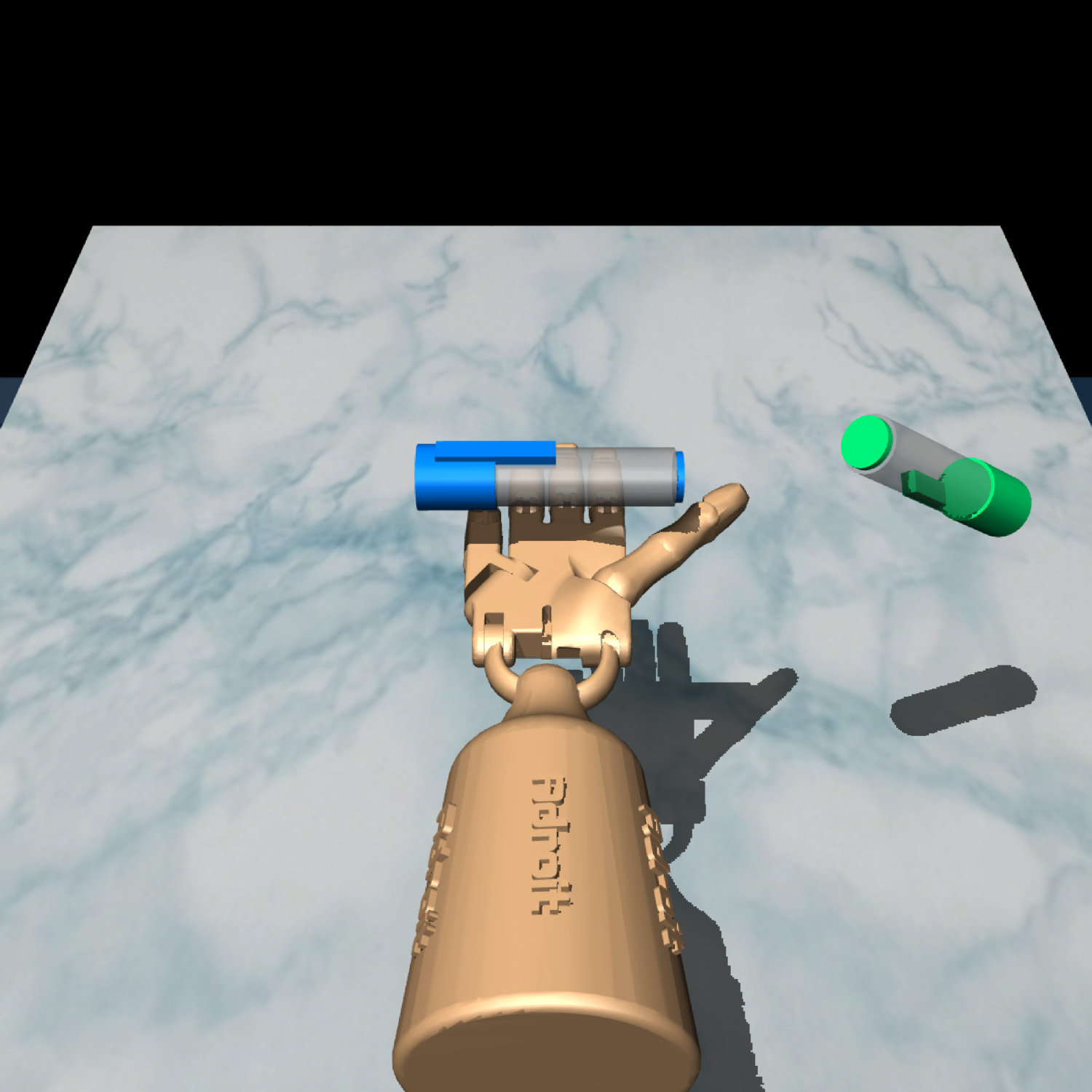}
    \caption{shrink-finger (M)}
  \end{subfigure}
  \begin{subfigure}[b]{0.24\textwidth}
    \centering\includegraphics[width=\linewidth]{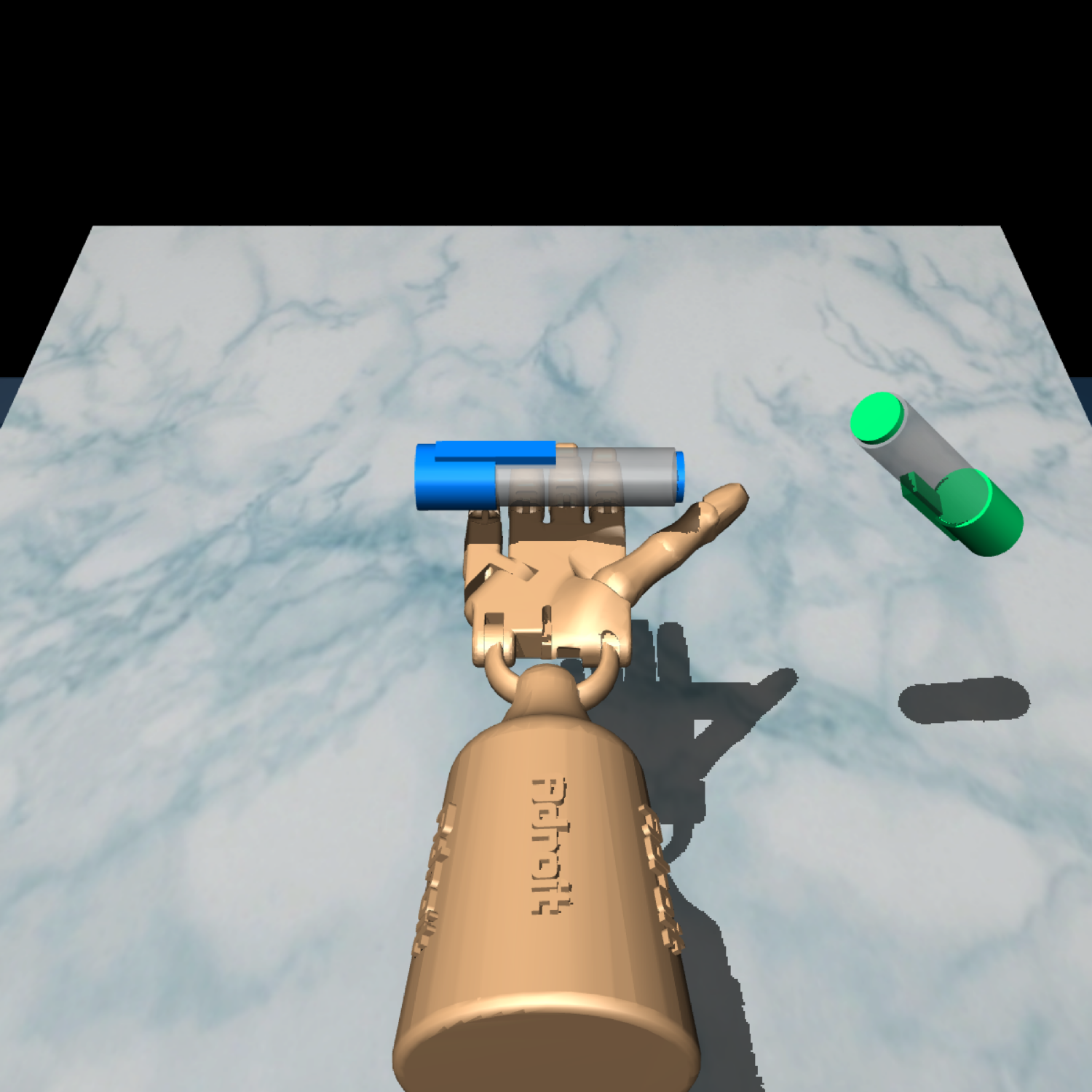}
    \caption{shrink-finger (H)}
  \end{subfigure}
  \caption{Visualization of morphology and kinematic shift environments for Adroit Pen.}
  \label{fig:adroit-pen-morph}
\end{figure}

\begin{figure}[!htbp]
  \centering
  \setlength{\tabcolsep}{2pt}
  \renewcommand{\arraystretch}{0}
  \begin{subfigure}[b]{0.24\textwidth}
    \centering\includegraphics[width=\linewidth]{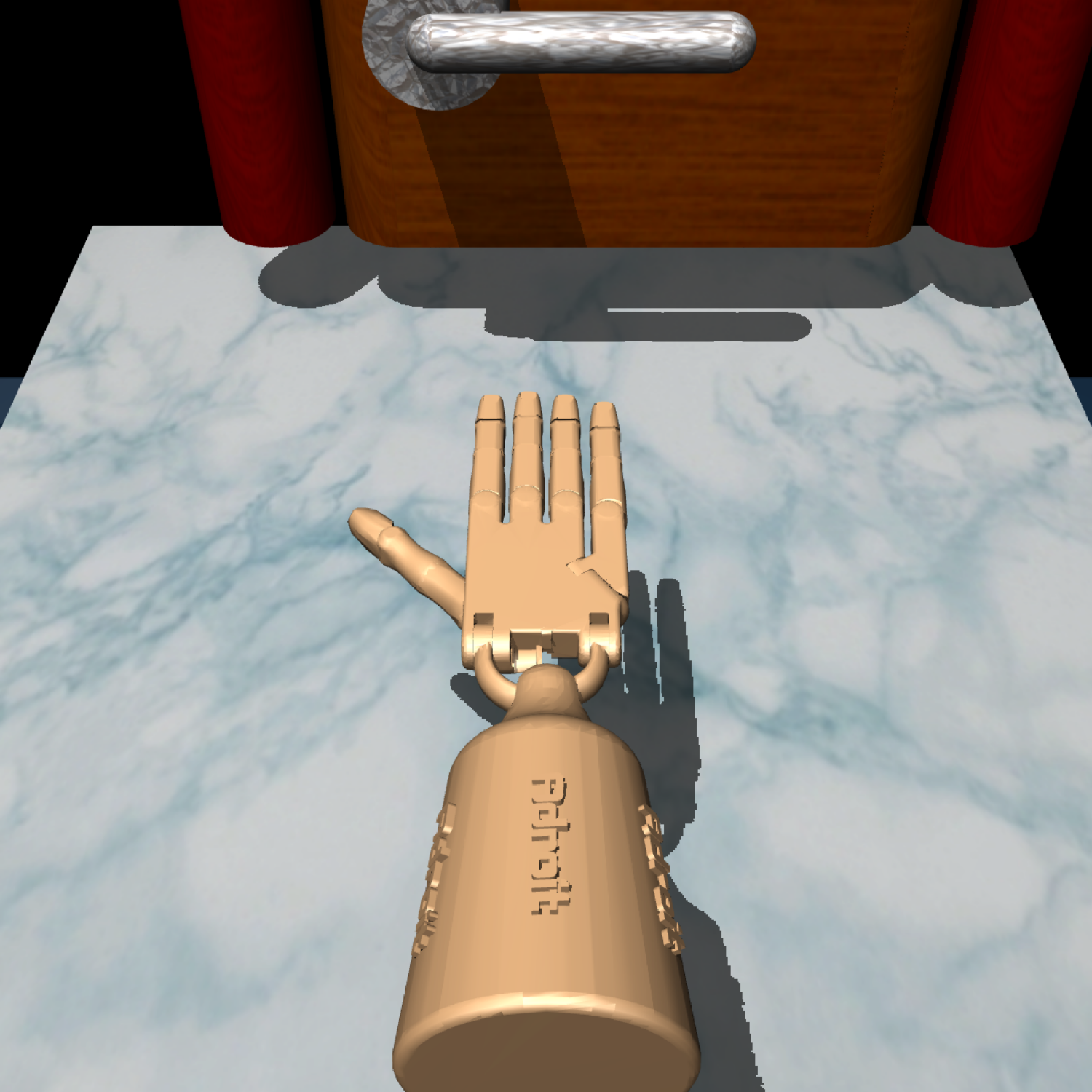}
    \caption{broken-joint (M)}
  \end{subfigure}
  \begin{subfigure}[b]{0.24\textwidth}
    \centering\includegraphics[width=\linewidth]{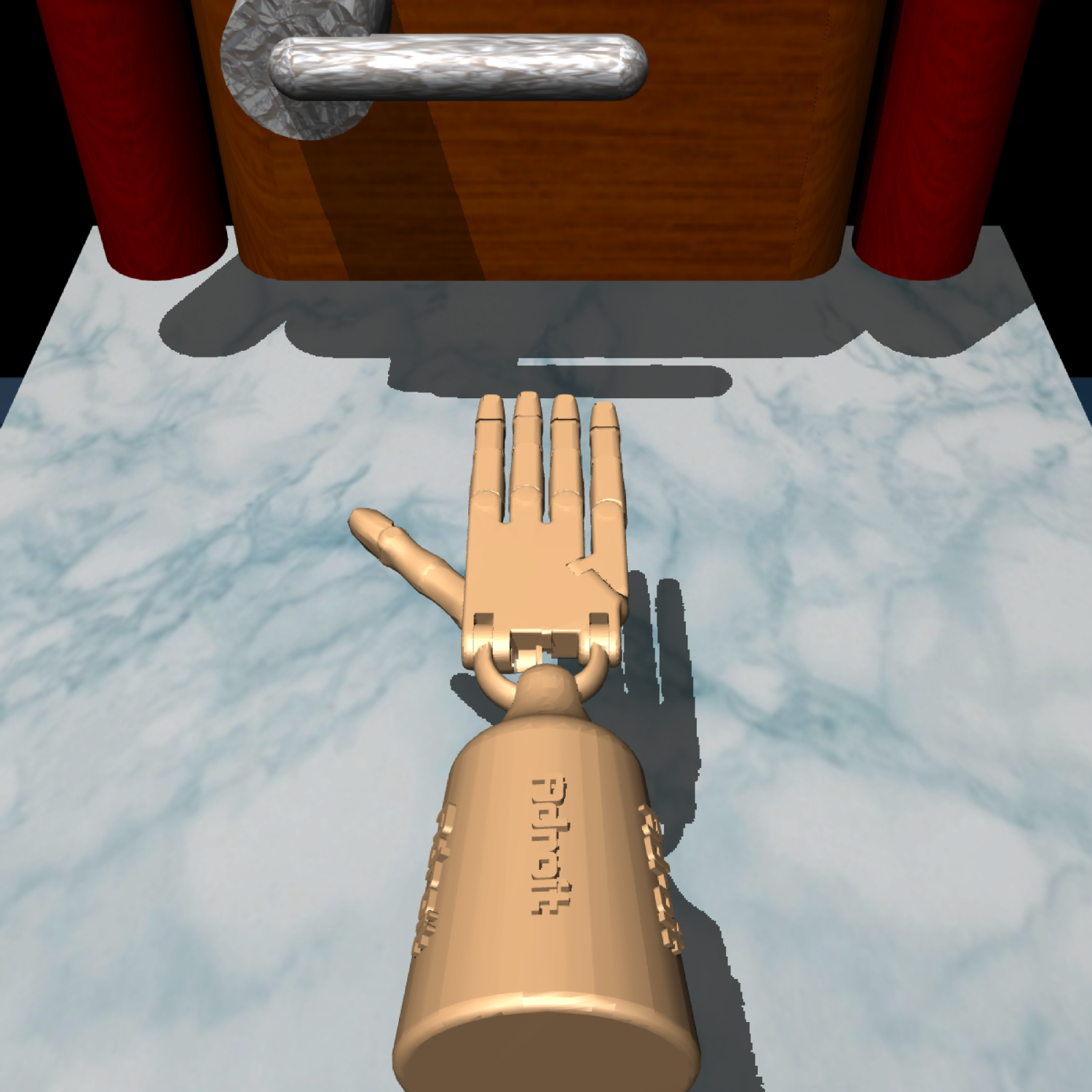}
    \caption{broken-joint (H)}
  \end{subfigure}
  \begin{subfigure}[b]{0.24\textwidth}
    \centering\includegraphics[width=\linewidth]{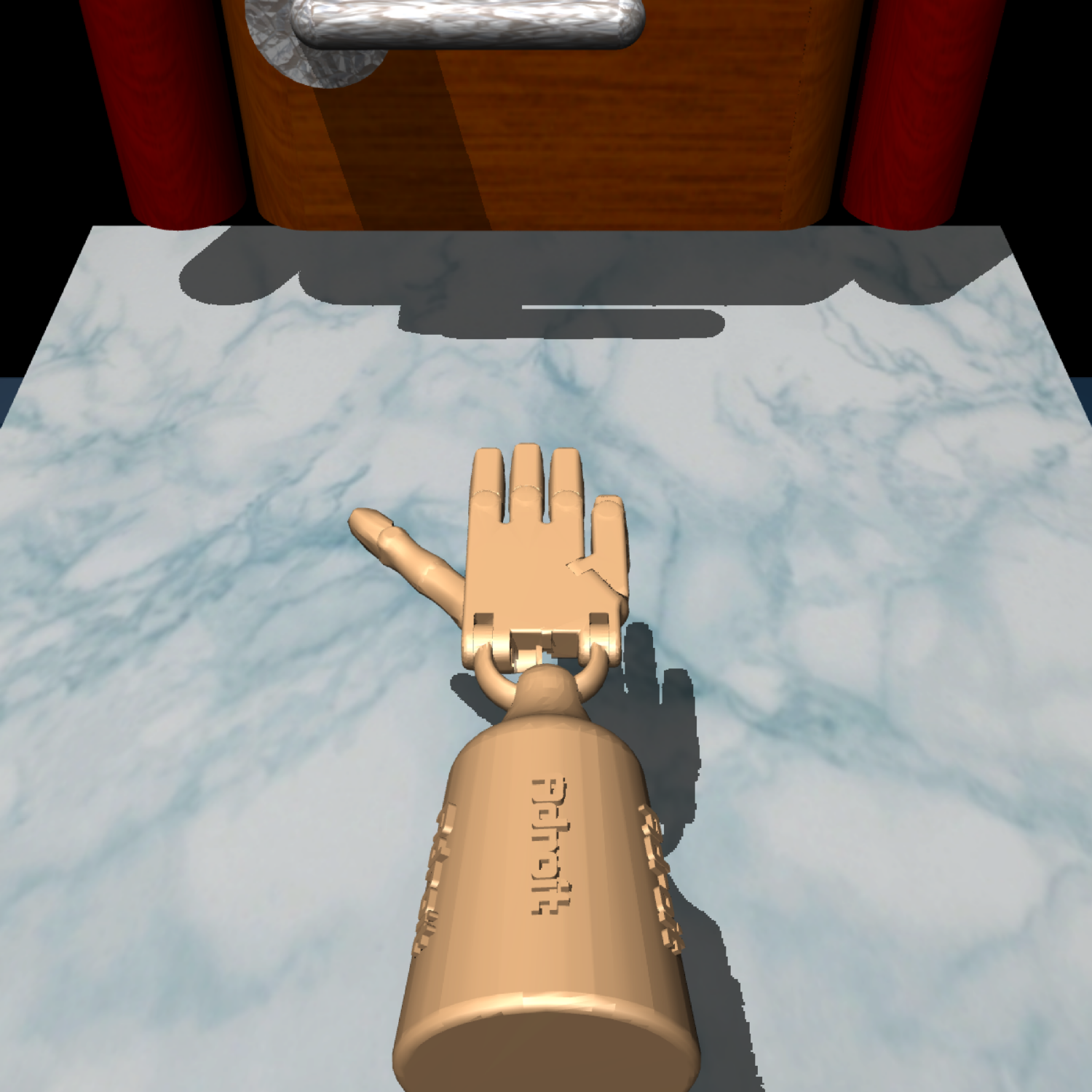}
    \caption{shrink-finger (M)}
  \end{subfigure}
  \begin{subfigure}[b]{0.24\textwidth}
    \centering\includegraphics[width=\linewidth]{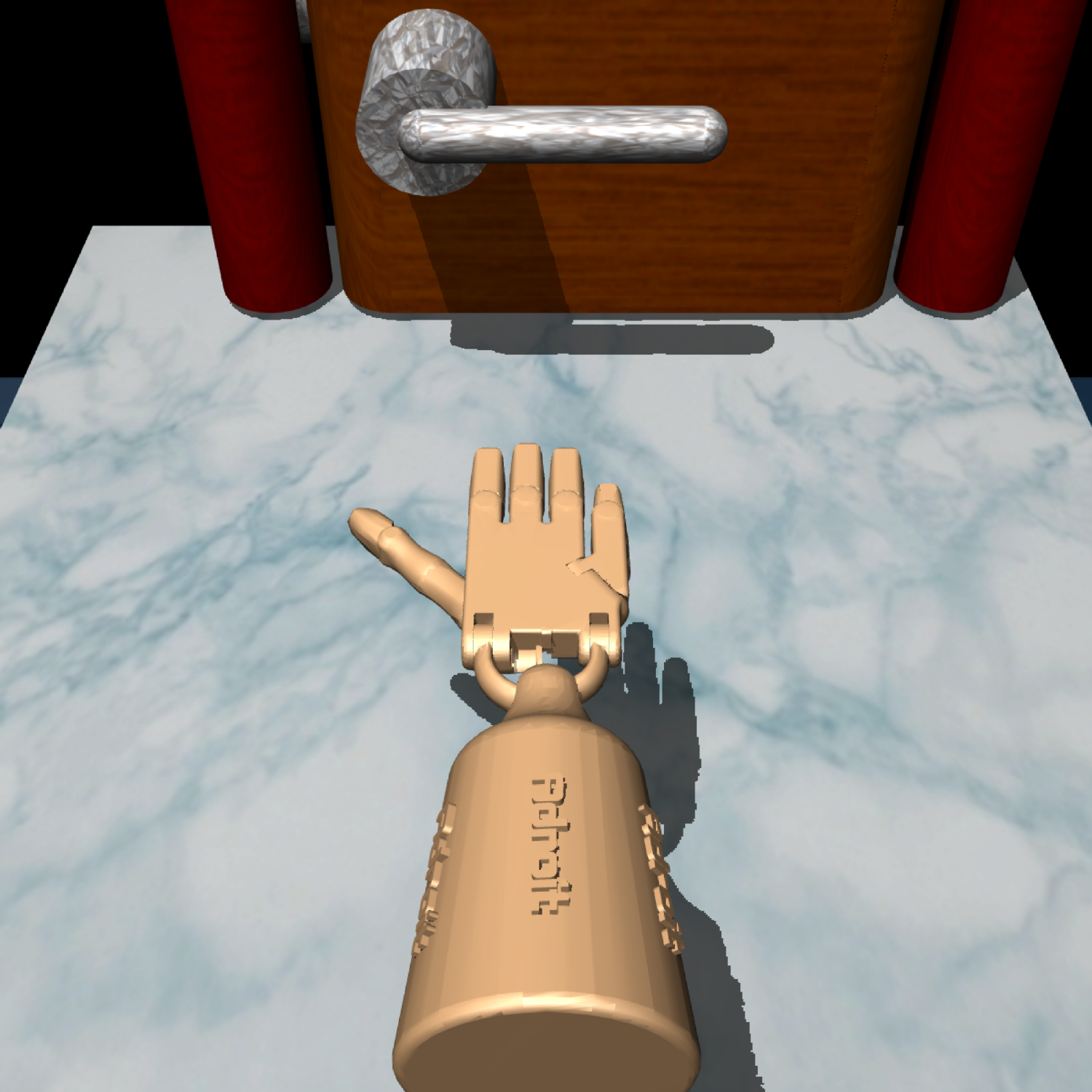}
    \caption{shrink-finger (H)}
  \end{subfigure}
  \caption{Visualization of morphology and kinematic shift environments for Adroit Door.}
  \label{fig:adroit-door-morph}
\end{figure}

\subsection{Datasets}
Note that we follow D4RL and adopt the \textit{normalized score} metric to better characterize the agent’s performance across different tasks. The normalized score in the target domain is defined as:
\begin{align*}
\mathrm{NS} = \frac{J - J_r}{J_e - J_r} \times 100,
\end{align*}
where $J$ denotes the return achieved by the agent in the target domain, and $J_r$, $J_e$ represent the returns obtained by the random and expert policies in the same domain, respectively. 

We summarize the reference scores of $J_r$ and $J_e$ under different dynamics shift scenarios in \Cref{tab:ref_scores}, \Cref{tab:ref_scores_antmaze} and \Cref{tab:ref_scores_adroit}.

\begin{table}[!htbp]
\centering
\caption{The referenced minimum and maximum scores for MuJoCo datasets under various dynamics shift scenarios. These reference scores are used to compute normalized performance in the target domain.}
\label{tab:ref_scores}
\resizebox{\textwidth}{!}{
\begin{tabular}{lccc}
\toprule
\textbf{Task Name} & \textbf{Dynamics Shift Type} & \textbf{Reference Min Score ($J_r$)} & \textbf{Reference Max Score ($J_e$)} \\
\midrule
\multirow{9}{*}{HalfCheetah} 
 & Gravity - 0.1    & -280.18 & 2466.85 \\
 & Gravity - 0.5     & -280.18 & 9509.15 \\
 & Gravity - 2.0     & -280.18 & 9509.15 \\
 & Gravity - 5.0     & -280.18 & 3756.24 \\
 & Friction - 0.1    & -280.18 & 41696.55 \\
 & Friction - 0.5   & -280.18 & 7357.07 \\
 & Friction - 2.0   & -280.18 & 11255.97 \\
 & Friction - 5.0   & -280.18 & 10199.33 \\
 & Local  & -280.18 & 12135.0  \\
\midrule
\multirow{9}{*}{Hopper} 
 & Gravity - 0.1    &-26.34  & 3234.3 \\
 & Gravity - 0.5     & -26.34 & 3234.3 \\
 & Gravity - 2.0     & -26.34 & 3234.3 \\
 & Gravity - 5.0     & -26.34 & 3234.3 \\
  & Friction - 0.1    & -26.34 & 3234.3 \\
 & Friction - 0.5   & -26.34 & 3234.3 \\
 & Friction - 2.0   & -26.34 & 3234.3 \\
 & Friction - 5.0   & -26.34 & 3234.3 \\
 & Local  & -20.272305 & 3234.3  \\
\midrule
\multirow{9}{*}{Walker2d} 
 & Gravity - 0.1    & 10.08 & 2074.90 \\
 & Gravity - 0.5     & 10.08 & 5194.71 \\
 & Gravity - 2.0     & 10.08 & 5056.45 \\
 & Gravity - 5.0     & 10.08 & 3665.39 \\
  & Friction - 0.1    & 10.08 &  3360.18 \\
 & Friction - 0.5   & 10.08 & 4229.35 \\
 & Friction - 2.0   & 10.08 & 5180.04 \\
 & Friction - 5.0   & 10.08 & 4988.84 \\
 & Local  & 1.629008   & 4592.3 \\
\midrule
\multirow{9}{*}{Ant} 
 & Gravity - 0.1    & -325.6 &  2782.09 \\
 & Gravity - 0.5     & -325.6 & 4317.07 \\
 & Gravity - 2.0     & -325.6 & 6705.12 \\
 & Gravity - 5.0     & -325.6 & 6226.89 \\
 & Friction - 0.1    & -325.6 & 7938.96 \\
 & Friction - 0.5   & -325.6 & 8301.34 \\
 & Friction - 2.0   & -325.6 & 5167.38 \\
 & Friction - 5.0   & -325.6 & 4545.02 \\
  & Local  & -325.6 & 3879.7 \\
\bottomrule
\end{tabular}}
\end{table}

\begin{table}[h]
\centering
\caption{The referenced minimum and maximum scores for AntMaze medium datasets under various dynamics shift scenarios. These reference scores are used to compute normalized performance in the target domain.}
\label{tab:ref_scores_antmaze}
\resizebox{\textwidth}{!}{
\begin{tabular}{lccc}
\toprule
\textbf{Task Name} & \textbf{Dynamics Shift Type} & \textbf{Reference Min Score ($J_r$)} & \textbf{Reference Max Score ($J_e$)} \\
\midrule
\multirow{6}{*}{AntMaze-medium} 
 & 1    & 0.0 & 1.0 \\
 & 2    & 0.0 & 1.0 \\
 & 3     & 0.0 & 1.0 \\
 & 4    & 0.0 & 1.0 \\
 & 5   & 0.0 & 1.0 \\
 & 6   & 0.0 & 1.0 \\
\bottomrule
\end{tabular}}
\end{table}

\begin{table}[h]
\centering
\caption{The referenced minimum and maximum scores for Adroit datasets under various dynamics shift scenarios. These reference scores are used to compute normalized performance in the target domain.}
\label{tab:ref_scores_adroit}
\resizebox{\textwidth}{!}{
\begin{tabular}{lccc}
\toprule
\textbf{Task Name} & \textbf{Dynamics Shift Type} & \textbf{Reference Min Score ($J_r$)} & \textbf{Reference Max Score ($J_e$)} \\
\midrule
\multirow{6}{*}{Pen}
 & broken-joint-medium & -12.17 & 6408.38 \\
 & broken-joint-hard   & -12.17 & 6408.38 \\
 & shrink-finger-medium& -12.17 & 6408.38 \\
 & shrink-finger-hard  & -12.17 & 6408.38 \\
\midrule
\multirow{6}{*}{Door}
 & broken-joint-medium & -52.34 & 2880.57 \\
 & broken-joint-hard   & -52.34 & 2880.57 \\
 & shrink-finger-medium& -52.34 & 2880.57 \\
 & shrink-finger-hard  & -52.34 & 2880.57 \\
\bottomrule
\end{tabular}}
\end{table}

\subsection{Technical Details about Baseline Algorithms}
In this section, we describe the implementation details of the baseline methods, following the ODRL benchmark \citep{lyu2024odrlbenchmarkoffdynamicsreinforcement}. Furthermore, we list the hyperparameter setup used for all methods in \Cref{hyperparameter}.
\paragraph{BOSA \citep{liu2024beyond}.} 
BOSA deals with the OOD state--action pairs through a supported policy optimization and addresses the OOD dynamics issue through a supported value optimization by data filtering. 
Specifically, the policy is updated by maximizing the following objective function:
\begin{align*}
\mathcal{L}_{\text{actor}} 
= \mathbb{E}_{s \sim D_{\text{src}} \cup D_{\text{trg}}, \, a \sim \pi_\phi(s)} 
   \big[ Q(s,a) \big], 
\quad \text{s.t.} \quad 
\mathbb{E}_{s \sim D_{\text{src}} \cup D_{\text{trg}}} 
   \big[ \hat{\pi}_{\theta_{\text{offline}}} (\pi_\phi(s)\mid s) \big] > \epsilon.
\end{align*}

Here, $\epsilon$ is the threshold, $\hat{\pi}_{\theta_{\text{offline}}}$ is the learned policy for the combined offline dataset. The value critic is updated with
\begin{align}
\mathcal{L}_{\text{critic}} 
&= \mathbb{E}_{(s,a)\sim D_{\text{src}}} \big[ Q(s,a) \big] \notag \\
&\quad + \mathbb{E}_{(s,a,r,s')\sim D_{\text{src}} \cup D_{\text{trg}}, \, a' \sim \pi_\phi(\cdot \mid s')} 
   \Big[ I\left(\hat{p}_{\text{trg}}(s' \mid s,a) > \epsilon' \right) 
   \big(Q_{\theta_i}(s,a) - y \big)^2 \Big] ,\notag
\end{align}
where $I(\cdot)$ is the indicator function, 
$\hat{p}_{\text{trg}}(s' \mid s,a) = \argmax \, 
\mathbb{E}_{(s,a,s') \sim D_{\text{trg}}}\big[\log \hat{p}_{\text{trg}}(s' \mid s,a)\big]$ 
is the estimated target domain dynamics and $\epsilon'$ is the threshold. The above objective is transformed to a relaxed Lagrangian form for optimization. Both
the behavior policy and the dynamics models are modeled by the CVAE. We implement BOSA by
following the instructions in the original paper. BOSA is trained for 500K gradient steps in practice
by drawing samples from both the source domain dataset and the target domain dataset.

\paragraph{IQL \citep{kostrikov2021offline}.}
IQL learns the state value function and state--action value function 
simultaneously by minimizing the following expectile regression loss function
\begin{align*}
\mathcal{L}_{V} 
= \mathbb{E}_{(s,a)\sim D_{\text{src}} \cup D_{\text{trg}}} 
   \big[ L_{2}^{\tau}\big(Q_\theta(s,a) - V_\psi(s)\big) \big],
\end{align*}
where $L_{2}^{\tau}(u) = \lvert \tau - I(u < 0)\rvert  \lVert u \rVert^{2}$, 
$I(\cdot)$ is the indicator function, 
and $\theta$ is the target network parameter.
The state--action value function is then updated by
\begin{align*}
\mathcal{L}_{Q} 
= \mathbb{E}_{(s,a,r,s') \sim D_{\text{src}} \cup D_{\text{trg}}} 
   \left[ \big( r(s,a) + \gamma V_\psi(s') - Q_\theta(s,a) \big)^2 \right].
\end{align*}

The advantage function is $A(s,a) = Q(s,a) - V(s)$. 
The policy is optimized by the advantage-weighted behavior cloning
\begin{align*}
\mathcal{L}_{\text{actor}} 
= \mathbb{E}_{(s,a) \sim D_{\text{src}} \cup D_{\text{trg}}} 
   \big[ \exp\big(\beta \cdot A(s,a)\big) \log \pi_\phi(a \mid s) \big],
\end{align*}
where $\beta$ is the inverse temperature coefficient. We implement IQL by following its official codebase
and also train the IQL agent on offline datasets from both domains for 500K gradient steps.

\paragraph{DARA \citep{eysenbach2021offdynamicsreinforcementlearningtraining}.}

DARA is the offline version of DARC \citep{eysenbach2021offdynamicsreinforcementlearningtraining}. 
It trains two domain classifiers 
$q_{\theta_{\text{SAS}}}(\text{target} \mid s_t, a_t, s_{t+1})$, 
$q_{\theta_{\text{SA}}}(\text{target} \mid s_t, a_t)$ 
with the following objectives
\begin{align}
\mathcal{L}(\theta_{\text{SAS}}) 
&= \mathbb{E}_{D_{\text{tar}}}\left[ \log q_{\theta_{\text{SAS}}}(\text{target} \mid s_t, a_t, s_{t+1}) \right] 
   + \mathbb{E}_{D_{\text{src}}}\left[ \log \big( 1 - q_{\theta_{\text{SAS}}}(\text{target} \mid s_t, a_t, s_{t+1}) \big) \right], \notag\\
\mathcal{L}(\theta_{\text{SA}}) 
&= \mathbb{E}_{D_{\text{tar}}}\left[ \log q_{\theta_{\text{SA}}}(\text{target} \mid s_t, a_t) \right] 
   + \mathbb{E}_{D_{\text{src}}}\left[ \log \big( 1 - q_{\theta_{\text{SA}}}(\text{target} \mid s_t, a_t) \big) \right],\notag
\end{align}
to estimate the dynamics gap 
\[
\log \frac{P_{\mathcal{M}_{\text{tar}}}(s_{t+1}\mid s_t,a_t)}{P_{\mathcal{M}_{\text{src}}}(s_{t+1}\mid s_t,a_t)}
\]
between the source domain and the target domain. 
DARA approximates this term by leveraging the trained classifiers and proposes to modify the source domain rewards as follows
\begin{align}
\hat{r}_{\text{DARA}} 
&= r - \lambda \times \delta r,\notag \\
\delta r(s_t,a_t) \notag
&= -\log \frac{q_{\theta_{\text{SAS}}}(\text{target} \mid s_t, a_t, s_{t+1}) \, q_{\theta_{\text{SA}}}(\text{source} \mid s_t, a_t)}{q_{\theta_{\text{SAS}}}(\text{source} \mid s_t, a_t, s_{t+1}) \, q_{\theta_{\text{SA}}}(\text{target} \mid s_t, a_t)},\notag
\end{align}
where $\lambda$ is an important hyperparameter that controls the strength of the reward penalty, which is set to be 0.1 by default. After that, DARA is trained on data from both domains for 500K gradient steps. We implement DARA by following the original paper and clip the reward penalty term to lie in $[-10, 10]$. We use IQL as the base algorithm for DARA.

\paragraph{IGDF \citep{wen2024contrastive}.}

IGDF captures the dynamics gap between the source domain and the target domain through contrastive learning. 
It trains a score function $h(\cdot)$ using $(s,a,s'_{\text{tar}}) \sim D_{\text{tar}}$ from the target domain as positive samples, 
and mixed transition $(s,a,s'_{\text{src}})$ as negative samples, where $(s,a) \sim D_{\text{tar}},\, s'_{\text{src}} \sim D_{\text{src}}$. 
The score function is optimized via the following contrastive learning objective:
\begin{align}
\mathcal{L}_{\text{contrastive}} 
= - \mathbb{E}_{(s,a,s'_{\text{tar}})} \mathbb{E}_{S'^{-}}
\left[
    \log \frac{h(s,a,s'_{\text{tar}})}{\sum_{s' \sim S'^{-} \cup s'_{\text{tar}}} h(s,a,s')}
\right], \notag
\end{align}
where $S'^{-}$ is a collection of the next states in negative samples. 
Practically, IGDF adopts two neural networks $\phi(s,a), \psi(s')$ to learn representations of state-action pairs and states, 
and approximate the score function with a linear parameterization of them:
\begin{align}
h(s,a,s') = \exp\big(\phi(s,a)^\top \psi(s')\big). \notag
\end{align}
Based on the measured score function, IGDF proposes to filter out source domain data when training value functions:
\begin{align}
\mathcal{L}_{\text{critic}} 
= \tfrac{1}{2}\, \mathbb{E}_{D_{\text{tar}}}\left[(Q_\theta - \mathcal{T}Q_\theta)^2\right] 
+ \tfrac{1}{2}\, \alpha \cdot h(s,a,s') \, \mathbb{E}_{(s,a,s') \sim D_{\text{src}}} 
    \left[ \mathbf{1}\left(h(s,a,s') > h_{\xi\%}\right) (Q_\theta - \mathcal{T}Q_\theta)^2 \right], \notag
\end{align}
where $\alpha$ is the importance coefficient for weighting the TD error of the source domain data, 
and $\xi$ is the data selection ratio akin to that in OTDF. 
We run IGDF by using its official codebase and use IQL as its backbone.

\paragraph{OTDF \citep{lyu2025cross}.}
OTDF performs off-dynamics offline policy adaptation by aligning the transition dynamics of the source and target domains using optimal transport. It first solves the optimal transport problem using 
\begin{align}
W(u, u') = \min_{\mu \in \mathcal{M}} 
\sum_{t=1}^{|D_{\text{src}}|} \sum_{t'=1}^{|D_{\text{tar}}|} 
C(u_t, u'_{t'}) \, \mu_{t,t'},\notag
\end{align}
given cost function $C$. Suppose the optimal coupling obtained is $\mu^*$, then it defines the deviation between a source domain data point and the target domain dataset as:
\begin{align}
d(u_t) &= - \sum_{t'=1}^{|D_{\text{tar}}|} C(u_t, u'_{t'}) \, \mu^*_{t,t'}, u_t= (s_t^{\text{src}}, a_t^{\text{src}}, (s'_{\text{src}})_t) \sim D_{\text{src}}.\notag
\end{align}
After computing and normalizing $d$, it trains the value function via 
\begin{align}
\mathcal{L}_Q 
= \mathbb{E}_{(s,a,s') \sim D_{\text{tar}}} \Big[ (Q_\theta - \mathcal{T} Q_\theta)^2 \Big] 
+ \mathbb{E}_{(s,a,s') \sim D_{\text{src}}} \Big[ exp(\hat{d}) \cdot \mathbf{1}(d > d_{\xi\%}) \, (Q_\theta - \mathcal{T} Q_\theta)^2 \Big],\notag
\end{align}
where $\hat{d}$ is normalized $d$ and $\mathbf{1}(\cdot)$ is the indicator function.
It then trains a Conditional Variational Autoencoder (CVAE) using loss function
\begin{align}
    \mathcal{L}_{\text{CVAE}} 
= \mathbb{E}_{(s,a) \sim D_{\text{tar}},\, z \sim E_\nu(s,a)}
\Big[ (a - D_\varsigma(s, z))^2 \Big] 
+ D_{\text{KL}}\big( E_\nu(s,a) \,\|\, \mathcal{N}(0, I) \big),\notag
\end{align}
where $E_\nu(s,a)$ is the encoder and $D_\varsigma(s, z)$ is the decoder, to model the target dynamics and generates multiple latent transitions. Finally, they train the policy via 
\begin{align}
\hat{\mathcal{L}}_\pi 
= \mathcal{L}_\pi 
- \beta \, \mathbb{E}_{s \sim D_{\text{src}} \cup D_{\text{tar}}} 
\Bigg[ \log \Bigg( 
\sum_{i=1}^{M} \exp \big( \log \hat{\pi}^i_{\text{tar}}(\pi(\cdot|s) \mid s) \big) 
\Bigg) \Bigg],\notag
\end{align}
where $\hat{\pi}^i_{\text{tar}}$ denotes the $i$-th constructed Gaussian behavior policy in the target domain.
A data selection mechanism filters source samples whose generated transitions closely match the target. We run OTDF using its official codebase. 

\begin{table*}[!htbp]
\centering
\caption{Hyperparameters of \algname and baselines.}
\label{hyperparameter}
\resizebox*{!}{0.99\textheight}{
\begin{tabular}{lll}
\toprule
\textbf{Methods} & \textbf{Hyperparameter} & \textbf{Value} \\
\midrule

\multirow{10}{*}{\textbf{Shared among all methods}} 
 & Actor network & (256, 256) \\
 & Critic network & (256, 256) \\
 & Learning rate & $3 \times 10^{-4}$ \\
 & Optimizer & Adam \\
 & Discount factor & 0.99 \\
 & Replay buffer size & $10^6$ \\
 & Nonlinearity & ReLU \\
 & Target update rate & $5 \times 10^{-3}$ \\
 & Source domain batch size & 128 \\
 & Target domain batch size & 128 \\
\midrule

\multirow{5}{*}{\textbf{DARA}}  
 & Temperature coefficient & 0.2 \\
 & Maximum log std & 2 \\
 & Minimum log std & $-20$ \\
 & Classifier network & (256, 256) \\
 & Reward penalty coefficient $\lambda$ & 0.1 \\
\midrule

\multirow{8}{*}{\textbf{BOSA}} 
 & Temperature coefficient & 0.2 \\
 & Maximum log std & 2 \\
 & Minimum log std & $-20$ \\
 & Policy regularization coefficient $\lambda_{\text{policy}}$ & 0.1 \\
 & Transition coefficient $\lambda_{\text{transition}}$ & 0.1 \\
 & Threshold parameter $\epsilon, \epsilon'$ & $\log(0.01)$ \\
 & Value weight $\omega$ & 0.1 \\
 & CVAE ensemble size & 1 (behavior), 5 (dynamics) \\
\midrule

\multirow{5}{*}{\textbf{IGDF}} 
 & Representation dimension & \{16, 64\} \\
 & Contrastive encoder network & (256, 256) \\
 & Encoder pretraining steps & 7000 \\
 & Importance coefficient & 1.0 \\
 & Data selection ratio $\xi$ & 75\% \\
\midrule

\multirow{7}{*}{\textbf{OTDF}} 
 & CVAE training steps & 10000 \\
 & CVAE learning rate & 0.001 \\
 & Number of sampled latents $M$ & 10 \\
 & Gaussian std & $\sqrt{0.1}$ \\
 & Cost function & cosine \\
 & Policy coefficient $\beta$ & 0.5 \\
 & Data selection ratio $\xi$ & 80\% \\
\midrule

\multirow{5}{*}{\textbf{IQL}} 
 & Temperature coefficient & 0.2 \\
 & Maximum log std & 2 \\
 & Minimum log std & $-20$ \\
 & Inverse temperature $\beta$ & 3.0 \\
 & Expectile $\tau$ & 0.7 \\
\midrule

\multirow{5}{*}{\textbf{\algname}} 
 & Number of clusters $K$ & \{30, 50\} \\
 & Diameter threshold & \{1.5, 5.0\} \\
 & Filtering ratio $(\xi_1,\xi_2,\xi_3)$ & (90\%, 80\%, 70\%) \\
 & Input noise std $\sigma$ & 1.0 \\
 & Importance weight $\alpha$ & 1.0 \\
\bottomrule
\end{tabular}}
\end{table*}

\subsection{Technical Details about \algname}
\label{app:\algname}

The \algname algorithm (Localized Dynamics-Aware Domain Adaptation), summarized in \Cref{app:\algname_alg}, leverages clustering and divergence-based filtering to improve off-dynamics policy learning. First, target domain data are clustered by next-state variables to form clusters, and source samples are assigned to clusters based on proximity to centroids within a diameter threshold. For each cluster, a classifier estimates KL divergence between source and target distributions, and clusters are ranked accordingly. Source samples are then filtered adaptively based on the divergence ranking of their assigned clusters, which are divided into three classes according to their KL divergence: low, medium, and high. Clusters in the low-divergence class, which are most similar to the target dataset, admit the largest proportion $\xi_1$ of their source samples; clusters in the medium-divergence class admit a moderate proportion $\xi_2$; and clusters in the high-divergence class—indicating greater dissimilarity—admit only a small fraction $\xi_3$. 

After filtering, each retained source sample is assigned an importance weight derived from its normalized KL estimate, so that samples from more similar clusters have higher influence during training, while samples from less similar clusters are down-weighted. During training, mini-batches from both domains are used to update the value function, compute target values, and optimize the state-action value function with divergence-weighted losses. Finally, the target network is updated, and the policy is optimized on filtered source samples using CVAE, enabling effective adaptation across domains. For the policy regularization weight $\lambda$, we find that $\lambda =0.1$ can be used to achieve good performance for all friction experiments, $\lambda =0.5$  for gravity experiments, $\lambda = 1.0$ for morphology experiments, and $\lambda \in \{0.5,1.0\}$ for all local perturbation experiments.

\begin{algorithm}[!htbp]
\caption{\algname: Localized Dynamics-Aware Domain Adaptation
\label{app:\algname_alg}}
\begin{algorithmic}[1]
\STATE \textbf{Input:} Source domain data $D_{src}$, target domain data $D_{tar}$
\STATE \textbf{Initialize:} Policy $\pi_{\phi}$, value networks $V_{\psi},Q_{\theta}$, target Q function $Q_{\theta'}$, batch size $B$, number of clusters $K$, diameter threshold $\delta$, critic importance ratio $\alpha$, regularization weight $\lambda$, filtering ratio $\xi_1,\xi_2,\xi_3$, target update rate $\eta$, CVAE with encoder $ E_{\nu}(s,a)$ and decoder $D_{\varsigma}(s, z)$, number of sampled latent variables $M$
\STATE Train CVAE policy to model the behavior policy in the target domain dataset using \\
\(    \mathcal{L}_{\text{CVAE}}
= \mathbb{E}_{(s,a)\sim \mathcal{D}_{\text{tar}},\, z\sim E_{\nu}(s,a)}
\big[
\big\| a - D_{\varsigma}(s, z) \big\|_2^2
+ D_{\mathrm{KL}}\big( E_{\nu}(s,a)\| \mathcal{N}(0, I) \big)
\big]. 
\)
\STATE Cluster $D_{tar}$ into $\{\mathcal{N}^{i}\}_{i=1}^K$ based on $s'$, with centroid $c^i$ and diameter $d^i$
\FOR{\texttt{each source sample $(s,a,s') \in D_{src}$}}
    \STATE Find nearest centroid $c^i$ among $\{\mathcal{N}^i\}_{i=1}^K$
    \IF{$\text{dist}(s', c^i) \leq \delta \cdot \frac{1}{K}\sum_{i=1}^Kd^i$}
        \STATE Assign $(s,a,s')$ to neighborhood $\mathcal{N}^i$
    \ENDIF
\ENDFOR
\FOR{i=1, 2..., K}
    \STATE Train classifiers $D_n(z)$ based on \Cref{eq:BCE_loss}
    \STATE Calculate cluster estimate of KL divergence $\hat{D}_{KL}^n$ based on \Cref{eq:KL_estimate}
\ENDFOR
\STATE Sort $\{\mathcal{N}^{i}\}_{i=1}^K$ based on $\hat{D}_{KL}^n$ from smallest to largest and get $\{\mathcal{N}^{(j)}\}_{j=1}^K$
\FOR{each neighborhood $\mathcal{N}^{(j)}$, $j=1,\dots,K$}
    \STATE Rank point estimates $\{d_i^j\}_{i=1}^{|\mathcal{N}_0^{(j)}|}$ of sampled source data
    \STATE Admit the top $\xi_{g(j)}\%$ samples, where
    $
    g(j)=
    \begin{cases}
    1, & j \le \lceil K/3 \rceil,\\
    2, & \lceil K/3 \rceil < j \le \lceil 2K/3 \rceil,\\
    3, & j > \lceil 2K/3 \rceil.
    \end{cases}
    $
\ENDFOR
\STATE Normalize point estimate of KL divergence via $\hat{d_i}=\frac{d_i-\max_id_i}{\max_i d_i-\min_i d_i}, i \in \{1,...,|\tilde{D}_{src}|\}$
\FOR{\texttt{each gradient step}}
    \STATE Sample a mini-batch $b_{\text{src}}$ of size $B/2$ from $\tilde{D}_{src}$ and $b_{\text{tar}}$ of size $B/2$ from $D_{tar}$
    \STATE Update the state value function $V_{\psi}$ via: $\mathcal{L}_{V} = \EE_{(s,a)\sim \tilde{D}_{src}\cup D_{tar}}
    \Big[ L_{2}^{\tau}\big(Q_{\theta'}(s,a) - V_{\psi}(s)\big)\Big]$
    \STATE Compute the target value via: $y = r + \gamma V_{\psi}(s')$
    \STATE Optimize the state-action value function $Q_{\theta}$ on $b_{\text{src}} \cup b_{\text{tar}}$ via:
\[
    \cL_Q
= \EE_{D_{tar}}[(Q_\theta - y)^2]
+ \mathbb{E}_{(s,a,s',d) \sim \tilde{D}_{src}}[\exp(-\alpha \cdot \hat{d_i})(Q_\theta -y)^2].
\]
    \STATE Update the target network via: $\theta' \gets \eta \theta + (1 - \eta)\theta'$
    \STATE Decode $M$ actions from the CVAE and construct Gaussian distributions
$\{\hat{\pi}^{\text{tar}}_{i}(\cdot \mid s)\}_{i=1}^{M}$.
    \STATE Compute the advantage A and optimize the policy $\pi_{\phi}$ on $b_{\text{src}} \cup b_{\text{tar}}$ using
    \begin{align} 
    \mathcal{L}_{\pi}
=
\mathbb{E}_{(s,a)\sim \tilde{D}_{\text{src}} \cup \mathcal{D}_{\text{tar}}}
\left[
\exp\left(A\right)
\log \pi_{\phi}(a \mid s)
\right]
-
\lambda 
\mathbb{E}_{s\sim \tilde{D}_{\text{src}} \cup \mathcal{D}_{\text{tar}}}
\left[
\log \left(
\sum_{i=1}^{M}
\hat{\pi}^{i}_{\text{tar}}\left(\pi_{\phi}(\cdot \mid s) \mid s\right)
\right)
\right]. \notag
    \end{align}
\ENDFOR
\end{algorithmic}
\end{algorithm}

In the AntMaze setting, we sweep the number of clusters over $\{10, 20, 30, 50\}$ and observe that smaller cluster counts consistently yield better performance.
We hypothesize that, unlike locomotion tasks in MuJoCo environments, the aligned transitions in both the source and target AntMaze environments are concentrated within a small number of clusters (i.e., map position with overlap) induced by the map layout. As a result, using a smaller number of clusters is sufficient and yields better performance. For the policy regularization weight $\lambda$, we sweep $\lambda = \{0.1,0.5,1.0\}$ to achieve the best performance. In the Adroit settings, we follow similar hyperparameters in MuJoCo tasks. We sweep the number of clusters in $\{30,50\}$ and the policy regularization weight in $\lambda = \{0.1,1.0\}$.

\subsection{Compute Infrastructure}
We run the experiment on a single GPU: NVIDIA RTX A5000-24564MiB with 8-CPUs: AMD Ryzen Threadripper 3960X 24-Core. Each experiment requires 12GB RAM and require 20GB available
disk space for storage of the data.

\section{More Experimental Results}

In this section, we provide more experimental results that are omitted from the main text. %
We present the comprehensive normalized score comparison of \algname against other
baselines under tasks with morphology shifts. 

\subsection{Results under morphology shifts} \label{app:morph_results}

As shown in \Cref{morph_shift_comparison}, our method improves the total score to 597.88, yielding a 32.7\% improvement over the second best approach OTDF, and a 63\% improvement over backbone algorithm IQL. Across the 16 evaluation tasks (8 environments × 2 shift levels), \algname achieves best performance in 13 out of 16 tasks. These results indicate that our proposed approach consistently enhances robustness under morphology shifts. Relative to the baseline IQL, our method delivers consistent and significant performance gains across 13 out of 16 tasks.

\begin{table*}[t]
\centering
\caption{
Performance comparison on HalfCheetah, Ant, Walker2d, and Hopper environments under
\textbf{morphology shifts} across 5 seeds. We use \textbf{M} and \textbf{H} to denote the medium and hard levels of the dynamics shift.
}
\label{morph_shift_comparison}
\resizebox{\textwidth}{!}
{
\begin{tabular}{llccccccc}
\toprule
Env & Type & Level & BOSA & IQL & DARA & IGDF & OTDF & Ours \\
\midrule

\multirow{4}{*}{HalfCheetah}
& morph-thigh & M & \cellcolor{second}25.12$\pm$4.33 & 22.22$\pm$1.13 & 20.68$\pm$2.49 & 19.31$\pm$4.35 & 21.94$\pm$6.30 & \cellcolor{best}\textbf{26.42$\pm$3.44} \\
&              & H & 20.74$\pm$3.22 & 15.90$\pm$1.47 & 16.35$\pm$0.97 & 15.57$\pm$2.06 & \cellcolor{second}22.75$\pm$2.76 & \cellcolor{best}\textbf{25.81$\pm$5.29} \\
& morph-torso & M & 0.66$\pm$0.90 & 1.95$\pm$1.69 & 1.40$\pm$1.66 & 1.84$\pm$0.37 & \cellcolor{second}3.38$\pm$1.30 & \cellcolor{best}\textbf{9.96$\pm$4.07} \\
&              & H & 16.44$\pm$12.82 & \cellcolor{best}\textbf{30.87$\pm$10.00} & \cellcolor{second}29.89$\pm$5.88 & 28.85$\pm$3.89 & 26.06$\pm$6.00 & 23.44$\pm$4.82 \\

\midrule
\multirow{4}{*}{Ant}
& morph-halflegs & M & 64.52$\pm$7.46 & 67.15$\pm$4.33 & 65.82$\pm$5.31 & 69.07$\pm$5.76 & \cellcolor{second}73.86$\pm$0.79 & \cellcolor{best}\textbf{77.48$\pm$0.77} \\
&                & H & 37.21$\pm$6.15 & 54.27$\pm$5.60 & \cellcolor{second}61.31$\pm$5.71 & 60.08$\pm$2.09 & 59.76$\pm$9.27 & \cellcolor{best}\textbf{64.47$\pm$4.62} \\
& morph-alllegs  & M & 31.09$\pm$7.09 & 30.05$\pm$7.41 & 26.49$\pm$4.52 & \cellcolor{second}37.87$\pm$17.24 & 32.98$\pm$4.91 & \cellcolor{best}\textbf{77.55$\pm$3.44} \\
&                & H & \cellcolor{second}13.89$\pm$1.46 & 5.65$\pm$3.30 & 3.21$\pm$3.26 & 7.24$\pm$1.40 & 12.84$\pm$2.06 & \cellcolor{best}\textbf{24.25$\pm$0.02} \\

\midrule
\multirow{4}{*}{Walker2d}
& morph-torso & M & 8.63$\pm$3.49 & 11.44$\pm$0.91 & \cellcolor{second}14.80$\pm$1.69 & 14.16$\pm$2.19 & 11.47$\pm$4.93 & \cellcolor{best}\textbf{29.67$\pm$11.80} \\
&             & H & 1.93$\pm$0.53 & 4.78$\pm$1.13 & 6.86$\pm$2.38 & 5.38$\pm$1.34 & \cellcolor{second}7.05$\pm$0.57 & \cellcolor{best}\textbf{7.76$\pm$2.42} \\
& morph-leg   & M & 28.10$\pm$8.64 & 40.61$\pm$4.78 & 43.87$\pm$6.22 & 41.71$\pm$10.14 & \cellcolor{second}45.07$\pm$10.09 & \cellcolor{best}\textbf{52.86$\pm$8.82} \\
&             & H & 10.17$\pm$1.60 & \cellcolor{second}20.54$\pm$5.17 & 20.33$\pm$3.49 & 18.78$\pm$3.84 & 15.13$\pm$3.25 & \cellcolor{best}\textbf{59.01$\pm$1.76} \\

\midrule
\multirow{4}{*}{Hopper}
& morph-foot  & M & 12.87$\pm$0.09 & \cellcolor{best}\textbf{25.62$\pm$18.89} & 20.05$\pm$7.32 & 14.91$\pm$4.52 & \cellcolor{second}22.56$\pm$8.65 & 13.11$\pm$0.05 \\
&             & H & 9.84$\pm$0.17 & 10.72$\pm$2.84 & 25.74$\pm$10.70 & 18.35$\pm$8.08 & \cellcolor{second}66.78$\pm$9.79 & \cellcolor{best}\textbf{67.53$\pm$14.03} \\
& morph-torso & M & 13.43$\pm$0.37 & 13.66$\pm$0.98 & 14.70$\pm$4.48 & 13.06$\pm$4.33 & \cellcolor{second}16.84$\pm$1.44 & \cellcolor{best}\textbf{27.81$\pm$9.08} \\
&             & H & 11.28$\pm$0.24 & \cellcolor{second}11.30$\pm$0.44 & 10.82$\pm$0.19 & 10.66$\pm$1.19 & \cellcolor{best}\textbf{12.23$\pm$0.15} & 10.75$\pm$0.89 \\

\midrule
\multicolumn{3}{l}{Total} 
& 305.92 & 366.73 & 382.32  & 376.84 & \cellcolor{second}450.70 & \cellcolor{best}\textbf{597.88} \\
\bottomrule
\end{tabular}
}
\end{table*}

\subsection{Results on Antmaze with play source dataset}
\label{app:more_antmaze_results_play}

In this section, we present additional experiment results on the AntMaze benchmark using the play source dataset.
We follow the same evaluation procedure as in the diverse dataset setting and report normalized target environment performance across 6  AntMaze medium environments.The results are summarized in \Cref{tab:ant_play_results}.
As shown in \Cref{tab:ant_play_results}, our proposed \algname achieves the best overall performance under the play dataset, with a total score of 398.8, outperforming all baseline methods.
Compared to the second-best method, \algname improves the total score by 5.3\%, demonstrating its robustness under limited state coverage in the source dataset.
Our method achieves the best performance on 3 out of 6 target maps and ranks second on 2 additional maps.
Specifically, \algname performs best on maps $\{2,3,4\}$ and achieves second-best performance on maps $\{5,6\}$.

\begin{table*}[htbp]
\centering
\setlength{\tabcolsep}{4pt}
\renewcommand{\arraystretch}{0.9}
\caption{Performance comparison on Antmaze tasks under the play offline dataset with dynamics shifts in the map. Source domains remain unchanged; target domains are shifted. We report normalized target-domain scores (mean ± std over five seeds). Best and second-best scores are highlighted in \textcolor{best}{green} and \textcolor{second}{blue}, respectively. }
\label{tab:ant_play_results}
\resizebox{\textwidth}{!}{
\begin{tabular}{ccccccccc}
\toprule
Env & Shift Level & BOSA & DARA & IQL & IGDF  & OTDF & Ours\\
\midrule
\multirow{6}{*}{Antmaze Medium}
 & 1 &  $33.6\pm8.0$  & $52.4\pm7.8$ & \cellcolor{best}\textbf{76.4$\pm$10.1} & \cellcolor{second}$58.0\pm12.4$  & $56.8\pm6.87$ & $52.8\pm9.4$ \\
 & 2 &  $25.2\pm4.6$  & $29.6\pm6.2$ & $46.8\pm4.1$ & $42.4\pm8.2$  & \cellcolor{second} $63.2\pm7.95$ & \cellcolor{best}\textbf{67.6$\pm$6.7}  \\
 & 3 &  $30.0\pm8.1$  & $56.4\pm11.1$ & $66.4\pm6.7$ & $64.8\pm7.0$  & \cellcolor{second}$69.6\pm5.90$ & \cellcolor{best}\textbf{71.6$\pm$3.8}  \\
 & 4 &  $34.0\pm8.9$  & $52.8\pm6.3$ & $48.8\pm6.9$ & \cellcolor{second}$58.0\pm1.4$  & $44.8\pm3.90$ & \cellcolor{best}\textbf{71.6$\pm$6.8} \\
 & 5 &  $22.8\pm13.9$  & $34.4\pm9.3$ & \cellcolor{best}\textbf{73.6$\pm$8.2} & $54.4\pm6.2$  & $55.2\pm3.03$ & \cellcolor{second}$60.8\pm7.2$  \\
 & 6 &  $12.8\pm4.8$  & $62.8\pm3.0$ & $66.8\pm8.1$ & \cellcolor{best}\textbf{74.8$\pm$4.1}  & $70.8\pm5.93$ & \cellcolor{second}$74.4\pm3.0$  \\ \midrule
  Total& & 158.4 & 288.4 & \cellcolor{second} 378.8 & 352.4 & 360.4 & \cellcolor{best} \textbf{398.8}\\
\bottomrule
\end{tabular}}
\end{table*}

\subsection{Results on Adroit tasks}\label{sec:experiemnts_androit}
We report the performance of our method and baseline approaches on the Adroit manipulation benchmarks in \Cref{tab: morph_shift_comparison_adroit}. 
As shown in \Cref{tab: morph_shift_comparison_adroit}, our proposed \algname consistently outperforms all baseline methods across both the Pen and Door tasks.
Overall, \algname achieves the highest total score of 402.24, yielding a substantial improvement over the second-best method (OTDF) with a total score of 330.41.
This demonstrates the strong robustness of our approach under severe morphology and kinematic mismatches in real-world application.
These results indicate that \algname generalizes well beyond navigation tasks and achieves strong performance on challenging manipulation benchmarks.
By leveraging localized dynamics-aware filtering, our method is able to retain source transitions that align well with the target dynamics, while prior methods relying on global assumption struggle under morphology and kinematic shifts.

\begin{table*}[t]
\centering
\small
\caption{
Performance comparison on the Door and Pen environments under
\textbf{morphology shifts} and \textbf{kinematic shifts} across 5 seeds. We use \textbf{M} and \textbf{H} to denote the medium and hard levels of the dynamics shift.
}
\label{tab: morph_shift_comparison_adroit}
\resizebox{\textwidth}{!}{
\begin{tabular}{llccccccc}
\toprule
Env & Type & Level & BOSA & IQL & DARA & IGDF & OTDF & Ours \\
\midrule

\multirow{4}{*}{Pen}
& kin-broken-joint & M 
& 42.56$\pm$11.31 
& 45.46$\pm$8.54 
& 46.17$\pm$6.26 
& 50.74$\pm$11.20 
& \cellcolor{second}52.14$\pm$6.51 
& \cellcolor{best}\textbf{64.31$\pm$11.33} \\
& & H 
& 19.36$\pm$5.42 
& \cellcolor{second} 24.02$\pm$5.16 \cellcolor{second} 
& 22.41$\pm$4.73 
& 18.62$\pm$10.47 
& 20.95$\pm$7.23 
& \cellcolor{best}\textbf{28.16$\pm$18.04} \\
& morph-shrink-finger & M 
& 10.33$\pm$2.79 
& 15.64$\pm$2.97 
& 15.33$\pm$2.39 
& 13.19$\pm$4.41 
& \cellcolor{second}18.02$\pm$5.64 
& \cellcolor{best}\textbf{19.88$\pm$8.02} \\
& & H 
& 19.47$\pm$5.01 
& 19.40$\pm$5.05 
& 18.24$\pm$2.59 
& 19.05$\pm$3.87 
& \cellcolor{second}20.34$\pm$2.85 
& \cellcolor{best}\textbf{33.06$\pm$11.48} \\

\midrule
\multirow{4}{*}{Door}
& kin-broken-joint & M 
& 20.65$\pm$10.72 
& \cellcolor{second}40.17$\pm$7.10 
& 36.82$\pm$4.63 
& 37.76$\pm$7.02 
& 39.82$\pm$4.75 
& \cellcolor{best}\textbf{55.12$\pm$12.50} \\
& & H 
& 19.80$\pm$15.61 
& 52.37$\pm$5.52 
& 49.31$\pm$5.53 
& 52.61$\pm$4.83 
& \cellcolor{second}58.24$\pm$6.37 
& \cellcolor{best}\textbf{59.81$\pm$7.23} \\

& morph-shrink-finger & M 
& 41.05$\pm$15.36 
& 54.85$\pm$7.79 
& 54.49$\pm$5.88 
& 54.73$\pm$5.28 
& \cellcolor{second}59.15$\pm$4.89 
& \cellcolor{best}\textbf{71.48$\pm$8.80} \\

& & H 
& 38.92$\pm$18.85 
& \cellcolor{second}61.84$\pm$4.16 
& 58.91$\pm$3.05 
& 61.07$\pm$6.89 
& 61.74$\pm$3.24 
& \cellcolor{best}\textbf{70.42$\pm$4.26} \\
\midrule
\multicolumn{3}{l}{Total} 
& 212.15 
& 313.75 
& 301.67 
& 307.78 
& \cellcolor{second}330.41 
& \cellcolor{best}\textbf{402.24} \\
\bottomrule
\end{tabular}
}
\end{table*}

\subsection{Time comparison of data filtering methods}\label{sec:experiments_runtime_comparison} 
To understand what is the merit of \algname compared with other filtering methods, we study the runtime for the data filtering process between our method and IGDF \citep{wen2024contrastive} and OTDF \citep{lyu2025cross}. We only compare the runtime of the filtering process instead of the whole training process as all these methods can share the same policy optimization backbone, so the main difference in runtime lies before the training begins. Specifically, we choose ant-gravity-0.5 environment and modify the replay buffer so that it samples without replacement, meaning it will go over the entire dataset per epoch. We fix the batch size to be 32 and number of epochs to be 7000 for fairness. As shown in \Cref{tab:runtime}, given that the target dataset is fixed, for OTDF, since it computes optimal transport pointwise between source and target datasets, its runtime significantly scales with data size, which makes it data-inefficient especially in real-world scenarios. For IGDF, while its runtime is the least in all three cases, based on results in \Cref{tab:main_results_1} and \Cref{tab:main_results_8}, we argue that its performance is the poorest in all three filtering methods, indicating that the efficiency gains come at the cost of substantially degraded policy quality and its filtering mechanism is insufficient to capture reliable transitions under dynamics shifts.
This further demonstrates the superiority of our method compared with other filtering methods by balancing model performance and data efficiency.

\begin{table}[htbp]
\centering
\setlength{\tabcolsep}{8pt}
\renewcommand{\arraystretch}{1.2}
\caption{Runtime comparison in Ant-Gravity-0.5 with different numbers of source samples.}
\label{tab:runtime}
\begin{tabular}{c|ccc}
\toprule
\# Sources & IGDF & OTDF & \algname \\
\midrule
1000000  & $\sim$41.49s & $\sim$4597.82s & $\sim$134.62s \\
100000  & $\sim$38.71s & $\sim$628.16s & $\sim$80.18s \\
10000 & $\sim$55.64s & $\sim$126.39s & $\sim$72.92s \\
\bottomrule
\end{tabular}
\end{table}

\clearpage
\bibliographystyle{ims}
\bibliography{reference}
\end{document}